\documentclass[lettersize,journal]{IEEEtran}
\usepackage{amsmath,amsfonts}
\usepackage{algorithmic}
\usepackage{algorithm}
\usepackage{array}
\usepackage[caption=false,font=normalsize,labelfont=sf,textfont=sf]{subfig}
\usepackage{textcomp}
\usepackage{stfloats}
\usepackage{url}
\usepackage{verbatim}
\usepackage{graphicx}
\usepackage{enumitem}
\usepackage{bbm}
\usepackage{algorithmic}
\usepackage{algorithm}

\usepackage{amsmath, amsthm, amssymb}
\usepackage{xcolor}
\newtheorem{proposition}{Proposition}

\newtheorem{theorem}{Theorem}
\usepackage{makecell} 
\usepackage[capitalise]{cleveref}
\usepackage{amsmath}
\usepackage{arydshln}
\usepackage{multirow}

\usepackage{amsthm}

\newtheorem{lemma}{Lemma}

\begin{document}

\title{A Unified Calibration Framework for Coordinate and Kinematic Parameters in Dual-Arm Robots}

\author{Tianyu Huang$^1$, Bohan Yang$^1$, Bin Li$^1$, Wenpan Li$^2$, Haoang Li$^3$, Wenlong Li$^2$, and Yun-Hui Liu$^1$
\thanks{$^{1}$Tianyu Huang, Bohan Yang, Bin Li, and Yun-Hui Liu are with the Hong Kong Centre for Logistics Robotics and Department of Mechanical and Automation Engineering, The Chinese University of Hong Kong, Hong Kong, China~(email: tianyuhuang@cuhk.edu.hk; bohanyang@cuhk.edu.hk; binli001@cuhk.edu.hk; yhliu@cuhk.edu.hk).}%
\thanks{$^{2}$Wenpan Li and Wenlong Li are with the State Key Laboratory of Intelligent Manufacturing Equipment and Technology, Huazhong University of Science and Technology, Wuhan, China~(email: wenpanli@hust.edu.cn; wlli@mail.hust.edu.cn). }
\thanks{$^{3}$Haoang Li is with the with the Thrust of Robotics and Autonomous Systems, The Hong Kong University of Science and Technology
(Guangzhou), Guangzhou, China~(email: haoangli@hkust-gz.edu.cn).}
}

\markboth{Journal of \LaTeX\ Class Files,~Vol.~14, No.~8, August~2021}%
{Shell \MakeLowercase{\textit{et al.}}: A Sample Article Using IEEEtran.cls for IEEE Journals}


\maketitle

\begin{abstract}
Precise collaboration in vision-based dual-arm robot systems requires accurate system calibration.
Recent dual-robot calibration methods have achieved strong performance by simultaneously solving multiple coordinate transformations. 
However, these methods either treat kinematic errors as implicit noise or handle them through separated error modeling, resulting in non-negligible accumulated errors.
In this paper, we present a novel framework for unified calibration of the coordinate transformations and kinematic parameters in both robot arms.
Our key idea is to unify all the tightly coupled parameters within a single Lie-algebraic formulation.
To this end, we construct \textit{a consolidated error model grounded in the product-of-exponentials formula}, which naturally integrates the coordinate and kinematic parameters in twist forms. 
Our model introduces no artificial error separation and thus greatly mitigates the error propagation.
In addition, we derive a closed-form analytical Jacobian from this model using Lie derivatives. By exploring the Jacobian rank property, we analyze the \textit{identifiability} of all calibration parameters and show that our joint optimization is well-posed under easily-satisfied conditions.
This enables the application of off-the-shelf iterative solvers to stably optimize these parameters on the manifold space.
Besides, to ensure robust convergence of our joint optimization, we develop a \textit{certifiably correct} algorithm for initializing the unknown coordinates.
Relying on semidefinite relaxation, our algorithm can yield a reliable initial coordinate estimate whose near-global optimality can be verified \textit{a posteriori}.
Extensive experiments validate the superior accuracy of our approach over previous baselines under identical visual measurements.
Meanwhile, our certifiable initialization consistently outperforms several coordinate-only baselines, proving its reliability as a starting point for joint optimization.
\end{abstract}
\begin{IEEEkeywords}
Dual-robot Calibration, Lie Algebra, Product of Exponentials, Semidefinite Relaxation 
\end{IEEEkeywords}

\section{Introduction}

The field of robotics is rapidly evolving towards greater autonomy and intelligence in recent years. 
In this context, vision-based dual-arm robot systems have emerged as a key platform for executing complex tasks that are beyond the capability of a single robot arm~\cite{Rakita-SR2019, Liu-ArXiv2024, Luo-SR2025}. 
These tasks, ranging from agile assembly of large-scale structures~\cite{Hartmann-TRO2022, Chen-TASE2021, Ge-RCIM2025} to collaborative manipulation of various daily objects~\cite{Suarez-SR2018, Colome-TRO2018, Wan-TII2019}, require not only high individual robot accuracy but also precise relative positioning between the two robots.
To realize such capabilities, a crucial problem lies in the calibration of dual-arm system parameters. 
Among these parameters, one primary category consists of the static coordinate transformations, such as the relative pose between the robot bases and the hand-eye transformation. These coordinate parameters, together with the robot kinematics, form the foundation of high-precision dual-arm cooperation tasks.
However, due to the strong coupling among these parameters and the inevitable robot motion errors, achieving accurate dual-arm calibration is quite challenging and continues to attract significant research interest~\cite{Wu-TRO2016, Qiao-PE2017, Jiang-TRO2023, Wang-TRO2021, Wang-TMECH2024}.


\begin{figure}[!t]
    \centering
    \includegraphics[width=0.36\textwidth]{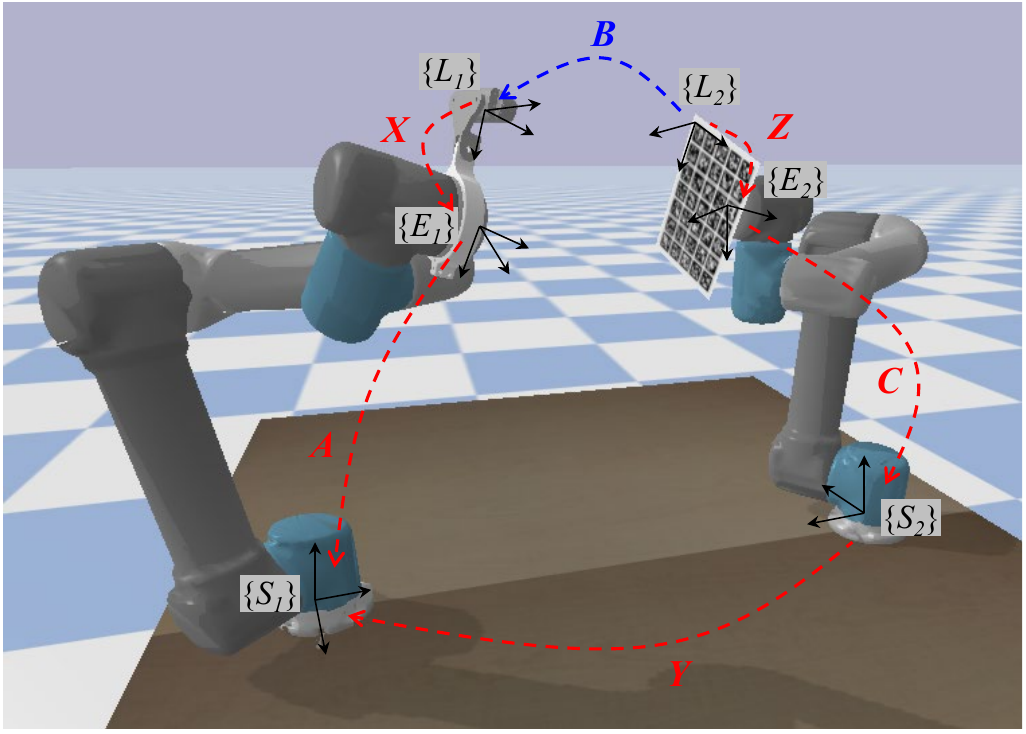}
    \\[-0.6em]
    \caption{Illustration of a canonical dual-arm robot system. The \textit{sensor-side} robot is equipped with a camera. The \textit{tool-side} robot holds a calibration tool~(e.g., a chessboard). The closed-loop pose chain in the dual-arm system can be defined by $\boldsymbol{A} \boldsymbol{X} \boldsymbol{B} = \boldsymbol{Y} \boldsymbol{C} \boldsymbol{Z}$.} 
    \label{fig:ill_dualrobot}
\end{figure}

Early attempts for dual‑arm calibration typically employ partitioned techniques and focus on parts of the coordinate transformations~\cite{Bonitz-RAM2002, Wang-TII2018, Zhu-IA2018, Ruan-ICRAE2017}. 
While these methods can be employed to calibrate the entire system in a step-by-step way, the accumulation of errors in each step often results in reduced calibration accuracy.
Recently, a series of simultaneous calibration methods are proposed to solve the above limitations~\cite{Wu-TRO2016, Jiang-TRO2023, Wang-TRO2021, Jiang-IA2021, Yan-Robotica2016, Zhu-TII2024}. 
The foundation of these works is the following pose chain in a canonical dual-robot system~(see Fig.~\ref{fig:ill_dualrobot}):
\begin{align} \label{eq:AXB=YCB}
    \boldsymbol{A} \boldsymbol{X} \boldsymbol{B} = \boldsymbol{Y} \boldsymbol{C} \boldsymbol{Z}, 
\end{align}
where $\boldsymbol{X}, \boldsymbol{Y}$, and $\boldsymbol{Z}$ are the unknown coordinate transformations (flange-to-sensor, base-to-base, and flange-to-tool); $\boldsymbol{A}$ and $\boldsymbol{C}$ define the end-effector poses computed from the robot kinematics; $\boldsymbol{B}$ denotes the tool pose with respect to the sensor and can be measured with high precision using a calibrated sensor-tool setup.
Relying on \eqref{eq:AXB=YCB} and a set of measurements $\{\mathbf{A}_i, \mathbf{C}_i, \mathbf{B}_i\}_{i=1}^m$ under multiple dual-arm posture configurations, these simultaneous methods aim to calibrate the three coordinate transformations at once, and achieves a remarkably higher accuracy compared to partitioned methods. This highlights the advantage of simultaneous calibration. 

However, most existing simultaneous calibration methods are \textit{coordinate-only}, that is, they do not explicitly model or optimize kinematic parameters, and instead treat kinematic-induced errors as the measurement noise in $\boldsymbol{A}$ and $\boldsymbol{C}$. This simplification can limit calibration accuracy in practical dual-arm systems, especially for industrial arms where the kinematic errors are non-negligible due to manufacturing tolerances and assembly errors.
Given this concern, a few recent studies attempt to incorporate kinematic calibration into dual-arm calibration. 
For example, the work in \cite{Wang-TMECH2024} shows that additionally compensating kinematic errors can improve the calibration accuracy; however, their method relies on external tracking devices and only considers the tool-side robot kinematics. 
A related method~\cite{Chen-TMECH2025} further incorporates the kinematics of both robots via an arm-wise decoupled error model for joint calibration. 
While effective, their kinematic deviations are aggregated at the residual level and corrected locally for each robot; this separated treatment can leave the coupling-induced errors insufficiently corrected, thereby causing error accumulation and suboptimal accuracy.

In this paper, we propose a novel dual-arm calibration framework that jointly calibrates the multiple coordinate transformations and kinematic parameters of both arms using a unified system-level error model. 
The key idea is to represent all the tightly coupled parameters within a single Lie-algebraic formulation, rather than treating the coupled error sources in a separated or arm-wise manner.
To this end, we reformulate the dual-robot calibration chain by leveraging the Lie group representation, resulting in \textit{a consolidated error model grounded in the product-of-exponentials~(PoE) formula}. 
This system-level PoE representation explicitly couples all calibration parameters within a single differentiable chain; as a result, it avoids artificial separation of the coupled error sources and reduces error accumulation along the loop. 
Based on the differentiable error model, we derive a closed-form analytical Jacobian via Lie derivatives. This Jacobian maps the increments of all parameters linearly to the geometric residual, transforming the intrinsically nonlinear joint calibration into a well-conditioned linear least-squares subproblem. 
We provide an explicit identifiability analysis by exploring the rank properties of the developed Jacobian, which is rarely explored in previous joint calibration works. The analysis shows that our joint optimization is well-posed under mild, easily satisfied excitation conditions. 
As a result, standard iterative solvers such as Gauss–Newton or Levenberg–Marquardt~\cite{Nocedal-Book2006} can be directly applied to optimize all calibration parameters on the SE(3) manifold space in a numerically stable manner.

Moreover, to ensure robust convergence of our joint optimization, we argue that a reliable initial estimate of the unknown coordinate transformations remains essential, particularly in the presence of substantial noise that can typically hinder iterative solvers. Although a natural choice is to apply existing coordinate-only calibration methods, we argue that a more robust and accurate coordinate initialization can be achieved.
To this end, we develop a certifiably correct solver tailored for the coordinate parameters. We rigorously reformulate the coordinate-only calibration problem as a Quadratically Constrained Quadratic Program (QCQP) and relax it into a Semidefinite Program (SDP). 
We then solve the convex SDP and obtain an initial coordinate estimate that can be \textit{a posteriori} verified to be near-globally optimal. 
Our SDP relaxation here avoids local optimization in many previous coordinate-only methods~\cite{Wu-TRO2016, Wang-TRO2021}, and yields a globally informed solution together with a global lower bound, making the initialization more reliable under kinematic-induced measurement noise.
As a result, the subsequent joint optimization is initialized within a favorable basin of attraction and can thus achieve higher estimation accuracy even in the presence of substantial kinematic errors.

Overall, the main contributions of this work can be summarized as follows:
\begin{itemize}
    \item We propose a unified dual-arm calibration framework that jointly optimizes the coordinate and kinematic parameters using a consolidated Lie-algebraic error model. Our error model avoids artificial separation of coupled error sources and thus largely mitigates error accumulation.

    \item We derive a closed-form analytical Jacobian from the proposed error model using Lie derivatives. By exploring the Jacobian rank properties, we provide an explicit identifiability analysis showing that our joint optimization is well-posed under mild excitation conditions.
    
    \item We develop a certifiably correct coordinate initialization method based on semidefinite relaxation, which can provide an initial coordinate estimate with an \textit{a posteriori} verification of near-global optimality.
\end{itemize}

Extensive simulations and real-world experiments are conducted to validate the superior accuracy of the proposed calibration framework compared to several baseline methods under identical measurements. Besides, our certifiable initialization constantly yields better initial coordinate estimates than several coordinate-only baselines, providing a more reliable starting point for our joint optimization.

The rest of this paper is organized as follows:
Section~\ref{sec:related} reviews the related vision-based robot calibration works. 
Section~\ref{sec:notion_preli} introduces the notations of this paper and provides some necessary preliminaries of Lie algebra and robot kinematics. 
Section~\ref{sec:overview} gives a short overview of our entire calibration algorithm. 
Section~\ref{sec:unified} introduces our unified optimization of the coordinate and kinematic parameters in dual-arm robot systems.
Section~\ref{sec:SDP_coord} introduces our certifiably correct solver for initializing the coordinate parameters.
Next, Section~\ref{sec:exper} presents the evaluation results of both simulated and real-world experiments.
Finally, we give a conclusion in Section~\ref{sec:conclu}.

\section{Related Works}
\label{sec:related}

Robot calibration is a long-standing prerequisite for precise robotic manipulation tasks. It aims to estimate the unknown transformation matrices between robots and external coordinate systems, as well as the geometric parameters within the robot kinematic structure, so as to improve the absolute positioning accuracy~\cite{Corke-BOOK2011, Siciliano-BOOK2009}.

In vision-based settings, a representative line of research is the \textbf{hand–eye and robot–world calibration}. 
Depending on the estimation objective, the underlying pose-chain formulations of these works can be categorized into two canonical forms: the hand-eye calibration equation $\boldsymbol{A}\boldsymbol{X} = \boldsymbol{X}\boldsymbol{B}$ and the robot-world/hand-eye calibration equation $\boldsymbol{A}\boldsymbol{X} = \boldsymbol{Y}\boldsymbol{B}$. 
Early works on $\boldsymbol{A}\boldsymbol{X} = \boldsymbol{X}\boldsymbol{B}$ typically focus on deriving linearized formulations with efficient closed-form solutions~\cite{Shiu-TRA1989, Tsai-TRA1989, Park-TRA1994, Daniilidis-IJRR1999}. Later, to improve the accuracy against measurement noise, many researchers employ the iterative local refinement via nonlinear least-squares optimization~\cite{Heller-ICRA2014, Horaud-IJRR1995, Koide-RAL2019}, and more recently, begin to explore the global optimization methods~\cite{Zhao-ICRA2011, Heller-TPAMI2015}.
As for $\boldsymbol{A}\boldsymbol{X} = \boldsymbol{Y}\boldsymbol{B}$, the robot–world/hand–eye calibration introduces an additional unknown transformation from the robot base to the world coordinate system. Beyond the classical linear solutions~\cite{Zhuang-TRA1994, Li-IJPS2010, Shah-JMR2013}, increasing efforts also begin to be devoted to iterative nonlinear optimization methods~\cite{Dornaika-2002TRA, Strobl-IROS2006, Zhao-PRL2019, Wise-ArXiv2025} that can jointly refine the two unknown transformations, in order to better accommodate measurement noise and reduce the coupling error.

Despite the above established progress in single-robot systems, achieving high-precision calibration in dual-robot systems cannot be obtained by a straightforward extension.
As a dual-robot setup typically involves multiple coupled coordinate transformations~(see Fig.~\ref{fig:ill_dualrobot}); besides, the kinematic errors of both robots can largely magnify the systematic errors in the pose chain. 
Early studies, therefore, often adopt \textbf{partitioned strategies}, leveraging single-robot calibration tools while focusing on estimating only a subset of the coordinate transformations~\cite{Bonitz-RAM2002, Wang-TII2018, Zhu-IA2018, Ruan-ICRAE2017}. 
In particular, \cite{Bonitz-RAM2002} and \cite{Ha-IA2023} incorporate additional physical/fixture constraints to estimate the relative pose between the two robot bases, whereas \cite{Ruan-ICRAE2017} and \cite{Zhao-IJAMT2018} obtain this transformation with the aid of an external measuring device~(e.g., a 3D scanner or laser tracker) to provide accurate spatial references.
Moreover, \cite{Zhu-IA2018} and its extension~\cite{Zhu-BPAS2019} utilize the virtual constraint of the camera optical axis to estimate the tool--flange transformation and further refine the kinematic parameters of a dual-manipulator system. Besides, \cite{Fu-RAL2020} involves a dual-quaternion-based pipeline that sequentially solves the hand-eye transformations and the other two transformations~(base-to-base and flange-to-tool).
While such partitioned methods can be employed to calibrate the whole dual-robot system in a step-by-step way, they typically require multiple calibration phases, and thus, the accuracy is always limited by the accumulated calibration errors.

To reduce calibration effort and mitigate error accumulation, many works study \textbf{simultaneous estimation of multiple coordinate transforms} directly from the coupled dual-robot pose chain in \eqref{eq:AXB=YCB}, without decomposing it into separate stages~\cite{Wu-TRO2016, Yan-Robotica2016, Ma-AR2018, Wang-TRO2021, Qin-Sensors2022}. In particular, \cite{Wang-IROS2014} and its extension~\cite{Wu-TRO2016} involve a closed-form initialization for the coupled rotational components based on quaternion algebra, and then perform an iterative refinement via first-order linearization on the Lie group. Despite their pioneering status, the translational components are recovered only in closed form (i.e., without being jointly refined with rotations), which can lead to sub-optimal results due to rotation–translation coupling.
Later, the work in \cite{Wang-TRO2021} introduces a closed-form initialization based on the Kronecker product and, more importantly, an iterative scheme that jointly optimizes both rotational and translational parts of all unknown coordinate transforms within a unified objective. The experimental results demonstrate the superior accuracy of their joint optimization of rotational and translational parts against the rotation-only optimization in~\cite{Wu-TRO2016}.
Recently, the authors in \cite{Jiang-TRO2023} propose to cast the simultaneous calibration into a quadratic-form optimization model: by leveraging the properties of Lie algebra representation on SE(3), they employ convex optimization together with Newton-like iterations to simultaneously solve for the coupled rotations and translations.
Despite the enhanced accuracy of these simultaneous methods, the majority of them remain \emph{coordinate-only}: they implicitly assume accurate robot kinematics and treat kinematic-induced biases in the robot-reported poses as measurement noises, which inevitably limit their achievable accuracy when the kinematic errors of the robots are prominent and non-negligible.

In fact, \textbf{robot kinematic calibration} is also crucial for improving system accuracy and has been investigated extensively, leading to many mature techniques. Existing kinematic calibration approaches can be roughly categorized by the underlying modeling formulation, e.g., the standard Denavit-Hartenberg (D-H) model~\cite{Denavit-ASME1955} and its variants~\cite{Everett-ICRA1987, Hayati-ICRA1988}, the complete and parametrically continuous~(CPC) models~\cite{Zhuang-ICRA1990}, and the product-of-exponentials (PoE) representation~\cite{Park-Book1994, He-TRO2010}. The representative PoE-based modeling is particularly attractive for its effectiveness in systematic perturbation-based error modeling~\cite{Wu-TASE2014, Qiao-PE2017, Chen-TRO2026}.
However, integrating these kinematic calibration ideas into dual-arm robot calibration remains nontrivial: the tight coupling between the multiple coordinate transforms and the kinematic deviations of both arms substantially complicates the modeling and optimization.
Currently, only a few recent works begin to explore \textbf{joint coordinate-kinematic calibration} for dual-arm robot systems.
In~\cite{Wang-TMECH2024}, the authors use an external tracking device to construct a kinematic error transfer model for joint refinement, optimizing multiple coordinate transforms together with the kinematic parameters of the tool-side arm. Their results demonstrate that explicit kinematic compensation can noticeably improve calibration accuracy, but the approach relies on external metrology and only refines the tool-side kinematics.
In contrast, \cite{Chen-TMECH2025} removes external tracking and incorporates the kinematics of both robots via an arm-wise decoupled error model; while effective, their formulation requires a hierarchical, multi-stage optimization to handle the high-dimensional coupling, and the residual coupling effects can accumulate along the dual-arm pose chain, especially under large kinematic errors.

Given the above research background, we propose in this paper a unified calibration framework that jointly estimates and refines the multiple coordinate transformations together with the kinematic parameters of both robot arms. Our target is to reduce the kinematic-induced bias and mitigate the error accumulation through the dual-arm chain, so as to achieve system calibration with higher precision.

\section{Notations and Preliminaries}
\label{sec:notion_preli}

\begin{figure*}[!t]
    \centering
    \includegraphics[width=0.9\textwidth]{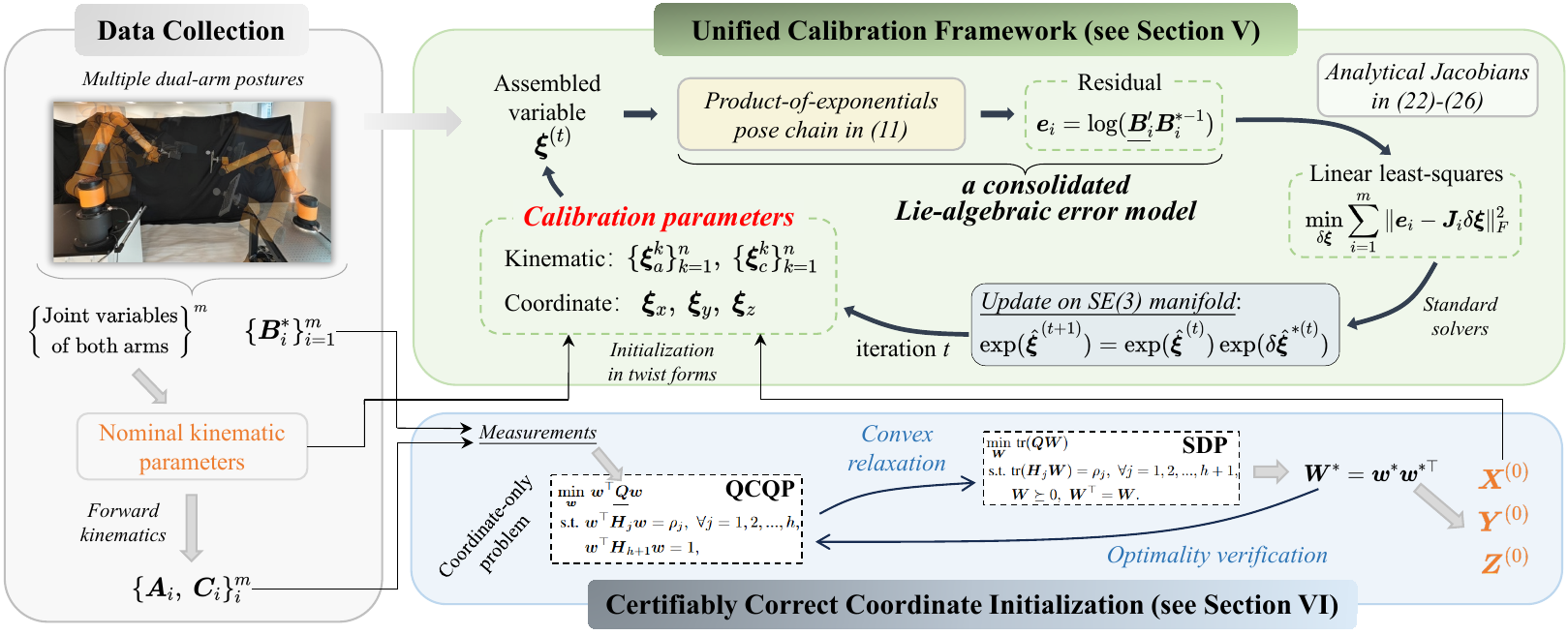}
    \vspace{-0.5em}
    \caption{Pipeline of the proposed unified calibration algorithm. Given a set of measurements under multiple dual-arm postures, we construct a consolidated Lie-algebraic error model and iteratively refine the coordinate and kinematic parameters on the $\mathrm{SE}(3)$ manifold. Before our iterative optimization, the coordinate transforms are initialized by a certifiably correct SDP relaxation solver, while the nominal kinematic parameters directly serve as the initial guess.}
    \label{fig:pipeline}
\end{figure*}

\textbf{Scalars, Vectors, Matrices}.
We employ lowercase characters~(e.g.,~$v$) to represent real scalars, lowercase boldface characters~(e.g.,~$\boldsymbol{v}$) for real column vectors, and uppercase boldface characters~(e.g.,~$\boldsymbol{V}$) for real matrices. 
$\mathbf{0}_d$ and $\mathbf{0}_{d \times d}$ represent the column vectors and matrix of all zeros with size $d$ and $d\times d$, respectively.
$\mathbf{I}_d$ denotes the identity matrix of size $d \times d$.
The $L_2$-norm of a vector is denoted as $\|\cdot\|_2$. The Frobenius norm of a matrix is denoted as $\|\cdot\|_{F}$.
For a vector $\boldsymbol{v}$, we use $\boldsymbol{v}_{(i)}$ to represent its $(i)$ entry.
For a matrix $\boldsymbol{V}$, we employ $\boldsymbol{V}_{(i, j)}$ to represent its $(i, j)$ entry.
$\mathrm{diag}(\boldsymbol{V}_1,\boldsymbol{V}_2)$ represents a block-diagonal matrix with $\boldsymbol{V}_1$ and $\boldsymbol{V}_2$ placed on the diagonal and zeros elsewhere.
Besides, we define $\textnormal{vec}(\cdot)$ as a vectorization operator that reorders a matrix by stacking its columns. 
The symbols $\textnormal{rank}(\cdot)$ and $\textnormal{tr}(\cdot)$ denote the rank and trace of the provided matrix, respectively.
In addition, the standard Kronecker product of two matrices $\boldsymbol{V}_1$ and $\boldsymbol{V}_2$ is defined by $\boldsymbol{V}_1\otimes\boldsymbol{V}_2$. 

\textbf{Sets and Frames}.
We use $\mathcal{S}^n$ to denote the sets of real symmetric matrices with size $n\times n$. Furthermore, $\mathcal{S}_{+}^{n} \doteq \{\boldsymbol{M} \in \mathcal{S}^n : \boldsymbol{M} \succeq 0 \}$ denotes the set of symmetric \textit{positive semidefinite}~(PSD) matrices with size $n\times n$.
When the dimension is clear from the context, we may simply write $\mathcal{S}$ and $\mathcal{S}_{+}$.
Besides, $\textnormal{SO}(3)$ denotes the $3$-D special orthogonal group and $\textnormal{SE}(3)$ denotes the special $3$-D special Euclidean group.
Each right-hand reference frame in this paper is denoted as $\{F\}$. The rigid body motion from frame $\{S\}$ to frame $\{E\}$ can be written as a rigid transformation matrix $^E\boldsymbol{T}_S = \begin{bmatrix}
    ^E\boldsymbol{R}_S & ^E\boldsymbol{t}_S \\
    \boldsymbol{0}_3^{\top} & 1
\end{bmatrix} \in \textnormal{SE}(3) \subset \mathbb{R}^{4\times4}$, where $^E\boldsymbol{R}_S \in \textnormal{SO}(3)$ denotes the rotation and $^E\boldsymbol{t}_S \in \mathbb{R}^3$ denotes the translation.
In the dual-robot calibration problem~(as illustrated in Fig.~\ref{fig:ill_dualrobot}), there are
\begin{equation}
    \begin{aligned}
        \boldsymbol{A} &\doteq {^{S_1}\boldsymbol{T}_{E_1}}, & 
        \boldsymbol{X} &\doteq {^{E_1}\boldsymbol{T}_{L_1}}, & 
        \boldsymbol{B} &\doteq {^{L_1}\boldsymbol{T}_{L_2}}, \\
        \boldsymbol{Y} &\doteq {^{S_1}\boldsymbol{T}_{S_2}}, & 
        \boldsymbol{C} &\doteq {^{S_2}\boldsymbol{T}_{E_2}}, & 
        \boldsymbol{Z} &\doteq {^{E_2}\boldsymbol{T}_{L_2}},
    \end{aligned}
\end{equation}
where $\{S_1\}$, $\{E_1\}$, and $\{L_1\}$ denote the basic coordinate system, the end-flange coordinate system, and the sensor coordinate system in the sensor-side robot, respectively; $\{S_2\}$, $\{E_2\}$, and $\{L_2\}$ denote the basic coordinate system, the end-flange coordinate system, and the tool coordinate system in the tool-side robot, respectively.

\textbf{Lie Algebra}.
SE(3) and SO(3) are two representative examples of \textit{Lie group}. Associated with any matrix Lie group is its \textit{Lie Algebra} — here, $se$(3) and $so$(3) — which offers a powerful linear representation for describing small changes in pose.
The element of $se$(3) is called a twist, denoted as a matrix by 
\begin{align}
    \hat{\boldsymbol{\xi}} = \begin{bmatrix}
    \boldsymbol{w}_\times & \boldsymbol{\rho} \\
    \boldsymbol{0}_3^{\top} & 0
    \end{bmatrix} \in se(3),
\end{align}
where $\boldsymbol{w} \in \mathbb{R}^3$ encodes the rotation, $\boldsymbol{\rho} \in \mathbb{R}^3$ encodes the translation, and $\boldsymbol{w}_\times$ defines the skew-symmetric matrix of $\boldsymbol{w}$. 
For simplicity, we can use a 6-dimensional vector $\boldsymbol{\xi} = [\boldsymbol{w}^\top, \boldsymbol{\rho}^\top]^\top \in \mathbb{R}^6$ to represent the twist coordinate of $\hat{\boldsymbol{\xi}}$. Let us define $\vee$ as an operator that maps $\hat{\boldsymbol{\xi}}$ into $\boldsymbol{\xi}$, i.e., $[\hat{\boldsymbol{\xi}}]^{\vee} = \boldsymbol{\xi}$. 
The connection between a $\boldsymbol{T} \in $ SE(3) and its related twist $\hat{\boldsymbol{\xi}} \in se(3)$ is established through exponential map, i.e.,
\begin{align}
    \boldsymbol{T} = \exp (\hat{\boldsymbol{\xi}}) \in \text{SE}(3).
\end{align}
Conversely, we have $\hat{\boldsymbol{\xi}} = \log(\boldsymbol{T}) \in se(3)$ through logarithmic map. 
Regarding the inverse operation, there is $\boldsymbol{T}^{-1} = \exp(-\hat{\boldsymbol{\xi}})$.
As to the transformation propagation, the adjoint transformation is employed. 
For a twist $\boldsymbol{\xi}$ expressed in one frame, the equivalent twist in another frame transformed by $\dot{\boldsymbol{T}}\in $ SE(3) is given by $ \text{Ad}(\dot{\boldsymbol{T}})\boldsymbol{\xi}$ and 
\begin{align}
    \text{Ad}(\dot{\boldsymbol{T}}) = \begin{bmatrix}
        \dot{\boldsymbol{R}} & \mathbf{0}_{3 \times 3} \\
        \dot{\boldsymbol{t}}_\times\dot{\boldsymbol{R}} & \dot{\boldsymbol{R}}
    \end{bmatrix} \in \mathbb{R}^{6\times6},
\end{align}
where $\dot{\boldsymbol{R}} \in $ SO(3) and $\dot{\boldsymbol{t}} \in \mathbb{R}^3$ are the corresponding rotation and translation in $\dot{\boldsymbol{T}}$.

\textbf{Robot Kinematics}.
For a standard $n$-degree-of-freedom serial robot arm, the end-effector pose is often computed using the following forward kinematics:
\begin{align}
    ^{S}\boldsymbol{T}_{E} = ^{0}\boldsymbol{T}_{n} = ^{0}\boldsymbol{T}_{1}\ ^{1}\boldsymbol{T}_{2}\ \cdots {}^{n-1}\boldsymbol{T}_{n},
\end{align}
where each $^{k}\boldsymbol{T}_{k+1}$~($0\leq k\leq n-1$) denotes the homogeneous transformation of the $(k+1)$th joint frame relative to the $k$th joint frame.
This frame-to-frame composition can be viewed as a sequence of rigid-body motions on SE(3) induced by the joint variables.
Although $^{k}\boldsymbol{T}_{k+1}$ is commonly parameterized via local frame conventions (e.g., D-H modeling~\cite{Denavit-ASME1955}), the entire kinematic chain admits an equivalent system-level representation expressed in the form of Lie algebra.
Namely, the forward kinematics can be compactly written in the product-of-exponentials~(PoE) form~\cite{Park-Book1994}:
\begin{align}
^{S}\boldsymbol{T}_{E} =
\exp(\hat{\boldsymbol{\xi}}^1q^1)\exp(\hat{\boldsymbol{\xi}}^2q^2)\cdots\exp(\hat{\boldsymbol{\xi}}^nq^n)\exp(\hat{\boldsymbol{\xi}}^{st}),
\end{align}
where $\hat{\boldsymbol{\xi}}^1,\ldots,\hat{\boldsymbol{\xi}}^n \in se(3)$ are the joint twists expressed in the base frame,
$\hat{\boldsymbol{\xi}}^{st}$ encodes a user-defined zero reference configuration,
and $q^1,\ldots,q^n \in [-\pi,\pi]$ denote the joint variables.
$\{\hat{\boldsymbol{\xi}}^1,\ldots,\hat{\boldsymbol{\xi}}^n,\hat{\boldsymbol{\xi}}^{st}\}$ constitutes the kinematic parameters of the robot and remains invariant across configurations, while the joint variables vary with robot motion.




\section{Algorithm Overview}
\label{sec:overview}

Fig.~\ref{fig:pipeline} gives an overview of our unified calibration algorithm. To begin with, we collect $m$ sets of measurement samples under multiple distinct dual-arm posture configurations.
Each sample consists of the measured posture of the calibration tool in the sensor frame, i.e., $\boldsymbol{B}_i^*$, together with the joint variables of both robot arms.
Given these measurements, we aim to jointly calibrate the three significant coordinate transformations and the kinematic parameters of both robot arms.

As the calibration parameters are high-dimensional and tightly coupled, a reliable initialization is important for fast and stable convergence.
Regarding the kinematics, we directly use the nominal parameters provided by the robot controllers as the initial guess.
As for the unknown coordinate transforms, we design a \textit{certifiably correct solver}~(see Section~\ref{sec:SDP_coord}). 
We first reformulate the coordinate-only calibration into a QCQP problem and then solve its convex SDP relaxation.
Our SDP relaxation provides an \textit{a posteriori} certificate through its global lower bound, allowing us to verify the near-global optimality of the recovered initialization and thus place the subsequent joint optimization in a favorable basin of attraction.

Built upon the above initialization, we perform the core \emph{unified optimization}~(see Section~\ref{sec:unified}).
Specifically, we reformulate the dual-arm pose chain into a consolidated Lie-algebraic error model grounded in the product-of-exponentials (PoE) formula, so that all coordinate and kinematic parameters are expressed in twist forms within a single system-level error formula.
Based on this error model, we derive a closed-form analytical Jacobian via Lie derivatives, which maps the increments of all parameters linearly to the geometric residual.
This derivation transforms the intrinsically nonlinear joint calibration into a well-conditioned linear least-squares subproblem, enabling standard iterative solvers to simultaneously update all parameters at each iteration.
Each update is applied through exponential mapping to remain on the SE(3) manifold space, and the iterations converge when the magnitude of the increment vector becomes sufficiently small.
Finally, our algorithm yields a set of high-precision coordinate transformations and kinematic parameters that directly support high-precision dual-arm collaboration tasks.

\section{Unified Calibration of Coordinate and Kinematic Parameters}
\label{sec:unified}
As introduced in Section~\ref{sec:overview}, our overall pipeline contains a certifiable coordinate initialization and a unified calibration.
The unified optimization is the core of the proposed framework: it explicitly models the strong coupling between the coordinate transforms and the kinematic parameters of both arms, and jointly updates them within a single system-level objective, thereby mitigating the error accumulation inherent to decoupled or arm-wise calibration.
This section establishes (i) the consolidated Lie-algebraic error model based on the dual-arm pose chain; (ii) an efficient iterative solver equipped with a closed-form analytical Jacobian derived from Lie derivatives; (iii) an identifiability analysis on the derived Jacobian.

\subsection{Error Model via Product-of-Exponentials Formula}

We begin by reformulating the standard dual-robot pose chain $\boldsymbol{A}_i\boldsymbol{X}\boldsymbol{B}_i = \boldsymbol{Y}\boldsymbol{C}_i\boldsymbol{Z}$ to the following formula that isolates the measurable transformation $\boldsymbol{B}_i$:
\begin{align} \label{eq:B_chain}
    \boldsymbol{B}_i = \boldsymbol{X}^{-1} \boldsymbol{A}_i^{-1} \boldsymbol{Y} \boldsymbol{C}_i \boldsymbol{Z}.
\end{align}
Here, in previous coordinate-only methods~\cite{Wang-TRO2021, Jiang-TRO2023}, the robot end-postures $\boldsymbol{A}_i$ and $\boldsymbol{C}_i$ are typically obtained from the robot controllers and thus contain kinematic errors. Now we assume that $\boldsymbol{X}$, $\boldsymbol{Y}$, and $\boldsymbol{Z}$ are initialized via a coordinate calibration algorithm from the noisy kinematics. 
Our unified calibration proceeds by first reparameterizing all transformations in~\eqref{eq:B_chain} using a Lie-algebraic formulation.

Consider the end postures of the two robots, i.e., $\boldsymbol{A}_i$ and $\boldsymbol{C}_i$.  
Applying the Lie-algebraic kinematic representation to both robots~(see Section~\ref{sec:notion_preli}), we denote the sensor‑side robot~(ref. $\boldsymbol{A}_i$) parameters as $\{\hat{\boldsymbol{\xi}}_a^1, ..., \hat{\boldsymbol{\xi}}_a^n, \ \hat{\boldsymbol{\xi}}^{st}_a,\ q^1_{a_i}, ..., q^n_{a_i}\}$, and the tool-side robot~(ref. $\boldsymbol{C}_i$) parameters as $\{\hat{\boldsymbol{\xi}}_c^1, ..., \hat{\boldsymbol{\xi}}_c^n,\ \hat{\boldsymbol{\xi}}^{st}_c,\ q^1_{c_i}, ..., q^n_{c_i}\}$.
Then the two end-postures can be expressed as:
\begin{align}
    \boldsymbol{A}_i^{-1} &= \exp(-\hat{\boldsymbol{\xi}}^{st}_a)\exp(-\hat{\boldsymbol{\xi}}^n_aq^n_{a_i})\cdots\exp(-\hat{\boldsymbol{\xi}}^1_aq^1_{a_i}), \\
    \boldsymbol{C}_i & = \exp(\hat{\boldsymbol{\xi}}^1_cq^1_{c_i})\cdots\exp(\hat{\boldsymbol{\xi}}^n_cq^n_{c_i})\exp(\hat{\boldsymbol{\xi}}^{st}_c). \
\end{align}
Furthermore, we let $\hat{\boldsymbol{\xi}}_x$, $\hat{\boldsymbol{\xi}}_y$, and $\hat{\boldsymbol{\xi}}_z$ define the twists associated with $\boldsymbol{X}$, $\boldsymbol{Y}$, and $\boldsymbol{Z}$, respectively. Substituting the above expressions into \eqref{eq:B_chain} gives:
\begin{align} \label{eq:B_PoE}
    \boldsymbol{B}_i = & \exp(-\hat{\boldsymbol{\xi}}_x) \exp(-\hat{\boldsymbol{\xi}}^{st}_a)\exp(-\hat{\boldsymbol{\xi}}^n_aq^n_{a_i})\cdots\exp(-\hat{\boldsymbol{\xi}}^1_aq^1_{a_i}) \notag \\ 
    & \times \exp(\hat{\boldsymbol{\xi}}_y)\exp(\hat{\boldsymbol{\xi}}^1_cq^1_{c_i})\cdots\exp(\hat{\boldsymbol{\xi}}^n_cq^n_{c_i}) \notag \\
    & \times \exp(\hat{\boldsymbol{\xi}}^{st}_c) \exp(\hat{\boldsymbol{\xi}}_z),
\end{align}
where $\times$ denotes the matrix multiplication here for clarity. 
The PoE formula in \eqref{eq:B_PoE} yields a compact and fully differentiable system-level model that places all coordinate and kinematic quantities in a single Lie-algebraic parameterization. Accordingly, we are enabled to construct a unified error model.

Before proceeding, we clarify a key design choice: \textit{which parameters in~\eqref{eq:B_PoE} should be optimized in the unified calibration}. This problem is essential in ensuring a minimal and well-posed estimation. A straightforward choice is to jointly estimate all twists appearing in \eqref{eq:B_PoE}. However, we argue that $\hat{\boldsymbol{\xi}}^{st}_a$ and $\hat{\boldsymbol{\xi}}^{st}_c$ can be directly fixed without additional optimization. There are two reasons: first, they encode user-defined PoE reference (``zero'') frames and optimizing them introduce redundant convention-dependent degrees of freedoms; second, the products $\exp(-\hat{\boldsymbol{\xi}}_x)\exp(-\hat{\boldsymbol{\xi}}^{st}_a)$ and $\exp(\hat{\boldsymbol{\xi}}^{st}_c)\exp(\hat{\boldsymbol{\xi}}_z)$ admit equivalent reparameterization, such that variations in $\hat{\boldsymbol{\xi}}^{st}_a$ (resp. $\hat{\boldsymbol{\xi}}^{st}_c$) can be absorbed into $\hat{\boldsymbol{\xi}}_x$ (resp. $\hat{\boldsymbol{\xi}}_z$) without affecting the predicted $\boldsymbol{B}_i$. 
Given the above properties, we keep $\hat{\boldsymbol{\xi}}^{st}_a$ and $\hat{\boldsymbol{\xi}}^{st}_c$ at their nominal (user-defined) values during our calibration and only optimize 
\begin{align}
    \boldsymbol{\xi}= [ \boldsymbol{\xi}_x^\top,\boldsymbol{\xi}_y^\top,\boldsymbol{\xi}_z^\top,\ \boldsymbol{\xi}^{1\top}_a,\ldots, \boldsymbol{\xi}^{n\top}_a,\ \boldsymbol{\xi}^{1\top}_c,\ldots, \boldsymbol{\xi}^{n\top}_c ].
\end{align}

Now we return to the derivation of our error model. We let $\boldsymbol{B}_i^*$ denote the high-precision measured transformation and $\boldsymbol{B}_i'$ denote the transformation computed by the pose chain in \eqref{eq:B_PoE}. 
Ideally, $\boldsymbol{B}_i'$ should coincide with $\boldsymbol{B}_i^*$; however, inaccuracies in the coordinates and kinematics cause a deviation between them. 
We define the calibration error $\boldsymbol{e}_i \in se(3)$ as
\begin{align} \label{eq:error_1}
    \boldsymbol{e}_i & \doteq \log(\boldsymbol{B}_i'\boldsymbol{B}_i^{*-1}) = \sum_{k=1}^\infty(-1)^{k-1} \frac{(\boldsymbol{B}_i'\boldsymbol{B}_i^{*-1}-\mathbf{I}_{4})^k}{k}.
\end{align}
Since the incremental errors during iterative optimization are typically small, we apply a first-order approximation following~\cite{He-TRO2010, Park-Book1994} and obtain
\begin{align} \label{eq:error_2}
    \boldsymbol{e}_i \approx \boldsymbol{B}_i'\boldsymbol{B}_i^{*-1} - \mathbf{I}_{4} = (\boldsymbol{B}_i' - \boldsymbol{B}_i^*)\boldsymbol{B}_i^{*-1} = \delta\boldsymbol{B}_i \boldsymbol{B}_i^{-1}.
\end{align}
The deviation term $\delta\boldsymbol{B}_i\, \boldsymbol{B}_i^{*-1}$ is obtained by differentiating~\eqref{eq:B_PoE} with respect to the parameters in $\boldsymbol{\xi}$. Denoting the corresponding increments by $\delta\boldsymbol{\xi}_x,\delta\boldsymbol{\xi}_y,\delta\boldsymbol{\xi}_z,\delta\boldsymbol{\xi}_a,\delta\boldsymbol{\xi}_c$, we have
\begin{align}\label{eq:error_derivation}
    \delta\boldsymbol{B}_i \boldsymbol{B}_i^{-1} = \Bigg( \substack{\frac{\partial \boldsymbol{B}_i}{\partial \boldsymbol{\xi}_x}{\partial \boldsymbol{\xi}_x} + \frac{\partial \boldsymbol{B}_i}{\partial \boldsymbol{\xi}_y}{\partial \boldsymbol{\xi}_y} + \frac{\partial \boldsymbol{B}_i}{\partial \boldsymbol{\xi}_z}{\partial \boldsymbol{\xi}_z} \\ + \frac{\partial \boldsymbol{B}_i}{\partial \boldsymbol{\xi}_a}{\partial \boldsymbol{\xi}_a} + \frac{\partial \boldsymbol{B}_i}{\partial \boldsymbol{\xi}_c}{\partial \boldsymbol{\xi}_c}} \Bigg) \boldsymbol{B}_i^{-1}.
\end{align}
where we define $\boldsymbol{\xi}_a = [\boldsymbol{\xi}_a^{1\top}, ..., \boldsymbol{\xi}_a^{n\top}]^\top \in \mathbb{R}^{6n}$ and $\boldsymbol{\xi}_c = [\boldsymbol{\xi}_c^{1\top}, ..., \boldsymbol{\xi}_c^{n\top}]^\top \in \mathbb{R}^{6n}$ here for notational simplicity. 

Now, given $m$ sets of measurements, we can construct a least‑squares minimization problem to identify the coordinate and kinematic parameters. The cost function is formulated by summing the residuals between the calibration error and the deviation term, i.e., 
\begin{align}\label{eq:prob_quadratic}
    \min_{\substack{{\partial \boldsymbol{\xi}_x}, {\partial \boldsymbol{\xi}_y}, {\partial \boldsymbol{\xi}_z},\\ {\partial \boldsymbol{\xi}_a}, {\partial \boldsymbol{\xi}_c}}} \sum_{i=1}^m \|  \boldsymbol{e}_i - \Bigg( \substack{\frac{\partial \boldsymbol{B}_i}{\partial \boldsymbol{\xi}_x}{\partial \boldsymbol{\xi}_x} + \frac{\partial \boldsymbol{B}_i}{\partial \boldsymbol{\xi}_y}{\partial \boldsymbol{\xi}_y} + \frac{\partial \boldsymbol{B}_i}{\partial \boldsymbol{\xi}_z}{\partial \boldsymbol{\xi}_z} \\ + \frac{\partial \boldsymbol{B}_i}{\partial \boldsymbol{\xi}_a}{\partial \boldsymbol{\xi}_a} + \frac{\partial \boldsymbol{B}_i}{\partial \boldsymbol{\xi}_c}{\partial \boldsymbol{\xi}_c}} \Bigg) \boldsymbol{B}_i^{-1} \|_F^2 .
\end{align}
The unified objective in \eqref{eq:prob_quadratic} aims to simultaneously correct all coordinate transformations and the kinematic parameters of both robots within a single cost function. This unified optimization inherently avoids the error accumulation that arises from sequential calibration and separated error modeling. The following section will detail how we derive an efficient iterative solver by exploiting the analytical structure of the Jacobian associated with this PoE‑based error model.

\subsection{Analytical Jacobian via Lie Derivatives}
\label{subsec:Jaco}
To enable efficient and robust optimization of \eqref{eq:prob_quadratic}, an explicit and exact Jacobian matrix is essential. 
This section details the derivation of this Jacobian in closed form by leveraging the differential calculus of Lie groups, i.e., Lie derivatives. 
We show that the complex identification can be transformed into a well-conditioned linear problem, enabling precise recovery of all coordinate and kinematic parameters.

The foundational step is to express the first-order approximation of the calibration error, i.e., $\delta\boldsymbol{B}_i \boldsymbol{B}_i^{-1}$, as a linear function of all the to-be-calibrated parameters, i.e.,
\begin{align}\label{eq:devia_Jaco}
[\delta\boldsymbol{B}_i \boldsymbol{B}_i^{-1}]^{\vee} = \boldsymbol{J}_i \delta\boldsymbol{\xi},
\end{align}
where
\begin{align}
    \boldsymbol{J}_i &= [\boldsymbol{J}_x^{i}, \boldsymbol{J}_y^{i}, \boldsymbol{J}_z^{i}, \boldsymbol{J}_{\xi_a^1}^{i}, ..., \boldsymbol{J}_{\xi_a^n}^{i}, \notag \\ & \quad \ \ \boldsymbol{J}_{\xi_c^1}^{i}, ..., \boldsymbol{J}_{\xi_c^n}^{i}] \in \mathbb{R}^{6\times(12n+18)}, \\
    \delta\boldsymbol{\xi} &= [\delta\boldsymbol{\xi}_x^\top, \delta\boldsymbol{\xi}_y^\top, \delta\boldsymbol{\xi}_z^\top, \delta\boldsymbol{\xi}_a^{1\top},..., \delta\boldsymbol{\xi}_a^{n\top}, \notag \\ & \quad \ \ \delta\boldsymbol{\xi}_c^{1\top},..., \delta\boldsymbol{\xi}_c^{n\top}] \in \mathbb{R}^{12n+18}.
\end{align}
Here, $\delta\boldsymbol{\xi}$ is the increment vector of the stacked calibration parameters, and $\boldsymbol{J}_i$ is the measurement-specific Jacobian matrix which maps the increments of the parameters to the residual vector. 
Deriving the explicit structure of $\boldsymbol{J}_i$ requires differentiating the lengthy PoE chain in \eqref{eq:B_PoE}, in other words, expanding the \eqref{eq:error_derivation}. In fact, the Lie derivative allows us to compute the differential of the exponential map in a closed form as follows~\cite{Park-Book1994}: For a coordinate twist $\hat{\boldsymbol{\xi}} \in se(3)$ and its increment $\delta\hat{\boldsymbol{\xi}}$, we have
\begin{align}
[ \delta\exp(\hat{\boldsymbol{\xi}}) \exp(-\hat{\boldsymbol{\xi}}) ]^\vee &= \mathcal{J}(\boldsymbol{\xi})\delta\boldsymbol{\xi}, 
\end{align}
where $\mathcal{J}(\boldsymbol{\xi})\in \mathbb{R}^{6\times6}$ denotes the Jacobian matrix of an element belonging to $se(3)$. As to a kinematic twist $\hat{\boldsymbol{\xi}}^k \in se(3)$ associated with a joint variable $q^k \in [-\pi, \pi]$, there is
\begin{align}
    [ \delta\exp(\hat{\boldsymbol{\xi}}^k q_k) \exp(-\hat{\boldsymbol{\xi}}^k q_k) ]^\vee &= \mathfrak{J}(\boldsymbol{\xi}^k, q_k)\delta\boldsymbol{\xi}^k, 
\end{align}
where $\mathfrak{J}(\boldsymbol{\xi}^k, q_k)\in \mathbb{R}^{6\times6}$ denotes the Jacobian matrix related to a kinematic twist~\cite{He-TRO2010}. 
The explicit formulas of the two Jacobian matrices $\mathcal{J}(\cdot)$ and $\mathfrak{J}(\cdot)$ are given in the appendix for clarity.
Using the above differential rules, now we can compute each block of $\boldsymbol{J}_i$ corresponding to each particular parameter.

We proceed to consider two kinds of parameters separately:
\begin{enumerate}[label=(\alph*)]
\item \textbf{Jacobian blocks of coordinate parameters}. 
These correspond to the twists $\boldsymbol{\xi}_x$, $\boldsymbol{\xi}_y$, and $\boldsymbol{\xi}_z$ describing the static transformations $\mathbf{X}$, $\mathbf{Y}$, and $\mathbf{Z}$. Since they appear in the PoE chain of \eqref{eq:B_PoE} without an accompanying joint variable, their differentials follow the coordinate twist rule. However, their contributions to the end-effector error must be transported through the preceding transformations via the adjoint map. For instance, the block for $\boldsymbol{\xi}_x$ is derived from the term $\exp(-\hat{\boldsymbol{\xi}}_x)$ at the beginning of the chain. Its differential yields $[ \delta\exp(-\hat{\boldsymbol{\xi}}_x) \exp(\hat{\boldsymbol{\xi}}_x) ]^\vee = - \mathcal{J}(-\boldsymbol{\xi}_x) \delta\boldsymbol{\xi}_x$, and since no transformation precedes it, the Jacobian block is simply 
\begin{align}\label{eq:J_x}
    \boldsymbol{J}_x^{i} = -\mathcal{J}(-\boldsymbol{\xi}_x).
\end{align}
As to $\boldsymbol{\xi}_y$, which appears after the kinematic chain of the sensor‑side robot, the differential of $\exp(\hat{\boldsymbol{\xi}}_y)$ contributes $\mathcal{J}(\boldsymbol{\xi}_y) \delta\boldsymbol{\xi}_y$, but this contribution must be left‑multiplied by the adjoint of all transformations before $\exp(\hat{\boldsymbol{\xi}}_y)$ in the chain. Hence, there is
\begin{align}\label{eq:J_y}
\boldsymbol{J}_y^{i} = \text{Ad}\big( &\exp(-\hat{\boldsymbol{\xi}}_x) \exp(-\hat{\boldsymbol{\xi}}_a^{st}) \exp(-\hat{\boldsymbol{\xi}}^n_aq^n_{a_i})\cdots \notag \\ & \times \exp(-\hat{\boldsymbol{\xi}}^1_aq^1_{a_i}) \big) \mathcal{J}(\boldsymbol{\xi}_y).
\end{align}
The block for $\boldsymbol{\xi}_z$ is obtained similarly, taking into account all transformations that precede $\exp(\hat{\boldsymbol{\xi}}_z)$, i.e., 
\begin{align}\label{eq:J_z}
    \boldsymbol{J}_z^i = \text{Ad}\big( &\exp(-\hat{\boldsymbol{\xi}}_x) \exp(-\hat{\boldsymbol{\xi}}_a^{st}) \exp(-\hat{\boldsymbol{\xi}}^n_aq^n_{a_i})\cdots \notag \\ & \times \exp(-\hat{\boldsymbol{\xi}}^1_aq^1_{a_i}) \exp(\hat{\boldsymbol{\xi}}_y)\exp(\hat{\boldsymbol{\xi}}^1_cq^1_{c_i})\cdots \notag \\ &\times \exp(\hat{\boldsymbol{\xi}}^n_cq^n_{c_i})\exp(\hat{\boldsymbol{\xi}}^{st}_c)  \big) \mathcal{J}(\boldsymbol{\xi}_z).
\end{align}

\item  \textbf{Jacobian blocks of kinematic parameters}.
These encompass the joint twists $\{\boldsymbol{\xi}_a^k\}_{k=1}^{n}$ and $\{\boldsymbol{\xi}_c^k\}_{k=1}^{n}$. 
Their differentials often involve the joint variable $q$, therefore
we need to use the kinematic Jacobian $\mathfrak{J}$. Consider the $k$-th~($k = 1, 2, ..., n$) joint twist of the sensor-side robot: the term $\exp(-\hat{\boldsymbol{\xi}}_a^k q_{a_i}^k)$ contributes $\mathfrak{J}(-\boldsymbol{\xi}_a^k, q_{a_i}^k) \delta\boldsymbol{\xi}_a^k$, and this contribution is transported by the adjoint of all transformations preceding this joint. Thus, the related Jacobian block is
\begin{align}\label{eq:J_ai}
    \boldsymbol{J}_{\xi_{a}^k}^{i} = - \text{Ad}\big( &\exp(-\hat{\boldsymbol{\xi}}_x) \exp(-\hat{\boldsymbol{\xi}}^{st}_a)\exp(-\hat{\boldsymbol{\xi}}^n_aq^n_{a_i})\cdots \notag \\ 
    & \times \exp(-\hat{\boldsymbol{\xi}}^{k+1}_aq^{k+1}_{a_i}) \big) \mathfrak{J}(-\boldsymbol{\xi}_a^k, q_{a_i}^k).
\end{align}
The blocks for the camera‑side joint twists are derived analogously, noting that their exponentials appear with positive signs in the chain. For the $k$-th~($k = 1, 2, ..., n$) joint twist of the chessboard-side robot, there is
\begin{align}\label{eq:J_ci}
    \boldsymbol{J}_{\xi_{c}^k}^{i} = \text{Ad}\big( &\exp(-\hat{\boldsymbol{\xi}}_x) \exp(-\hat{\boldsymbol{\xi}}^{st}_a)\exp(-\hat{\boldsymbol{\xi}}^n_aq^n_{a_i})\cdots \notag \\ 
    & \times \exp(-\hat{\boldsymbol{\xi}}^{1}_aq^{1}_{a_i}) \exp(\hat{\boldsymbol{\xi}}_y)\exp(\hat{\boldsymbol{\xi}}^1_cq^1_{c_i})\cdots \notag \\
    &\times \exp(\hat{\boldsymbol{\xi}}^{k-1}_cq^{k-1}_{c_i})   \big) \mathfrak{J}(\boldsymbol{\xi}_c^k, q_{c_i}^k).
\end{align}
\end{enumerate}

With all Jacobian blocks explicitly determined, we can assemble the complete $\boldsymbol{J}_i$ in \eqref{eq:devia_Jaco} for each measurement. Now, relying on \eqref{eq:devia_Jaco}, we can follow~\cite{He-TRO2010, Wu-TASE2014} to reduce the nonlinear least-squares problem in \eqref{eq:prob_quadratic} to a linear least‑squares problem:
\begin{align}
\min_{\delta\boldsymbol{\xi}} \sum_{i=1}^m \| \boldsymbol{e}_i - \boldsymbol{J}_i \delta\boldsymbol{\xi} \|_F^2,
\end{align}
where $\boldsymbol{e}_i = \log(\boldsymbol{B}_i'\boldsymbol{B}_i^{*-1})$ is the calibration error that can be easily computed. By stacking all measurement-specific errors and Jacobians into to a global residual vector $\boldsymbol{e} = [\boldsymbol{e}_1^\top, \boldsymbol{e}_2^\top, ..., \boldsymbol{e}_m^\top]^\top \in \mathbb{R}^{6m}$ and a global Jacobian matrix $\boldsymbol{J} = [\boldsymbol{J}_1^\top, \boldsymbol{J}_2^\top, ..., \boldsymbol{J}_m^\top]^\top \in \mathbb{R}^{6m\times(12n+18)}$, we can solve the following linear system to obtain the optimal increment $\delta\boldsymbol{\xi}^*$:
\begin{align}\label{eq:linear_system}
 \boldsymbol{e} = \boldsymbol{J}\delta\boldsymbol{\xi}^*.
\end{align}

The above formulation can be directly embedded into an iterative optimization algorithm, such as the Gauss‑Newton (G‑N) or Levenberg‑Marquardt (L‑M) methods~\cite{Nocedal-Book2006}. Concretely, at each iteration $t$, given the current parameter estimate $\boldsymbol{\xi}^{(t)}$, we compute the current global error $\boldsymbol{e}^{(t)}$ and Jacobian $\boldsymbol{J}^{(t)}$. Next, the increment $\delta\boldsymbol{\xi}^{*(t)}$ is solved from the linear system~\eqref{eq:linear_system}, and the parameters are updated via the exponential mapping:
\begin{align}
\exp(\hat{\boldsymbol{\xi}}^{(t+1)}) = \exp(\hat{\boldsymbol{\xi}}^{(t)}) \exp(\delta\hat{\boldsymbol{\xi}}^{*(t)}).
\end{align}
This update ensures that the calibration parameters remain on the SE(3) manifold space at each iteration.
Then, the iteration continues until the magnitude of the increment vector $\delta\boldsymbol{\xi}^{*(t)}$ falls below sufficiently small thresholds.

Up to now, we have established a unified Lie-algebraic error model and derived its closed-form analytical Jacobian, together with a well-conditioned iterative solving process for jointly updating all coordinate and kinematic parameters. 
Nevertheless, beyond numerical solvability, a crucial issue in parameter identification is whether the parameters are identifiable—i.e., whether the estimation can converge reliably, instead of being compromised by intrinsic redundancy. 
Such a kind of analysis is rarely explored in previous joint optimization methods~\cite{Wang-TMECH2024, Chen-TMECH2025}.
To this end, we next analyze the identifiability of the calibration parameters in our unified optimization framework.

\subsection{Identifiability Analysis}

Under the Jacobian-guided iterative solving scheme, a parameter is said to be \emph{unidentifiable} if the corresponding column of the identification Jacobian can be expressed as a linear combination of other columns; equivalently, the Jacobian matrix is not of full column rank. Therefore, we can access the identifiability of the calibration parameters by analyzing the rank properties of the Jacobian blocks in \eqref{eq:J_x}-\eqref{eq:J_ci}.

Before our analysis, we introduce two helpful standard properties~\cite{He-TRO2010}: 1) \textit{Non-singularity of adjoint operation}: For any $\mathbf{G}\in$ SE(3), $\mathrm{Ad}(\mathbf{G})\in\mathbb{R}^{6\times 6}$ is non-singular;
2) \textit{Non-singularity of $\mathcal{J}$ and $\mathfrak{J}$ under generic configurations}: $\mathcal{J}(\boldsymbol{\cdot})$ and $\mathfrak{J}(\cdot)$ are always non-singular except a few special cases, e.g., the joint variable in $\mathfrak{J}(\cdot)$ approaches 0. Next, our analysis proceeds in two steps: we first study each Jacobian block in \eqref{eq:J_x}--\eqref{eq:J_ci} individually, and then consider possible cross dependencies in the stacked Jacobian.

\textbf{Analysis on individual Jacobian blocks}. Consider the Jacobian blocks associated with the coordinate twists $\boldsymbol{\xi}_x,\boldsymbol{\xi}_y,\boldsymbol{\xi}_z$.
From \eqref{eq:J_x}--\eqref{eq:J_z}, each coordinate block can be written as
\begin{align}
 \boldsymbol{J}_{(\cdot)}^{i}=\pm \mathrm{Ad}(\boldsymbol{G}_{(\cdot)}^{i})\,\mathcal{J}(\pm\boldsymbol{\xi}_{(\cdot)})
\end{align}
where $\boldsymbol{G}_{(\cdot)}^{i}\in SE(3)$ denotes the preceding product in \eqref{eq:B_PoE} (possibly the identity).
By the two properties above, $\mathrm{Ad}(\boldsymbol{G}_{(\cdot)}^{i})$ is non-singular and $\mathcal{J}(\cdot)$ is non-singular under generic configurations.
Therefore, each coordinate block $\boldsymbol{J}_x^{i}$, $\boldsymbol{J}_y^{i}$ and $\boldsymbol{J}_z^{i}$ is of full column rank.
The same reasoning applies to the kinematic twists $\{\boldsymbol{\xi}_a^k\}_{k=1}^{n}$ and $\{\boldsymbol{\xi}_c^k\}_{k=1}^{n}$.
From \eqref{eq:J_ai} and \eqref{eq:J_ci}, each joint block admits the unified form
\begin{align}
    \boldsymbol{J}_{\xi}^{i}= \pm \mathrm{Ad}(\boldsymbol{G}_{\xi}^{i})\,\mathfrak{J}(\pm\boldsymbol{\xi},q),
\end{align}
where $\boldsymbol{G}_{\xi}^{i}$ is the product of transformations preceding the corresponding joint exponential in \eqref{eq:B_PoE}.
Again, $\mathrm{Ad}(\boldsymbol{G}_{\xi}^{i})$ is always non-singular, and $\mathfrak{J}(\cdot)$ is non-singular under generic configurations.
Hence, for any measurement where the joint value does not approach degenerate cases (e.g., $q\not\approx 0$), each joint block is of full column rank.

\textbf{Analysis on the stacked Jacobian}.  Having established that each individual Jacobian block is full-rank under generic configurations, we finally consider whether there exists a \emph{cross} dependence between coordinate and joint twists when stacking all measurements.
Let $\boldsymbol{J}\in\mathbb{R}^{6m\times(12n+18)}$ be the stacked Jacobian, and suppose there exists a nonzero increment vector
$\delta\boldsymbol{\xi}=[\delta\boldsymbol{\xi}_x^\top,\delta\boldsymbol{\xi}_y^\top,\delta\boldsymbol{\xi}_z^\top,\delta\boldsymbol{\xi}_a^\top,\delta\boldsymbol{\xi}_c^\top]^\top\neq\mathbf{0}$
such that $\boldsymbol{J}\delta\boldsymbol{\xi}=\mathbf{0}$.
Equivalently, for every measurement $i$,
\begin{align}\label{eq:cross_dep}
\boldsymbol{J}_x^{i}\delta\boldsymbol{\xi}_x+\boldsymbol{J}_y^{i}\delta\boldsymbol{\xi}_y+\boldsymbol{J}_z^{i}\delta\boldsymbol{\xi}_z
+\sum_{k=1}^{n}\boldsymbol{J}_{\xi_a^k}^{i}\delta\boldsymbol{\xi}_a^{k}
+\sum_{k=1}^{n}\boldsymbol{J}_{\xi_c^k}^{i}\delta\boldsymbol{\xi}_c^{k}
=\mathbf{0}.
\end{align}
The key observation is that the terms in \eqref{eq:cross_dep} are transported by different configuration-dependent adjoint factors.
Specifically, $\boldsymbol{J}_y^{i}$ varies with $i$ through $\mathrm{Ad}(\boldsymbol{G}_y^{i})$, $\boldsymbol{J}_z^{i}$ is transported by a longer preceding product,
and each joint block is transported by a truncation-dependent preceding product that differs across joint indices and measurements.
Therefore, a fixed nonzero $\delta\boldsymbol{\xi}$ cannot satisfy \eqref{eq:cross_dep} for all $i$ unless the dataset is degenerate, i.e., the motions do not provide sufficient excitation so that these transported adjoint factors collapse to identical (or nearly identical) variations across all measurements.
Under nondegenerate excitation with sufficiently diverse joint configurations and with each joint actuated away from degenerate values, \eqref{eq:cross_dep} admits only the trivial solution, and thus the stacked Jacobian $\boldsymbol{J}$ is full column rank.

The above analysis indicates that,
the increment vector of the calibration parameters, i.e., $\delta\boldsymbol{\xi}$, is identifiable under the following mild excitation conditions:
\begin{enumerate}[label=(\alph*)]
\item the collected measurements should cover \textit{multiple distinct dual-robot posture configurations}, such that the transported adjoint factors in the Jacobian blocks (e.g., $\mathrm{Ad}(\mathbf{G}_y^{i})$ and the truncation-dependent $\mathrm{Ad}(\mathbf{G}_{\xi}^{i})$) vary across samples rather than remaining constant;
\item the robot joints should be sufficiently excited such that \textit{each joint variable does not stay near degenerate values (e.g., $q^k\approx 0$) throughout the measurement dataset}; this ensures that the corresponding $\mathfrak{J}(\boldsymbol{\xi}^k,q^k)$ remains sensitive to the residual.
\end{enumerate}
These conditions are easy to satisfy in practice by collecting samples over diverse dual-robot posture configurations, and they ensure that the stacked identification Jacobian is of full column rank, thereby ensuring a well-posed and stable optimization in our unified calibration.

\section{SDP Relaxation for Certifiable Coordinate Initialization}
\label{sec:SDP_coord}

Since the coordinate transformations are unknown a priori, obtaining reliable initial estimates is essential before performing the joint optimization introduced above. 
This becomes especially critical under significant kinematic errors in the robot‑provided poses $\boldsymbol{A}_i$ and $\boldsymbol{C}_i$, since a poor initialization may cause slow convergence or trapping in suboptimal local minima. 
Motivated by this, we develop a certifiably correct initialization method for the coordinate parameters. Specifically, we first formulate the coordinate-only calibration problem as a Quadratically Constrained Quadratic Program~(QCQP) and then derive its convex SDP relaxation. This formulation enables us to compute a high-quality coordinate estimate together with an \textit{a posteriori} certificate of solution quality, thereby providing a robust starting point for the subsequent unified optimization.

\subsection{Quadratically Constrained Quadratic Programming}
\label{subsec:qcqp}
In this section, we formulate the coordinate parameter initialization problem and recast it into a standard QCQP.
Given the basic dual‑robot pose chain in~\eqref{eq:AXB=YCB}, we begin with isolating the rotation and translation constraints in the noise-free case as:
\begin{align}
    \boldsymbol{f}_i &\doteq \textnormal{vec}(\boldsymbol{R}_{a_i} \boldsymbol{R}_{x} \boldsymbol{R}_{b_i}) - \textnormal{vec}(\boldsymbol{R}_{y} \boldsymbol{R}_{c_i} \boldsymbol{R}_{z}) = \mathbf{0}_9, \\
    \boldsymbol{g}_i &\doteq \boldsymbol{R}_{a_i}\boldsymbol{R}_{x}\boldsymbol{t}_{b_i} + \boldsymbol{R}_{a_i}\boldsymbol{t}_x + \boldsymbol{t}_{a_i} \notag \\ & \quad \ - \boldsymbol{R}_y\boldsymbol{R}_{c_i}\boldsymbol{t}_z - \boldsymbol{R}_y\boldsymbol{t}_{c_i} - \boldsymbol{t}_y = \boldsymbol{0}_3. 
\end{align}

Accordingly, the following minimization problem can be defined for solving the initial coordinate parameters:
\begin{equation}\label{eq:coord_ori}
\begin{aligned}
     \min_{\substack{\boldsymbol{R}_{x}, \boldsymbol{R}_{y}, \boldsymbol{R}_{z}, \\ \boldsymbol{t}_x, \boldsymbol{t}_y, \boldsymbol{t}_z}} & \sum_{i = 1}^m ( \|\boldsymbol{f}_i\|_2^2 + \alpha^2 \|\boldsymbol{g}_i\|_2^2 ) \\
    \text{s.t.} \ & \boldsymbol{R}_{x}, \boldsymbol{R}_{y}, \boldsymbol{R}_{z} \in \text{SO}(3),\ i = {1, ..., m},
\end{aligned}
\end{equation}
where $\alpha$ is a predefined weighting coefficient. 

Next, we rewrite \eqref{eq:coord_ori} into a standard QCQP form
with a vectorized variable and constraints defined by real symmetric matrices. Our reformulation relies on
\begin{proposition} \label{prop:state_linear}
Define a homogeneous state vector as
\begin{align}
    \boldsymbol{w} \doteq [ &\textnormal{vec}(\boldsymbol{R}_x)^\top, \textnormal{vec}(\boldsymbol{R}_y)^\top, \textnormal{vec}(\boldsymbol{R}_z^\top\otimes\boldsymbol{R}_y)^\top, \notag \\ &  \boldsymbol{t}_x^\top, \boldsymbol{t}_y^\top, \textnormal{vec}(\boldsymbol{t}_z^\top\otimes\boldsymbol{R}_y)^\top, 1]^\top \in \mathbb{R}^{133}.
\end{align}
Then $\boldsymbol{f}_i$ and $\boldsymbol{g}_i$ are linear in $\boldsymbol{w}$, i.e.,
\begin{equation}
    \boldsymbol{f}_i = \boldsymbol{\Omega}_{f_i}\boldsymbol{w}, \quad \boldsymbol{g}_i = \boldsymbol{\Omega}_{g_i} \boldsymbol{w},
\end{equation}
where $\boldsymbol{\Omega}_{f_i},\ \boldsymbol{\Omega}_{g_i}$ are introduced as auxiliary matrices and constructed from the measurements.
\end{proposition}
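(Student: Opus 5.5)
The plan is to rely on a single identity, $\textnormal{vec}(\boldsymbol{P}\boldsymbol{Q}\boldsymbol{S}) = (\boldsymbol{S}^\top\otimes\boldsymbol{P})\,\textnormal{vec}(\boldsymbol{Q})$, together with its special case $\boldsymbol{M}\boldsymbol{v} = \textnormal{vec}(\boldsymbol{M}\boldsymbol{v}) = (\boldsymbol{v}^\top\otimes\mathbf{I})\,\textnormal{vec}(\boldsymbol{M})$ for a matrix $\boldsymbol{M}$ and a vector $\boldsymbol{v}$. I will take each summand of $\boldsymbol{f}_i$ and $\boldsymbol{g}_i$ in turn. Each summand is either linear in one block of $\boldsymbol{w}$, or bilinear in $(\boldsymbol{R}_y,\boldsymbol{R}_z)$ or $(\boldsymbol{R}_y,\boldsymbol{t}_z)$, or constant. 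The lifted blocks $\textnormal{vec}(\boldsymbol{R}_z^\top\otimes\boldsymbol{R}_y)$ and $\textnormal{vec}(\boldsymbol{t}_z^\top\otimes\boldsymbol{R}_y)$ are built to absorb the bilinear terms, and the trailing entry $1$ absorbs the constants. As a sanity check I will first confirm the dimension: $9+9+81+3+3+27+1=133$, since $\boldsymbol{t}_z^\top\otimes\boldsymbol{R}_y\in\mathbb{R}^{3\times 9}$.

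For $\boldsymbol{f}_i$, the first term is $\textnormal{vec}(\boldsymbol{R}_{a_i}\boldsymbol{R}_x\boldsymbol{R}_{b_i}) = (\boldsymbol{R}_{b_i}^\top\otimes\boldsymbol{R}_{a_i})\,\textnormal{vec}(\boldsymbol{R}_x)$. For the second term I will apply the identity with $\boldsymbol{R}_{c_i}$ as the middle factor: $\textnormal{vec}(\boldsymbol{R}_y\boldsymbol{R}_{c_i}\boldsymbol{R}_z) = (\boldsymbol{R}_z^\top\otimes\boldsymbol{R}_y)\,\textnormal{vec}(\boldsymbol{R}_{c_i})$. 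This is a $9\times 9$ unknown matrix times a known vector, so the special case gives $(\textnormal{vec}(\boldsymbol{R}_{c_i})^\top\otimes\mathbf{I}_9)\,\textnormal{vec}(\boldsymbol{R}_z^\top\otimes\boldsymbol{R}_y)$. Placing these two coefficient blocks, with the appropriate signs, in the columns of $\boldsymbol{\Omega}_{f_i}$ that correspond to $\textnormal{vec}(\boldsymbol{R}_x)$ and $\textnormal{vec}(\boldsymbol{R}_z^\top\otimes\boldsymbol{R}_y)$, and zeros elsewhere, defines $\boldsymbol{\Omega}_{f_i}\in\mathbb{R}^{9\times133}$.

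For $\boldsymbol{g}_i$ I will handle the six summands separately:
\begin{itemize}
\item $\boldsymbol{R}_{a_i}\boldsymbol{R}_x\boldsymbol{t}_{b_i} = (\boldsymbol{t}_{b_i}^\top\otimes\boldsymbol{R}_{a_i})\,\textnormal{vec}(\boldsymbol{R}_x)$.
\item $\boldsymbol{R}_{a_i}\boldsymbol{t}_x$ is already linear in $\boldsymbol{t}_x$.
\item $\boldsymbol{t}_{a_i}$ is placed in the last column, multiplying the entry $1$.
\item $\boldsymbol{R}_y\boldsymbol{t}_{c_i} = (\boldsymbol{t}_{c_i}^\top\otimes\mathbf{I}_3)\,\textnormal{vec}(\boldsymbol{R}_y)$.
\item $\boldsymbol{t}_y$ has coefficient $\mathbf{I}_3$.
\item The bilinear term is $\boldsymbol{R}_y\boldsymbol{R}_{c_i}\boldsymbol{t}_z = \sum_{j}(\boldsymbol{t}_z)_{(j)}\boldsymbol{R}_y\boldsymbol{r}_{c_i}^{j}$, where $\boldsymbol{r}_{c_i}^{j}$ is the $j$-th column of $\boldsymbol{R}_{c_i}$. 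This equals $(\boldsymbol{t}_z^\top\otimes\boldsymbol{R}_y)\,\textnormal{vec}(\boldsymbol{R}_{c_i})$, and by the special case it becomes $(\textnormal{vec}(\boldsymbol{R}_{c_i})^\top\otimes\mathbf{I}_3)\,\textnormal{vec}(\boldsymbol{t}_z^\top\otimes\boldsymbol{R}_y)$.
\end{itemize}
Collecting these signed blocks into the matching columns defines $\boldsymbol{\Omega}_{g_i}\in\mathbb{R}^{3\times133}$. Every block depends only on the measurements $\boldsymbol{A}_i,\boldsymbol{B}_i,\boldsymbol{C}_i$.

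The only nontrivial step is rewriting the two bilinear terms. The obstacle is recognizing that $\boldsymbol{R}_{c_i}$ should be treated as the middle factor, so that the unknowns collect into the Kronecker products $\boldsymbol{R}_z^\top\otimes\boldsymbol{R}_y$ and $\boldsymbol{t}_z^\top\otimes\boldsymbol{R}_y$ exactly as they appear in $\boldsymbol{w}$. For the $\boldsymbol{t}_z$ term, I expect the column-by-column expansion above to confirm the Kronecker ordering, since $\boldsymbol{t}_z^\top\otimes\boldsymbol{R}_y = [(\boldsymbol{t}_z)_{(1)}\boldsymbol{R}_y,\,(\boldsymbol{t}_z)_{(2)}\boldsymbol{R}_y,\,(\boldsymbol{t}_z)_{(3)}\boldsymbol{R}_y]$. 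Everything else is bookkeeping of column indices.
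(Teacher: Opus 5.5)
Your proposal is correct and follows essentially the same route as the paper. Both apply $\mathrm{vec}(\boldsymbol{M}_1\boldsymbol{M}_2\boldsymbol{M}_3)=(\boldsymbol{M}_3^\top\otimes\boldsymbol{M}_1)\mathrm{vec}(\boldsymbol{M}_2)$ with $\boldsymbol{R}_{c_i}$ as the middle factor, then use $\boldsymbol{M}\boldsymbol{t}=(\boldsymbol{t}^\top\otimes\mathbf{I})\mathrm{vec}(\boldsymbol{M})$ to move the unknowns into the lifted Kronecker blocks, with the constant $\boldsymbol{t}_{a_i}$ absorbed by the trailing $1$ (your coefficient $\boldsymbol{t}_{b_i}^\top\otimes\boldsymbol{R}_{a_i}$ for the $\boldsymbol{R}_x$ term of $\boldsymbol{g}_i$ is just the mixed-product simplification of the paper's $(\boldsymbol{t}_{b_i}^\top\otimes\mathbf{I}_3)(\mathbf{I}_3\otimes\boldsymbol{R}_{a_i})$).
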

The proof and the explicit expressions of $\boldsymbol{\Omega}_{f_i}$ and $\boldsymbol{\Omega}_{g_i}$ are given in the appendix for clarity. Now, we can rewrite each term in the objective function of \eqref{eq:coord_ori} as
\begin{align}
    \|\boldsymbol{f}_i\|_2^2 + \alpha\|\boldsymbol{g}_i\|_2^2 &= \boldsymbol{w}^\top\boldsymbol{\Omega}_{f_i}^\top\boldsymbol{\Omega}_{f_i}\boldsymbol{w} + \alpha^2\boldsymbol{w}^\top\boldsymbol{\Omega}_{g_i}^\top\boldsymbol{\Omega}_{g_i}\boldsymbol{w} \\
    &= \boldsymbol{w}^\top(\boldsymbol{Q}_{f_i} + \alpha^2\boldsymbol{Q}_{g_i})\boldsymbol{w} \\
    &= \boldsymbol{w}^\top\boldsymbol{Q}_{i}\boldsymbol{w}
\end{align}
where we define $\boldsymbol{Q}_{f_i} \doteq \boldsymbol{\Omega}_{f_i}^\top\boldsymbol{\Omega}_{f_i}$, $\boldsymbol{Q}_{g_i} \doteq \boldsymbol{\Omega}_{g_i}^\top\boldsymbol{\Omega}_{g_i}$, and $\boldsymbol{Q}_i \doteq \boldsymbol{Q}_{f_i} + \alpha^2\boldsymbol{Q}_{g_i}$. Note that $\boldsymbol{Q}_{f_i}$ and $\boldsymbol{Q}_{g_i}$ are inherently symmetric and positive semidefinite, therefore we can conclude that $\boldsymbol{Q}_i$ shares the same property. 

Now we consider the SO(3) constraints in \eqref{eq:coord_ori}:
\begin{proposition} \label{prop:SO3_QC}
    The SO(3) constraints in $\boldsymbol{R}_x$, $\boldsymbol{R}_y$, and $\boldsymbol{R}_z$ can be properly transformed into a set of quadratic constraints on the state vector $\boldsymbol{w}$, i.e.,
    \begin{align}
        \boldsymbol{w}^\top \boldsymbol{H}_j\boldsymbol{w} = \rho_j,\ \forall j = 1, 2, ..., h,
    \end{align}
    where each $\boldsymbol{H}_j$ is a real symmetric matrix, each $\rho_j$ is a constant, and $h$ is the number of constraints.  
\end{proposition}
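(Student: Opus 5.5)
The plan is to exhibit the constraints explicitly, using the fact that every entry of $\boldsymbol{w}$ other than the trailing $1$ is either an entry of a rotation or a product of entries of $\boldsymbol{R}_y$ with entries of $\boldsymbol{R}_z$ or $\boldsymbol{t}_z$. Write $\boldsymbol{r}_x=\textnormal{vec}(\boldsymbol{R}_x)$, $\boldsymbol{r}_y=\textnormal{vec}(\boldsymbol{R}_y)$, $\boldsymbol{r}_{zy}=\textnormal{vec}(\boldsymbol{R}_z^\top\otimes\boldsymbol{R}_y)$ and $\boldsymbol{s}=\textnormal{vec}(\boldsymbol{t}_z^\top\otimes\boldsymbol{R}_y)$. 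Every constraint will be a polynomial identity of degree at most two in these blocks, possibly multiplied by the homogenizing entry $w_{133}=1$. Any such identity can be written as $\boldsymbol{w}^\top\boldsymbol{H}_j\boldsymbol{w}=\rho_j$: take the coefficient matrix of the quadratic form, symmetrize it as $(\boldsymbol{M}+\boldsymbol{M}^\top)/2$, and route linear terms through the products $w_k w_{133}$. The last normalization is $\boldsymbol{w}^\top\boldsymbol{e}_{133}\boldsymbol{e}_{133}^\top\boldsymbol{w}=1$, so symmetry of every $\boldsymbol{H}_j$ is automatic.

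I would proceed in three steps.

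\textbf{Step 1 (orthogonality).} For $\boldsymbol{R}_x$ and $\boldsymbol{R}_y$, impose $\boldsymbol{R}^\top\boldsymbol{R}=\mathbf{I}_3$ and $\boldsymbol{R}\boldsymbol{R}^\top=\mathbf{I}_3$ entrywise. Each entry is an inner product of two columns or two rows, so it is a quadratic form in $\boldsymbol{r}_x$ or $\boldsymbol{r}_y$ with a constant right-hand side ($0$ or $1$). This is standard and redundant but tightening for the later relaxation.

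\textbf{Step 2 (handedness).} Impose $\det=+1$ through the quadratic cross-product identities $\boldsymbol{c}_1\times\boldsymbol{c}_2=\boldsymbol{c}_3$ (and cyclic permutations) on the columns $\boldsymbol{c}_k$. Each component is quadratic in $\boldsymbol{r}$ and linear in $\boldsymbol{r}$, and the linear part is homogenized with $w_{133}$. Together with orthogonality, this characterizes $\textnormal{SO}(3)$.

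\textbf{Step 3 (lifted blocks).} This is the main obstacle, because $\boldsymbol{R}_z$ never appears alone in $\boldsymbol{w}$, only through $\boldsymbol{R}_z^\top\otimes\boldsymbol{R}_y$. I would use $\boldsymbol{K}\doteq\boldsymbol{R}_z^\top\otimes\boldsymbol{R}_y$ itself. If $\boldsymbol{R}_y,\boldsymbol{R}_z\in\textnormal{SO}(3)$, then $\boldsymbol{K}$ is a $9\times9$ orthogonal matrix, so $\boldsymbol{K}^\top\boldsymbol{K}=\boldsymbol{K}\boldsymbol{K}^\top=\mathbf{I}_9$ are quadratic in $\boldsymbol{r}_{zy}$.

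To tie $\boldsymbol{K}$ to $\boldsymbol{R}_y$ and encode that $\boldsymbol{R}_z\in\textnormal{SO}(3)$, I would use the block structure. The $(p,q)$ block of $\boldsymbol{K}$ is $(\boldsymbol{R}_z)_{qp}\boldsymbol{R}_y$, so every $3\times3$ block is a scalar multiple of $\boldsymbol{R}_y$. This gives quadratic consistency constraints of the form $\boldsymbol{K}_{(pq)}\boldsymbol{R}_y^\top=(\boldsymbol{R}_z)_{qp}\mathbf{I}_3$. From these I would extract $(\boldsymbol{R}_z)_{qp}=\tfrac13\textnormal{tr}(\boldsymbol{K}_{(pq)}\boldsymbol{R}_y^\top)$ as a quadratic expression $\phi_{qp}(\boldsymbol{w})$. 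The constraints then read: the off-diagonal entries of $\boldsymbol{K}_{(pq)}\boldsymbol{R}_y^\top$ vanish, and its diagonal entries are mutually equal. The orthogonality and determinant of $\boldsymbol{R}_z$ follow from $\boldsymbol{K}^\top\boldsymbol{K}=\mathbf{I}_9$ together with the analogous cross-product identities written blockwise. For example, $\sum_{r}\boldsymbol{K}_{(rp)}^\top\boldsymbol{K}_{(rq)}=\delta_{pq}\mathbf{I}_3$ reduces to the column orthogonality of $\boldsymbol{R}_z$ because $\boldsymbol{R}_y^\top\boldsymbol{R}_y=\mathbf{I}_3$. The handedness of $\boldsymbol{R}_z$ is then encoded by the cross-product relations among block scalars, expressed through $\boldsymbol{K}$ entries and multiplied by a known entry of $\boldsymbol{R}_y$, so that they stay quadratic.

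For the translation lift $\boldsymbol{S}=\boldsymbol{t}_z^\top\otimes\boldsymbol{R}_y$, I would add the analogous consistency conditions. Each block $(\boldsymbol{t}_z)_p\boldsymbol{R}_y$ must be a multiple of $\boldsymbol{R}_y$, which gives the quadratic constraints $\boldsymbol{S}_{(p)}\boldsymbol{R}_y^\top=\tfrac13\textnormal{tr}(\boldsymbol{S}_{(p)}\boldsymbol{R}_y^\top)\mathbf{I}_3$.

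Finally, I would verify the converse direction. Any $\boldsymbol{w}$ satisfying all of the listed constraints with $w_{133}=1$ must arise from some $\boldsymbol{R}_x,\boldsymbol{R}_y,\boldsymbol{R}_z\in\textnormal{SO}(3)$ and some $\boldsymbol{t}_z$ via the definition of $\boldsymbol{w}$. Checking this establishes that the transformation is proper, and counting all the constraints gives $h$.
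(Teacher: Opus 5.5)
Your proposal is correct and follows essentially the paper's route. Both arguments use column orthonormality plus the cross-product identity $\boldsymbol{r}_1\times\boldsymbol{r}_2=\boldsymbol{r}_3$ for $\boldsymbol{R}_x,\boldsymbol{R}_y$, orthogonality of $\boldsymbol{K}_{zy}=\boldsymbol{R}_z^\top\otimes\boldsymbol{R}_y$ together with the requirement that each $3\times 3$ block of $\boldsymbol{K}_{zy}$, multiplied by $\boldsymbol{R}_y^\top$, be a scalar multiple of $\mathbf{I}_3$ (off-diagonals zero, diagonals equal), the same block trick for $\boldsymbol{t}_z^\top\otimes\boldsymbol{R}_y$, and symmetrization as in Lemma~\ref{lemma:qua_bilinear}.

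Your additional handedness constraint for $\boldsymbol{R}_z$ goes beyond the paper, whose constraint set only forces $\boldsymbol{R}_z\in\mathrm{O}(3)$. However, impose it against all entries of $\boldsymbol{R}_y$, for example in the summed form $\langle\boldsymbol{K}_{(21)},\boldsymbol{K}_{(32)}\rangle_F-\langle\boldsymbol{K}_{(31)},\boldsymbol{K}_{(22)}\rangle_F=\langle\boldsymbol{K}_{(13)},\boldsymbol{R}_y\rangle_F$ and its analogues for the other components. If you multiply by a single fixed entry instead, that entry may be zero and the constraint becomes vacuous.
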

The complete proof is left to the appendix.
To finish the final preparation, we define $\boldsymbol{Q} \doteq \sum_{i=1}^{m}\boldsymbol{Q}_i$. At this point we are ready to rewrite Problem~\eqref{eq:coord_ori} into the QCQP form:
\begin{equation}\label{eq:qcqp}
\begin{aligned}
    \min_{\boldsymbol{w}} & \ \boldsymbol{w}^\top\boldsymbol{Q}\boldsymbol{w} \\
    \textnormal{s.t.} \ & \boldsymbol{w}^\top \boldsymbol{H}_j\boldsymbol{w} = \rho_j, \ \forall j = 1, 2, ..., h, \\
    & \boldsymbol{w}^\top\boldsymbol{H}_{h+1}\boldsymbol{w} = 1,
\end{aligned} 
\end{equation}
where $\boldsymbol{H}_{h+1} = \mathrm{diag}(\mathbf{0}_{(132)\times(132)},\,1)$ is introduced to fix the the homogeneous coordinate introduced in $\boldsymbol{w}$. One may concern that $\boldsymbol{w}^\top \boldsymbol{H}_{h+1}\boldsymbol{w}=1$ admits two symmetric branches $\boldsymbol{w}_{(133)}=\pm 1$, potentially causing a sign ambiguity. In fact, this does not pose an issue here because the data-induced parts in $\boldsymbol{Q}$ is naturally able to break the $\boldsymbol{w}\mapsto-\boldsymbol{w}$ symmetry; besides, the lifting is used solely to encode affine terms (the appended constant entry) and all physically meaningful quantities are recovered from the remaining blocks of $\boldsymbol{w}$.

\subsection{Semidefinite Relaxation}
Directly solving problem~\eqref{eq:qcqp} remains challenging due to the non‑convex quadratic equality constraints. However, its QCQP structure presents a significant benefit, that is, we can reformulate it further and develop a convex semidefinite programming~(SDP) relaxation. 

The relaxation is built upon reparameterizing problem~\eqref{eq:qcqp} using the following state matrix:
\begin{align}
    \boldsymbol{W} = \boldsymbol{w}\boldsymbol{w}^\top,
\end{align}
which is by construction a rank‑1 symmetric positive semidefinite matrix. 
With this substitution, the quadratic objective function in problem~\eqref{eq:qcqp} can be expressed linearly in $\boldsymbol{W}$:
\begin{align}
    \boldsymbol{w}^\top\boldsymbol{Q}\boldsymbol{w} = \textnormal{tr}(\boldsymbol{w}^\top\boldsymbol{Q}\boldsymbol{w}) = \textnormal{tr}(\boldsymbol{Q}\boldsymbol{w}\boldsymbol{w}^\top) = \textnormal{tr}(\boldsymbol{Q}\boldsymbol{W}).
\end{align}
Similarly, each quadratic constraint in problem~\eqref{eq:qcqp} becomes $\boldsymbol{w}^\top\boldsymbol{H}_j\boldsymbol{w} = \textnormal{tr}(\boldsymbol{H}_j\boldsymbol{W}) = \rho_j$.
Consequently, problem~\eqref{eq:qcqp} is equivalent to the following rank‑constrained SDP:
\begin{equation}\label{eq:sdp_primal}
    \begin{aligned}
         \min_{\boldsymbol{W}}  &\ \textnormal{tr}(\boldsymbol{Q}\boldsymbol{W}) \\
    \textnormal{s.t.} \ & \textnormal{tr}(\boldsymbol{H}_j\boldsymbol{W}) = \rho_j, \ \forall j = 1, 2, ..., h + 1, \\
    & \boldsymbol{W} \succeq 0, \ \boldsymbol{W}^\top = \boldsymbol{W}, \\ &\textnormal{rank}(\boldsymbol{W}) = 1.
    \end{aligned}
\end{equation}
Here, $\boldsymbol{W} \succeq 0$ means that $\boldsymbol{W}$ is positive semidefinite.
In this formulation, the sole non‑convexity arises from the rank‑1 constraint. Removing it yields a convex SDP relaxation, stated formally below:
\begin{proposition}
The convex semidefinite program~(SDP)
\begin{equation}\label{eq:sdp}
    \begin{aligned}
         \min_{\boldsymbol{W}}  &\ \textnormal{tr}(\boldsymbol{Q}\boldsymbol{W}) \\
    \textnormal{s.t.} \ & \textnormal{tr}(\boldsymbol{H}_j\boldsymbol{W}) = \rho_j, \ \forall j = 1, 2, ..., h + 1, \\
    & \boldsymbol{W} \succeq 0, \ \boldsymbol{W}^\top = \boldsymbol{W}.
    \end{aligned}
\end{equation}
is a convex relaxation of problem~\eqref{eq:sdp_primal}.
\end{proposition}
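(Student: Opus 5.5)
To show that \eqref{eq:sdp} is a convex relaxation of \eqref{eq:sdp_primal}, I will check two things. First, the feasible set of \eqref{eq:sdp} is convex and the objective is convex. Second, every feasible point of \eqref{eq:sdp_primal} is feasible for \eqref{eq:sdp} with the same objective value. From these I will deduce the lower-bound relation between the two optimal values, which is what makes the a posteriori certificate meaningful.

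\emph{Convexity.} The objective $\textnormal{tr}(\boldsymbol{Q}\boldsymbol{W})$ is linear in $\boldsymbol{W}$, hence convex. Each constraint $\textnormal{tr}(\boldsymbol{H}_j\boldsymbol{W})=\rho_j$, $j=1,\dots,h+1$, is an affine equality, so it defines an affine subspace of $\mathcal{S}^{133}$. The symmetric PSD matrices form the cone $\mathcal{S}_+^{133}$, which is convex: for $\boldsymbol{W}_1,\boldsymbol{W}_2\succeq 0$ and $\lambda\in[0,1]$ and any $\boldsymbol{v}$,
\begin{align}
\boldsymbol{v}^\top(\lambda\boldsymbol{W}_1+(1-\lambda)\boldsymbol{W}_2)\boldsymbol{v}=\lambda\boldsymbol{v}^\top\boldsymbol{W}_1\boldsymbol{v}+(1-\lambda)\boldsymbol{v}^\top\boldsymbol{W}_2\boldsymbol{v}\geq 0 .
\end{align}
The feasible set of \eqref{eq:sdp} is the intersection of these convex sets, so it is convex. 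Therefore \eqref{eq:sdp} is a linear objective minimized over affine constraints and the PSD cone, i.e. a standard SDP.

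\emph{Relaxation.} The constraints of \eqref{eq:sdp} are exactly those of \eqref{eq:sdp_primal} with $\textnormal{rank}(\boldsymbol{W})=1$ removed, and the objective is unchanged. Hence the feasible set $\mathcal{F}_{\mathrm{primal}}$ of \eqref{eq:sdp_primal} is contained in the feasible set $\mathcal{F}_{\mathrm{SDP}}$ of \eqref{eq:sdp}. Taking the infimum of the same function over a larger set gives
\begin{align}
p^*_{\mathrm{SDP}}\le p^*_{\mathrm{primal}}.
\end{align}
The paper established in the preceding subsection that \eqref{eq:sdp_primal} is equivalent to \eqref{eq:qcqp} via $\boldsymbol{W}=\boldsymbol{w}\boldsymbol{w}^\top$. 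Combining this with Propositions~\ref{prop:state_linear} and~\ref{prop:SO3_QC}, which give equivalence with \eqref{eq:coord_ori}, the SDP optimum $p^*_{\mathrm{SDP}}$ is a global lower bound on the original coordinate calibration cost.

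\emph{Tightness and recovery.} If the SDP optimizer $\boldsymbol{W}^*$ has rank one, write $\boldsymbol{W}^*=\boldsymbol{w}^*\boldsymbol{w}^{*\top}$. Then $\boldsymbol{W}^*$ is feasible for \eqref{eq:sdp_primal}, so the inequality above holds with equality. In that case $\boldsymbol{w}^*$, with its sign fixed by the last entry, is globally optimal for \eqref{eq:qcqp}. If $\boldsymbol{W}^*$ has higher rank, I will extract a candidate by rounding, i.e. taking the leading eigenvector and projecting each rotation block onto SO(3). The gap between the candidate's cost and $p^*_{\mathrm{SDP}}$ then bounds its suboptimality.

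\emph{Main obstacle.} The formal argument is elementary, so the only point requiring care is the equivalence between \eqref{eq:qcqp} and the rank-constrained problem \eqref{eq:sdp_primal}. I must check that a PSD, rank-one, symmetric $\boldsymbol{W}$ can always be factored as $\boldsymbol{w}\boldsymbol{w}^\top$ rather than $-\boldsymbol{w}\boldsymbol{w}^\top$; positive semidefiniteness rules out the negative sign. I must also check that the sign ambiguity $\boldsymbol{w}\mapsto-\boldsymbol{w}$ leaves $\boldsymbol{W}$ unchanged, so it causes no inconsistency.
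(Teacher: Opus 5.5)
Your argument is correct and is essentially the paper's: the paper justifies the relaxation in Step~1 of the proof of Theorem~\ref{theorem:tightness}. There it lifts any feasible $\boldsymbol{w}$ of \eqref{eq:qcqp} to $\boldsymbol{W}=\boldsymbol{w}\boldsymbol{w}^\top$ with equal cost, which is the same feasible-set inclusion you get by dropping $\textnormal{rank}(\boldsymbol{W})=1$, and your explicit convexity check (PSD cone plus affine constraints) fills in what the paper only asserts. One side remark should be softened: the lower bound on \eqref{eq:coord_ori} needs only the one-way lifting, which is just as well, because the quadratic constraints on $\boldsymbol{R}_z^\top\otimes\boldsymbol{R}_y$ in the proof of Proposition~\ref{prop:SO3_QC} only force $\boldsymbol{R}_z\in\mathrm{O}(3)$, so \eqref{eq:qcqp} is a relaxation of \eqref{eq:coord_ori} rather than equivalent to it.
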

Problem~\eqref{eq:sdp} possesses a key advantage: as a convex optimization problem, it can be solved with high efficiency and global optimality by numerous off-the-shelf tools, e.g., the solvers in \texttt{CVXPY} library~\cite{Diamond-JMLR2016}. 
However, a central concern is that our relaxation may not be tight, which could introduce a gap between the SDP optimum and the true optimum of the original problem.
Fortunately, the SDP framework provides a powerful mechanism for a \textit{posteriori} certification. The following section formalizes this certification and details the complete procedure for our solution recovery.

\subsection{Certifiably Correct Solution}

Let $\boldsymbol{W}^*$ denote the obtained optimal solution of the convex SDP \eqref{eq:sdp}. The critical subsequent steps are to recover a feasible candidate solution for the original QCQP in problem~\eqref{eq:qcqp} and to assess its quality. 
This process is grounded in the following verifiable tightness condition of our SDP relaxation.
\begin{theorem}[\textbf{Tightness Guarantee of Relaxation}]\label{theorem:tightness}
Let $\boldsymbol{W}^*$ be the optimal solution of problem~\eqref{eq:sdp}. If $\operatorname{rank}(\boldsymbol{W}^*) = 1$, then the SDP relaxation is \textit{tight}. In this case, $\boldsymbol{W}^* = \boldsymbol{w}^*\boldsymbol{w}^{*\top}$ for some vector $\boldsymbol{w}^*$, which constitutes a globally optimal solution to the original problem~\eqref{eq:qcqp}.
\end{theorem}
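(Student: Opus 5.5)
The plan is the standard sandwich argument between the relaxation and the rank-constrained problem. Write $p^\star$ for the optimal value of the QCQP~\eqref{eq:qcqp}, which by construction equals that of the rank-constrained program~\eqref{eq:sdp_primal}, and $d^\star = \textnormal{tr}(\boldsymbol{Q}\boldsymbol{W}^*)$ for the optimal value of the relaxation~\eqref{eq:sdp}.

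\emph{First step.} I would record that $d^\star \le p^\star$. For any feasible $\boldsymbol{w}$ of~\eqref{eq:qcqp}, the matrix $\boldsymbol{W}=\boldsymbol{w}\boldsymbol{w}^\top$ is symmetric and PSD. It also satisfies $\textnormal{tr}(\boldsymbol{H}_j\boldsymbol{W})=\boldsymbol{w}^\top\boldsymbol{H}_j\boldsymbol{w}=\rho_j$ for $j=1,\dots,h+1$, using the trace identity already used in the text. Hence it is feasible for~\eqref{eq:sdp} with the same objective value. This is exactly the preceding proposition that~\eqref{eq:sdp} is a relaxation, so I would simply invoke it.

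\emph{Second step.} I would use the rank hypothesis. If $\operatorname{rank}(\boldsymbol{W}^*)=1$ and $\boldsymbol{W}^*\succeq 0$, the spectral decomposition gives $\boldsymbol{W}^*=\lambda \boldsymbol{u}\boldsymbol{u}^\top$ with $\lambda>0$ and $\|\boldsymbol{u}\|_2=1$. Setting $\boldsymbol{w}^*=\sqrt{\lambda}\,\boldsymbol{u}$ gives $\boldsymbol{W}^*=\boldsymbol{w}^*\boldsymbol{w}^{*\top}$. Reading the constraints of~\eqref{eq:sdp} backwards, $\boldsymbol{w}^{*\top}\boldsymbol{H}_j\boldsymbol{w}^*=\textnormal{tr}(\boldsymbol{H}_j\boldsymbol{W}^*)=\rho_j$ for all $j$, including the homogenizing constraint, which forces $\boldsymbol{w}^*_{(133)}=\pm1$. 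So $\boldsymbol{w}^*$ is feasible for~\eqref{eq:qcqp}, and therefore $p^\star\le \boldsymbol{w}^{*\top}\boldsymbol{Q}\boldsymbol{w}^*=\textnormal{tr}(\boldsymbol{Q}\boldsymbol{W}^*)=d^\star$.

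\emph{Conclusion.} Combining the two inequalities gives $p^\star=d^\star$, so the relaxation is tight and $\boldsymbol{w}^*$ attains the global minimum of~\eqref{eq:qcqp}. Two remarks close the argument. First, on the sign: $-\boldsymbol{w}^*$ gives the same $\boldsymbol{W}^*$, is also feasible and has the same cost, so one may flip the sign to make the last entry equal to $+1$ without loss. The physical blocks are then read off consistently, as discussed after~\eqref{eq:qcqp}. Second, through Proposition~\ref{prop:SO3_QC} and Proposition~\ref{prop:state_linear}, $\boldsymbol{w}^*$ yields rotations in SO(3) and translations that are globally optimal for~\eqref{eq:coord_ori}.

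\emph{Main obstacle.} The step needing the most care is the claim that feasibility of $\boldsymbol{w}^*$ for the QCQP really recovers a point of the original problem. This rests entirely on the earlier proposition that the SO(3) constraints, together with the lifted Kronecker blocks of $\boldsymbol{w}$, are exactly encoded by the quadratic constraints $\boldsymbol{H}_j$, which I would take as given. Everything else is the routine rank-one factorization and the trace identity.
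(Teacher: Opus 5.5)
Your proposal is correct and is essentially the paper's own three-step sandwich argument: lifting any feasible $\boldsymbol{w}$ to $\boldsymbol{w}\boldsymbol{w}^\top$ gives $p^\star_{\mathrm{SDP}}\le p^\star_{\mathrm{QCQP}}$, factoring the rank-one $\boldsymbol{W}^*$ gives a QCQP-feasible point attaining $p^\star_{\mathrm{SDP}}$, and the two bounds combine to equality. Your closing transfer of optimality to \eqref{eq:coord_ori} is not needed for the statement, which concerns only \eqref{eq:qcqp}; moreover, the constraints of Proposition~\ref{prop:SO3_QC} on $\boldsymbol{K}_{zy}$ only force $\boldsymbol{K}_{zy}=\boldsymbol{\Lambda}\otimes\boldsymbol{R}_y$ with $\boldsymbol{\Lambda}\in\mathrm{O}(3)$, so that extension would also require checking $\det\boldsymbol{R}_z=+1$ for the recovered solution.
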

The detailed proof is given in the appendix. Note that this property of SDP relaxation is well established in the optimization areas~\cite{Bandeira-CRM2016, Cifuentes-MP2022, Rosen-IJRR2019}.
Back to our analysis, if the solved $\boldsymbol{W}^*$ admits the rank-1 constraint, then a global minimizer $\boldsymbol{w}^*$ of the original QCQP \eqref{eq:qcqp} can be extracted directly via Cholesky decomposition. 
In practice, particularly under significant measurement noise, $\boldsymbol{W}^*$ may not
strictly satisfy the rank-1 condition. 
In this case, we propose to compute a candidate $\boldsymbol{w}$ from the best rank-1 approximation of $\boldsymbol{W}^*$. Then the recovered candidate is projected onto the original constraint manifold of the coordinate parameters ${\boldsymbol{X}, \boldsymbol{Y}, \boldsymbol{Z}}$ to ensure strict feasibility~(e.g., via orthogonal projection to $\operatorname{SO}(3)$ for rotation components).

For simplicity, we denote the above solved state vector by $\boldsymbol{w}^{\star}$.
To certify the quality of $\boldsymbol{w}^{\star}$, we exploit the property that the SDP optimal value $p^*_{\text{SDP}} = \operatorname{tr}(\boldsymbol{Q}\boldsymbol{W}^*)$ provides a global lower bound for the original non-convex problem~\eqref{eq:qcqp}. The sub-optimality gap of $\hat{\boldsymbol{w}}$ can thus be quantified as:
\begin{align}
    \eta = \frac{\boldsymbol{w}^{\star\top}\boldsymbol{Q}\boldsymbol{w}^\star - p^*_{\text{SDP}}}{p^*_{\text{SDP}}}.
\end{align}
A small value of $\eta$ certifies that $\boldsymbol{w}^\star$ is nearly optimal. This a posteriori certification is pivotal: even when the relaxation is not tight ($\eta \gg 0$), a small $\eta$ guarantees that the initial coordinate solution is reliable and lies within a strong basin of attraction for the subsequent holistic optimization, thereby ensuring robust convergence against large kinematic noise.

In fact, in our simulation and real-data experiments~(see Section~\ref{sec:exper}), \textit{we observe that $\boldsymbol{W}^\star$ satisfies the rank-one condition in the vast majority of test cases}, i.e., $\operatorname{rank}(\boldsymbol{W}^\star)=1$.
This empirical evidence demonstrates that the proposed SDP relaxation is typically tight for most cases.
In the rare cases where $\boldsymbol{W}^\star$ is not exactly rank-one (most notably under very large measurement noise), we still consistently obtain a very small sub-optimality gap, e.g., $\eta < 10^{-3}$.
This indicates that the recovered $\boldsymbol{w}^\star$ remains near-globally optimal and hence provides a high-quality and trustworthy initialization.
Overall, these observations corroborate that our SDP-based initialization solver offers not only empirical tightness in most scenarios, but also a reliable \textit{a posteriori} certificate in challenging regimes, thereby furnishing a robust starting point for our joint optimization.

\section{Experiment Results}
\label{sec:exper}

To verify the effectiveness of the proposed calibration framework, we compare our approach with several baselines and state-of-the-arts regarding dual-arm robot calibration. The evaluation consists of both simulation analysis~(see Section~\ref{subsec:sim_exper}) and real-world experiments~(see Section~\ref{subsec:real_exper}).

The compared methods include three coordinate-only methods and one coordinate-kinematic joint calibration method~(\textit{in this paper, the joint methods are marked with \#}) that all take identical visual measurements as our approach:
\begin{enumerate}[label=(\alph*)]
    \item \textsf{Wu}~\cite{Wu-TRO2016} that involves quaternion-based rotation initialization and linear approximation iterative optimization.
    \item \textsf{Wang}~\cite{Wang-TRO2021} that initializes via Kronecker products and conducts iterative refinement with orthogonal constraints.
    \item \textsf{Jiang}~\cite{Jiang-TRO2023} that constructs a convex optimization problem using Lie algebra representations of the coordinates.
    \item \textsf{Chen}\#~\cite{Chen-TMECH2025} that incorporates kinematic errors via an arm-wise decoupled error model and employs a hierarchical identification technique for calibration. 
\end{enumerate}
As our framework involves a coordinate-only initialization, for comprehensive evaluation, we denote our initialization part by \textsf{SDP-Ini} and the whole approach by \textsf{Unified}\#.

\begin{figure}[!t]
    \centering
    \includegraphics[width=0.42\textwidth]{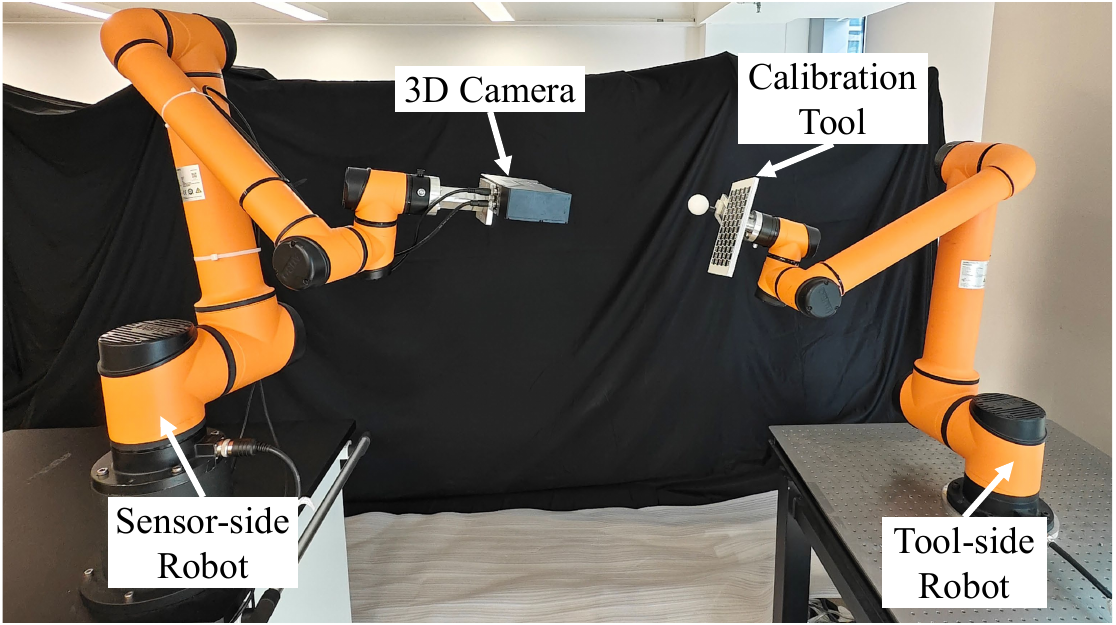}
    \\[-0.4em]
    \caption{Real-world experiments are conducted on a dual-arm robot system that contains an AUBO-i10 arm equipped with an industrial 3D camera and an AUBO-i10 arm equipped with a calibration tool. The calibration tool consists of a high-precision ChArUco chessboard and a standard ceramic ball.} 
    \label{fig:real_system}
\end{figure}

\textbf{Implementation Details}.
All compared methods are implemented using the provided code or faithfully re-implemented following the original papers.
In the coordinate initialization part of our algorithm, we set the weighting coefficient $\alpha$ in \eqref{eq:coord_ori} to 1.0, where the translational term is measured in meters. 
Besides, to solve the SDP, we employ the Splitting Conic Solver~(SCS)~\cite{Donoghue-JOTA2016} in the \texttt{CVXPY} library~\cite{Diamond-JMLR2016}.
In the subsequent unified optimization, we employ a damped Gauss--Newton solving scheme
with a small damping factor $\lambda=10^{-3}$ to improve numerical stability.
Our joint optimization terminates when $\|\delta\boldsymbol{\xi}^{*(t)}\|_{\infty} \le 10^{-3}$, with a maximum of $100$ iterations to avoid rare ill-conditioned cases.
For simulation analysis in the experiments, we employ the PyBullet simulator~\cite{Coumans-pybullet2016} and construct a virtual dual-robot system with two UR5 Arms by the Universal Robots~(see Fig.~\ref{fig:ill_dualrobot}). All ground-truth system parameters in the simulated robot system can be easily obtained.
In real-world experiments, practical calibration is conducted on a dual-robot platform with two AUBO-i10 arms with measured repeatability $\pm 0.03$ mm by the AUBO Robotics~(see Fig.~\ref{fig:real_system}).
The sensor-side robot arm is equipped with an industrial 3D camera that captures 2448$\times$2048 rectified grayscale images and provides point clouds with a measurement accuracy of $\pm$0.03 mm at a reference distance of 0.4 m.
The tool-side robot arm is equipped with a calibration tool consisting of a high-precision ChArUco board and a standard ceramic ball with known diameter $d = 50.8228$ mm. 
When collecting the calibration data, we use the captured chessboard images to estimate the tool pose in the sensor frame, denoted as $\boldsymbol{B}_i^{*}$. For evaluation, we employ additional chessboard images and point clouds of the standard ball. 
All the experiments are conducted on a laptop equipped with an Intel Core i7-10875H CPU@2.3 GHz, and 32GB of RAM.

\begin{table}[!t]
\centering
    \caption{Distribution of robot end-effector pose error caused by different levels of kinematic errors in the simulation analysis~(c.f., Section~\ref{subsec:sim_exper}).}
    \vspace{-0.5em}
    \label{tab:error_kine}
    \footnotesize
     \renewcommand{\tabcolsep}{2pt} 
    \renewcommand\arraystretch{1.2}
     \begin{tabular}{c|c|c}
         \Xhline{1pt}
         \multirow{2}{*}{\makecell{Kinematic \\ Error Level}} & Rotation Deviation~(deg.)  & Translation Deviation~(mm)   \\
         & mean $\pm$ std & mean $\pm$ std \\
         \Xhline{0.5pt}
         low~(\textit{L}) & 0.054 $\pm$ 0.022 & 0.417 $\pm$ 0.161 \\
         medium-low~(\textit{ML}) & 0.103 $\pm$ 0.043 & 1.083 $\pm$ 0.412 \\
         medium~(\textit{M}) & 0.264 $\pm$ 0.089 & 1.818 $\pm$ 0.763 \\
         medium-high~(\textit{MH})  & 0.427 $\pm$ 0.143 & 2.861 $\pm$ 1.032 \\
         high~(\textit{H})  & 0.692 $\pm$ 0.278 & 5.567 $\pm$ 0.238 \\
         quite-high~(\textit{QH})  & 1.423 $\pm$ 0.509 & 8.297 $\pm$ 3.659 \\
         \Xhline{1pt}
     \end{tabular}
\end{table}

\begin{table}[!t]
\centering
    \caption{Distribution of measurement noise at different levels in the simulation analysis~(c.f., Section~\ref{subsec:sim_exper}).}
    \vspace{-0.5em}
    \label{tab:noise_mea}
    \footnotesize
     \renewcommand{\tabcolsep}{5.3pt} 
    \renewcommand\arraystretch{1.2}
     \begin{tabular}{c|c|c}
         \Xhline{1pt}
         \multirow{2}{*}{\makecell{Measurement \\ Noise Level}} & Rotation Noise~(deg.)  & Translation Noise~(mm)   \\
         & mean $\pm$ std & mean $\pm$ std \\
         \Xhline{0.5pt}
         low~(\textit{L}) & 0.032 $\pm$ 0.013 & 0.080 $\pm$ 0.034\\
         medium-low~(\textit{ML}) & 0.080 $\pm$ 0.034 & 0.160 $\pm$ 0.067 \\
         medium~(\textit{M}) & 0.128 $\pm$ 0.054 & 0.479 $\pm$ 0.202 \\
         medium-high~(\textit{MH})  & 0.160 $\pm$ 0.067 & 0.798 $\pm$ 0.337 \\
         high~(\textit{H})  & 0.239 $\pm$ 0.101 & 1.277 $\pm$ 0.539 \\
         quite-high~(\textit{QH})  & 0.319 $\pm$ 0.135 & 1.596 $\pm$ 0.673 \\
         \Xhline{1pt}
     \end{tabular}
\end{table}

\subsection{Simulation Analysis}
\label{subsec:sim_exper}

\textbf{Data Generation}.
In the simulator, we synchronically adjust the attitudes of both robots within their respective workspaces. 
The invalid robot attitudes, e.g., the joint variable of any robot joint stays near zero points, should be deleted.
We further enforce sample diversity by requiring sufficiently large pose/joint differences between any two consecutive samples.
To simulate the kinematic error, we perturb the nominal kinematic parameters of each joint by a small twist increment and treat the perturbed parameters as ground truth. For example, for the $k$-th joint of the sensor-side robot, we define
\begin{align}
    \exp({}_{\mathrm{gt}}\boldsymbol{\xi}_a^k) = \exp({}_{\mathrm{nom}}\boldsymbol{\xi}_a^k)\exp( \Delta\boldsymbol{\xi}_a^k).
\end{align}
We similarly obtain $\{{}_{\mathrm{gt}}\boldsymbol{\xi}_c^{k}\}_{k=1}^{n}$ of the tool-side robot. Given $\{{}_{\mathrm{gt}}\boldsymbol{\xi}_a^{k}\}_{k=1}^{n}$, $\{{}_{\mathrm{gt}}\boldsymbol{\xi}_c^{k}\}_{k=1}^{n}$ and the preset static coordinate transformations $\{\boldsymbol{X},\boldsymbol{Y},\boldsymbol{Z}\}$, we generate the ground-truth relative measurements ${}_{\mathrm{gt}}\boldsymbol{B}$ through the dual-arm kinematic chain.
To simulate measurement noise, we further corrupt each ${}_{\mathrm{gt}}\boldsymbol{B}$ by a small Gaussian-distributed perturbation $\Delta\boldsymbol{B}$, i.e.,
\begin{align}
    \boldsymbol{B} = {}_{\mathrm{gt}}\boldsymbol{B} \Delta\boldsymbol{B}.
\end{align}
We design six discrete levels for both kinematic perturbations and measurement noise, categorized as low~(\textit{L}), medium-low~(\textit{ML}), medium~(\textit{M}), medium-high~(\textit{MH}), high~(\textit{H}), and quite-high~(\textit{QH}). 
For clarity, Table~\ref{tab:error_kine} and Table~\ref{tab:noise_mea} summarize the resulting pose disturbances (mean $\pm$ std) at each level in terms of rotational and translational components.
Note that for kinematic perturbations, we do not report the raw twist increments $\Delta\boldsymbol{\xi}$ directly; instead, we characterize each level by the induced robot end-effector pose error between the nominal and ground-truth forward kinematics.
This pose-level characterization is more interpretable and allows a more direct comparison.
In our settings, the first three levels (\textit{L} / \textit{ML} / \textit{M}) correspond to typical industrial conditions (moderate kinematic imperfections and common camera-based measurement accuracies), while the last three levels (\textit{MH} / \textit{H} / \textit{QH}) create more challenging scenarios.

\begin{figure}[!t]
    \centering
    \includegraphics[width=0.475\textwidth]{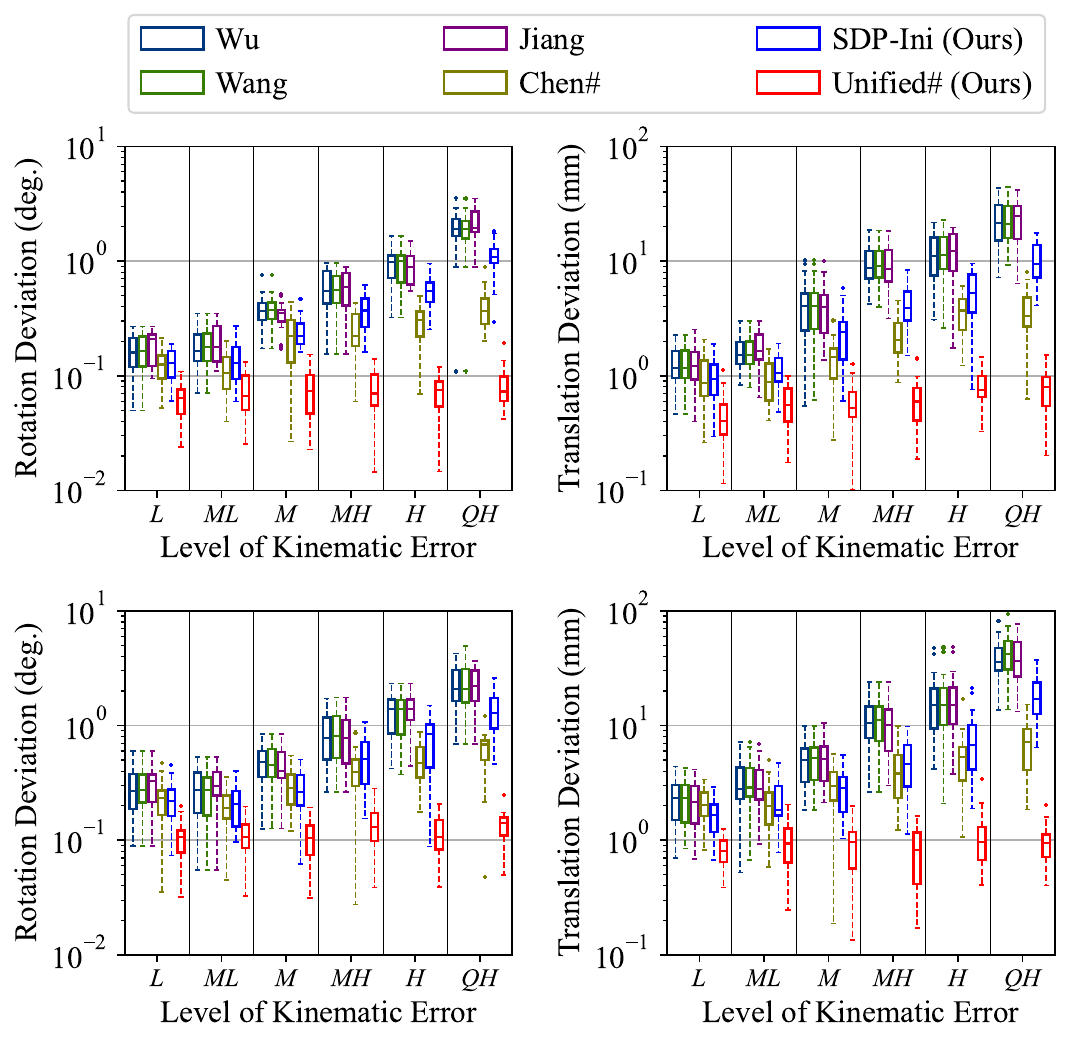}
    \\[-0.3em]
    \makebox[0.24\textwidth]{\footnotesize \quad \quad \quad \ \ (a-1)}
    \makebox[0.24\textwidth]{\footnotesize \quad \quad \ \ (a-2)}
    \\[0.1em]
    \includegraphics[width=0.475\textwidth]{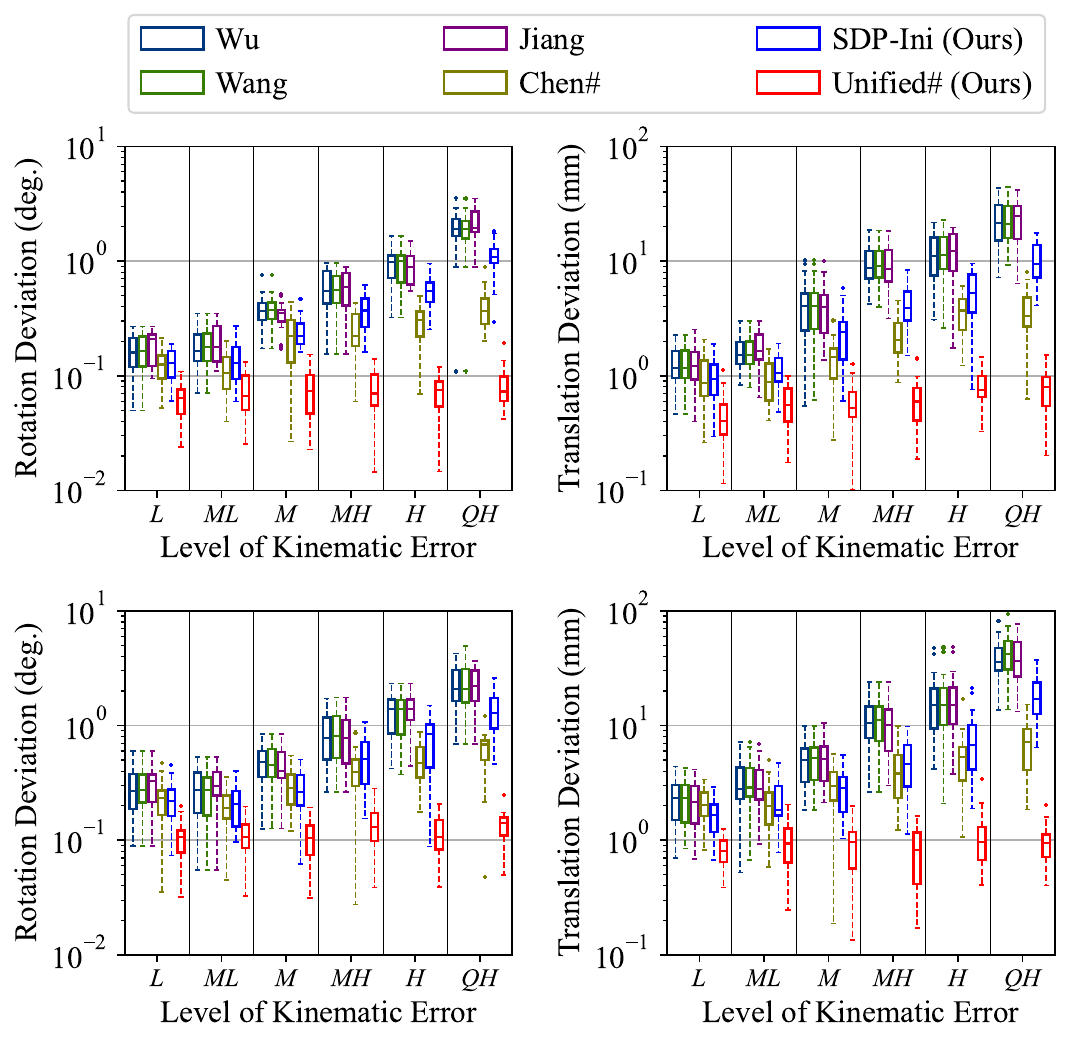}
    \\[-0.4em]
    \makebox[0.24\textwidth]{\footnotesize \quad \quad \quad \ (b-1)}
    \makebox[0.24\textwidth]{\footnotesize \quad \quad \ \ (b-2)}
    \\
    \vspace{-0.25em}
    \caption{Simulated evaluation results on different levels of kinematic errors under (a) medium and (b) high measurement noises~(see Section~\ref{subsec:sim_exper}1. Each box plot summarizes the error distribution over 40 distinct test samples.}
    \label{fig:kin_levels}
\end{figure}

\textbf{Evaluation Metrics}.
To evaluate both coordinate-only and joint coordinate-kinematic calibration methods under a unified criterion, we measure the \textit{closed-loop kinematic consistency} of the dual-arm transmission chain. For the $i$-th test sample, we define the closed-loop transformation error as
\begin{align}
    \boldsymbol{E}_i \doteq
(\boldsymbol{A}_i \boldsymbol{X}\boldsymbol{B}_i)^{-1} \boldsymbol{Y}\boldsymbol{C}_i\boldsymbol{Z}
\in \mathrm{SE}(3).
\label{eq:Ei_def}
\end{align}
$\boldsymbol{E}_i$ can be seen as the \emph{transformation deviation} that measures how much the calibrated chain violates the ideal loop equation $\boldsymbol{A}_i\boldsymbol{X}\boldsymbol{B}_i = \boldsymbol{Y}\boldsymbol{C}_i\boldsymbol{Z}$. 
For a \textit{coordinate-only} method, we use the calibration outputs ${\boldsymbol{X}, \boldsymbol{Y}, \boldsymbol{Z}}$, while $\boldsymbol{A}_i$ and $\boldsymbol{C}_i$ are directly obtained from the robot controllers (i.e., computed using the nominal forward kinematics with controller-provided kinematic parameters).
For a \textit{joint} calibration method, in addition to the optimized ${\boldsymbol{X}, \boldsymbol{Y}, \boldsymbol{Z}}$, we re-compute $\boldsymbol{A}_i$ and $\boldsymbol{C}_i$ using the newly estimated kinematic parameters together with the joint variables obtained from the robot controllers, so that the closed-loop error in \eqref{eq:Ei_def} evaluates the \emph{fully calibrated} system.
For better quantitative evaluation, we separate the transformation deviation $\boldsymbol{E}_i$ into rotational and translational components. By expanding $\boldsymbol{E}_i$ as
\begin{align}
    \boldsymbol{E}_i=
\begin{bmatrix}
\boldsymbol{R}_{i} & \boldsymbol{t}_{i} \\
\mathbf{0}^\top_3 & 1
\end{bmatrix},
\end{align}
we define the rotation and translation errors as
\begin{align}
e_{\boldsymbol{R}_i} & \doteq \angle(\boldsymbol{R}_{i})
=\arccos\left(\frac{\mathrm{tr}(\boldsymbol{R}_{i})-1}{2}\right),
\label{eq:rot_err} \\
e_{\boldsymbol{t}_i} &\doteq \|\boldsymbol{t}_{i}\|_2 ,
\label{eq:trans_err}
\end{align}
where $e_{\boldsymbol{R}_i}$ denotes the rotation deviation and $e_{\boldsymbol{t}_i}$ represents the translation deviation.

\vspace{0.5em}
\noindent \textit{1) Evaluation on Different Levels of Kinematic Errors:}
\vspace{0.2em}

We first evaluate how the competing calibration methods tolerate increasing robot kinematic errors.
Following the above data generation procedure,
we generate six levels of kinematic perturbations and test all methods under two representative measurement-noise settings: medium and high.
At each setup, we independently synthesize 80 measurement samples to perform the calibration,
and then generate another 40 fresh distinct samples for evaluation.
For each test sample, we compute the closed-loop deviation $\boldsymbol{E}_i$ in~\eqref{eq:Ei_def} and report its rotation/translation components
$e_{\boldsymbol{R}_i}$ and $e_{\boldsymbol{t}_i}$.
All statistics are aggregated and visualized as box plots in Fig.~\ref{fig:kin_levels}.

\textbf{Results}.
As shown in Fig.~\ref{fig:kin_levels}(a-1,a-2) for medium measurement noise and Fig.~\ref{fig:kin_levels}(b-1,b-2)  for high noise,
the performance of all \emph{coordinate-only} baselines~(\textsf{Wu}, \textsf{Wang}, \textsf{Jiang}) degrades consistently as the kinematic error level increases.
This trend is expected as the pose bias induced by kinematics cannot be compensated by optimizing coordinate parameters alone, leading to accumulated loop inconsistency. \textsf{Chen}\# is a \emph{joint} calibration method and therefore alleviates part of the kinematic-induced bias.
However, their arm-wise decoupled correction is not explicitly constrained by the full dual-arm transmission chain, and thus can easily enlarge the accumulated errors when propagated through the coupled pose loop, especially under high and quite-high kinematic perturbations.

In contrast, our \textsf{Unified}\# remains substantially more stable across all kinematic levels under both levels of measurement noise,
achieving the lowest average rotation and translation deviations.
The advantage becomes more pronounced at the high kinematic error regimes (\textit{H} and \textit{QH}),
where coordinate-only solutions exhibit rapid error growth, and Chen\# shows increasing dispersion.
It is also worth noting that, under the same measurement-noise level, the performance of \textsf{Unified}\# varies only mildly as the kinematic level increases.
This is because \textsf{Unified}\# explicitly models and jointly updates both arms' kinematic parameters together with the coordinate transformations under a unified loop-consistency objective, so that kinematic-induced systematic biases are largely absorbed by our kinematic parameter refinement rather than being accumulated along the dual-arm pose chain.

Moreover, our \textsf{SDP-Ini} consistently yields lower deviations than the coordinate-only baselines across all kinematic levels, indicating that the SDP relaxation provides a more robust coordinate initialization even when the robot-provided poses are biased by kinematic errors. 
Interestingly, when the kinematic error is relatively small (e.g., levels \textit{L} - \textit{M}), \textsf{SDP-Ini} and the joint calibration baseline \textsf{Chen}\# often achieve comparable deviations.
This suggests that in the mild kinematic perturbation regime, the dual-arm loop inconsistency can be reduced by either an accurate coordinate estimate alone or the additional kinematic updates in \textsf{Chen}\#. 
Nevertheless, our \textsf{Unified}\# consistently preserves its advantage and becomes increasingly superior as the kinematic errors grow.

Finally, comparing the medium and high measurement noise settings, all methods exhibit slightly degraded accuracy, which is expected when the measured $\boldsymbol{B}_i$ becomes less reliable.
Thus, we further analyze the impact of measurement noise in the next experiment.

\begin{figure}[!t]
    \centering
    \includegraphics[width=0.475\textwidth]{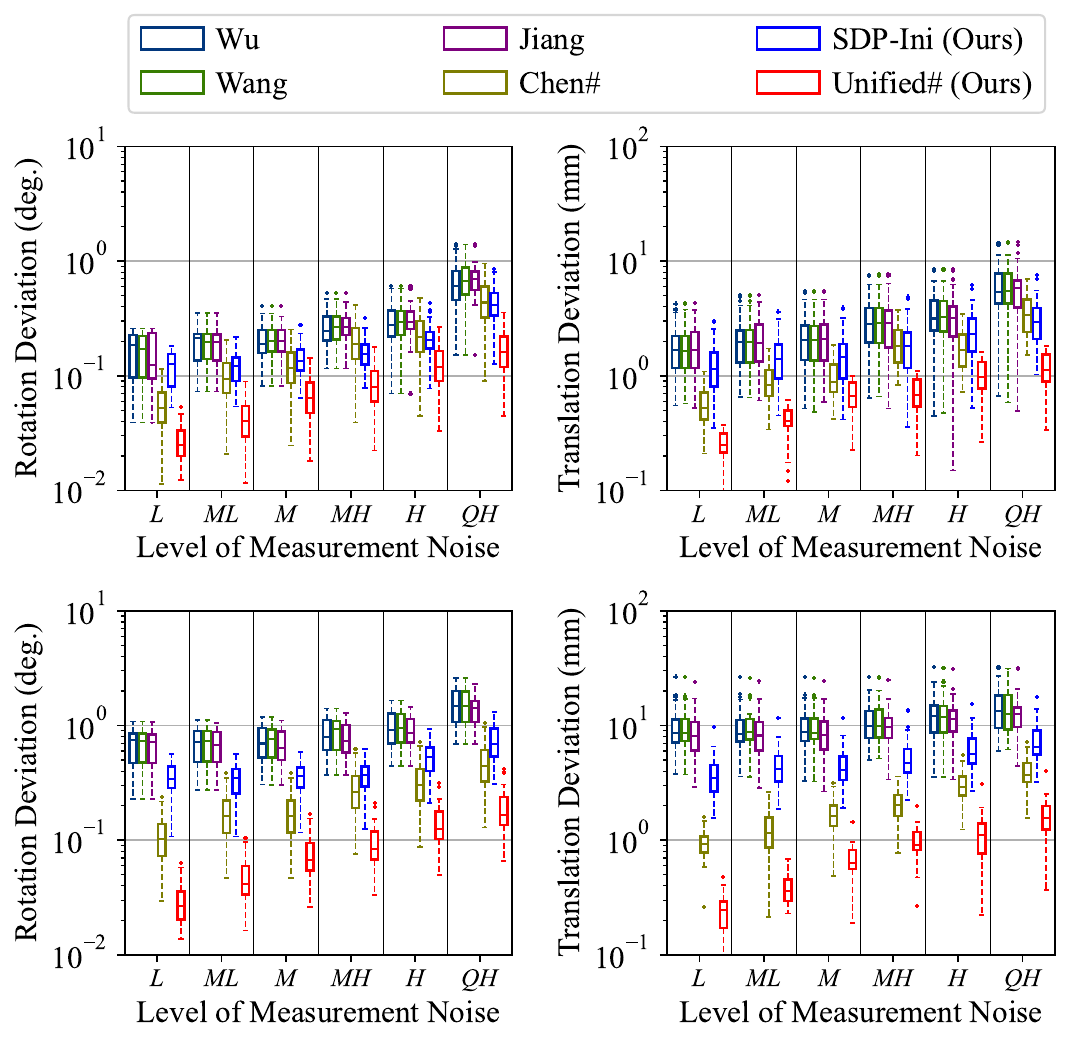}
    \\[-0.3em]
    \makebox[0.24\textwidth]{\footnotesize \quad \quad \quad \ \ (a-1)}
    \makebox[0.24\textwidth]{\footnotesize \quad \quad \ \ (a-2)}
    \\[0.1em]
    \includegraphics[width=0.475\textwidth]{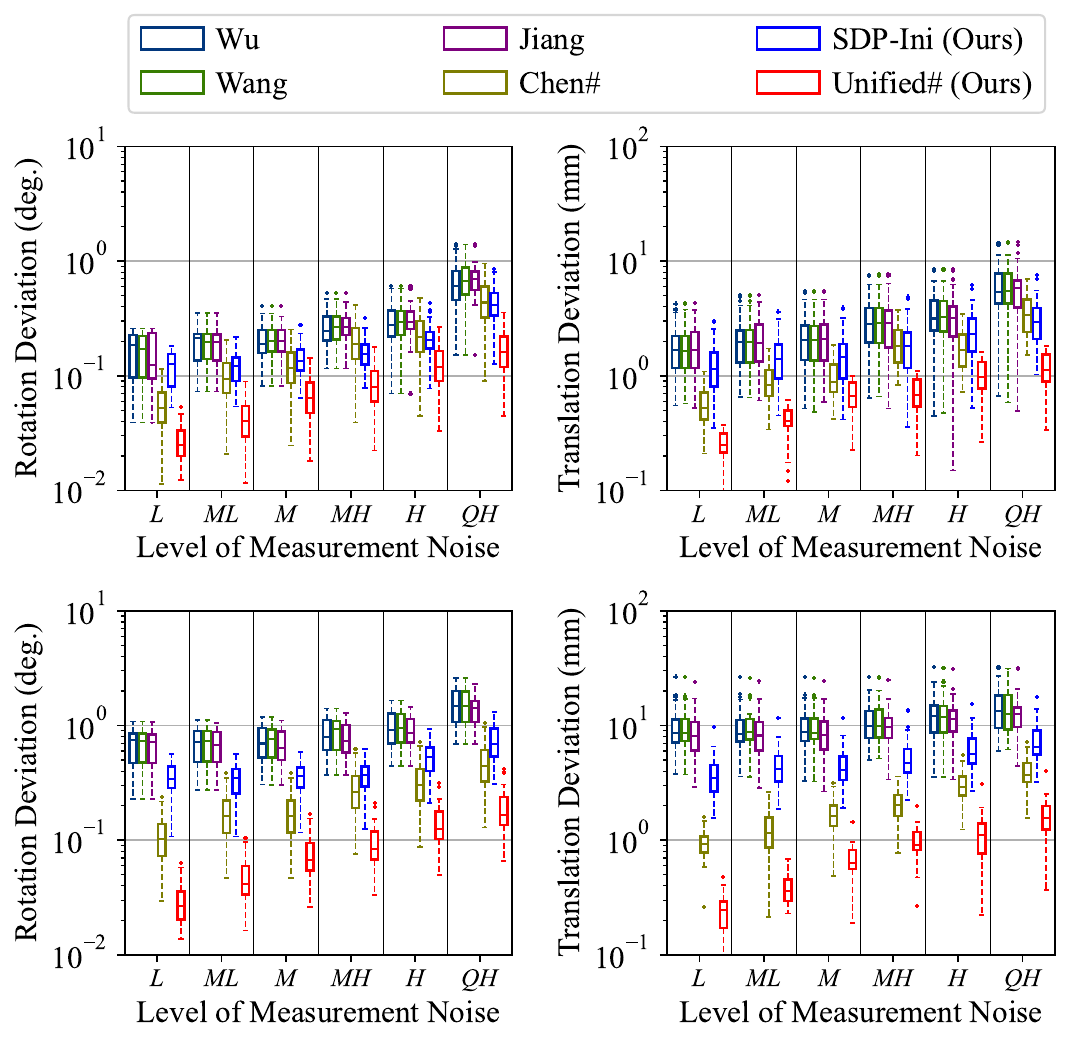}
    \\[-0.4em]
    \makebox[0.24\textwidth]{\footnotesize \quad \quad \quad \ (b-1)}
    \makebox[0.24\textwidth]{\footnotesize \quad \quad \ \ (b-2)}
    \\
    \vspace{-0.25em}
    \caption{Simulated evaluation results on different levels of measurement noises under (a) medium-low and (b) medium-high kinematic errors~(see Section~\ref{subsec:sim_exper}2). Each box plot summarizes the error distribution over 40 distinct test samples.}
    \label{fig:mea_levels}
\end{figure}

\vspace{0.5em}
\noindent \textit{2) Evaluation on Different Levels of Measurement Noises:}
\vspace{0.2em}

In this experiment, we sweep six measurement-noise levels in Table~\ref{tab:noise_mea} while fixing the kinematic perturbation to two representative settings: medium-low and medium-high.
At each setup, we generate 80 samples for calibration and another 40 unseen samples for testing,
and evaluate the closed-loop deviations using \eqref{eq:rot_err} and \eqref{eq:trans_err}.
The resulting distributions are summarized as box plots in Fig.~\ref{fig:mea_levels}.

\textbf{Results}.
Fig.~\ref{fig:mea_levels} shows that our \textsf{Unified}\# consistently achieves the smallest average rotation and translation deviations across all measurement-noise levels,
demonstrating strong robustness to sensing perturbations.
Moreover, under the moderate kinematic error setting (Fig.~\ref{fig:mea_levels}(a-1,a-2)), \textsf{SDP-Ini} becomes increasingly competitive with the joint calibration method \textsf{Chen}\# as the measurement noise grows.
This is not surprising: when measurement noise is large, the calibration accuracy is progressively limited by the sensing uncertainty in $\boldsymbol{B}_i$,
and thus the marginal benefit of additional (separated) kinematic correction in \textsf{Chen}\# is reduced; in this regime, a strong coordinate solution provided by \textsf{SDP-Ini} can already mitigate the loop inconsistency to a comparable level.

A noteworthy phenomenon is that the \emph{joint} calibration methods (\textsf{Chen}\# and our \textsf{Unified}\#) exhibit larger variability across measurement-noise levels than the \emph{coordinate-only} baselines (\textsf{Wu}, \textsf{Wang}, \textsf{Jiang}) and our \textsf{SDP-Ini}.
This does not imply that joint calibration is less accurate; rather, it reflects the interaction between kinematic and measurement uncertainties.
When the kinematic errors dominate the loop inconsistency, changes in measurement noise have a weaker marginal effect on the final closed-loop deviation, so the apparent noise-induced fluctuation becomes less pronounced.
This can be observed by comparing panels Fig.~\ref{fig:mea_levels}(b) and Fig.~\ref{fig:mea_levels}(a): under the higher kinematic-error setting (b), the performance curves of all \textit{coordinate-only} methods are overall more ``compressed'' across measurement-noise levels,
i.e., the variation induced by measurement noise is relatively smaller than that in (a).

Despite these effects, \textsf{Unified\#} remains the best-performing method in all settings,
maintaining the lowest closed-loop deviations.
This highlights the benefit of jointly optimizing the coordinate transformations and both arms' kinematic parameters under a unified loop-consistency objective,
which mitigates kinematic-induced bias while retaining robustness against increasing measurement noise.

\begin{figure}[!t]
    \centering
    \includegraphics[width=0.475\textwidth]{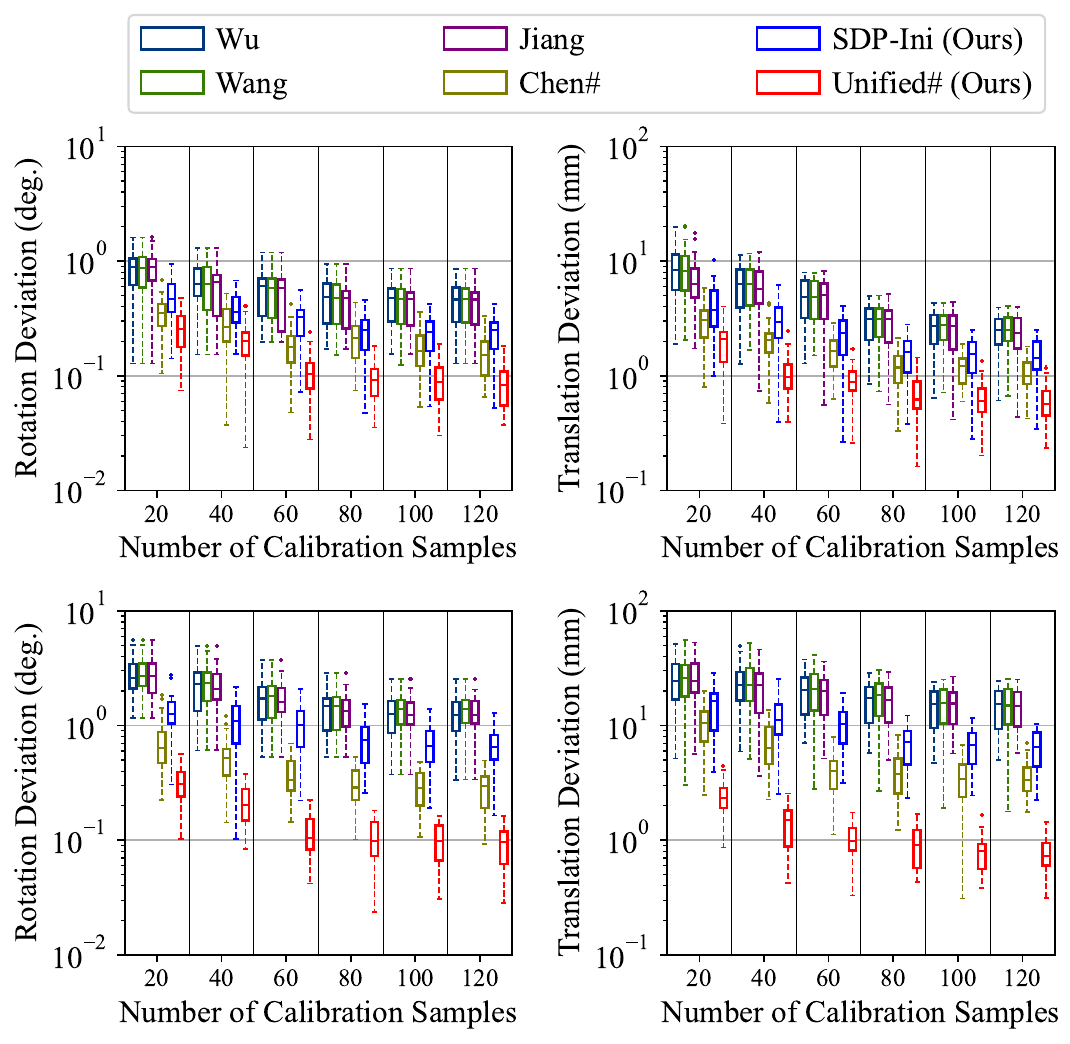}
    \\[-0.3em]
    \makebox[0.24\textwidth]{\footnotesize \quad \quad \quad \ \ (a-1)}
    \makebox[0.24\textwidth]{\footnotesize \quad \quad \ \ (a-2)}
    \\[0.1em]
    \includegraphics[width=0.475\textwidth]{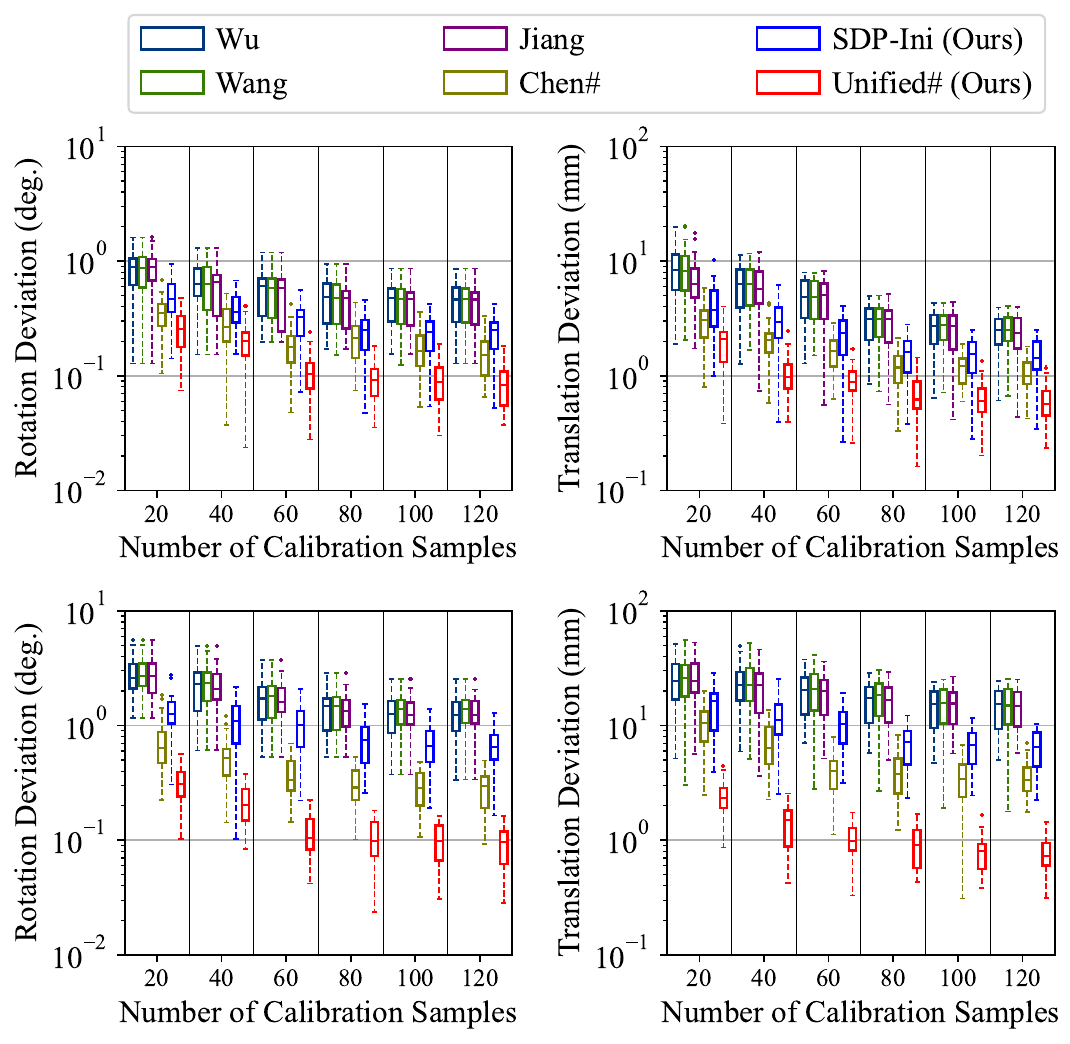}
    \\[-0.4em]
    \makebox[0.24\textwidth]{\footnotesize \quad \quad \quad \ (b-1)}
    \makebox[0.24\textwidth]{\footnotesize \quad \quad \ \ (b-2)}
    \\
    \vspace{-0.25em}
    \caption{Simulated evaluation results on different amounts of calibration samples under (a) medium and (b) high kinematic errors with medium-high measurement noises~(see Section~\ref{subsec:sim_exper}3). Each box plot summarizes the error distribution over 40 distinct test samples.}
    \label{fig:num_samples}
\end{figure}

\vspace{0.5em}
\noindent \textit{ 3) Evaluation on Different Number of Calibration Samples:}
\vspace{0.2em}

We further study how the amount of calibration data affects the accuracy and stability of different methods.
In this experiment, we vary the number of calibration samples as $m\in\{20,40,60,80,100,120\}$, while keeping the test protocol unchanged.
All methods are evaluated under a fixed medium-high measurement-noise setting, and we consider two kinematic-error regimes: medium and high.
For each setup, we use the generated $m$ samples for calibration and evaluate on $40$ additional unseen samples using the closed-loop deviations.
Fig.~\ref{fig:num_samples} summarizes the resulting rotation and translation deviations.

\textbf{Results}.
Fig.~\ref{fig:num_samples}(a-1,a-2) shows that, under medium kinematic errors, increasing $m$ generally reduces both the median deviations and the dispersion for all methods,
indicating that additional calibration data improves numerical stability and mitigates overfitting to individual noisy samples.
Notably, our \textsf{Unified}\# achieves consistently lower deviations even with as few as $m=20$ samples.
This suggests that our unified calibration can effectively exploit limited data.

Under high kinematic errors (see Fig.~\ref{fig:num_samples}(b-1,b-2)), the benefit of more samples becomes even more apparent, and the advantage of our methods is further amplified.
The three coordinate-only baselines remain largely biased because the dominant error source is kinematic and cannot be eliminated by optimizing the coordinate transformations alone; consequently, their medians decrease slowly, and their error distributions stay relatively broad even at $m=120$.
In contrast, our \textsf{SDP-Ini} consistently attains substantially lower loop deviations than the coordinate-only baselines across all sample budgets, demonstrating that the SDP-based initialization is data-efficient and robust even under significant kinematic bias.
The joint calibration baseline, \textsf{Chen}\#, also improves with more samples, but still exhibits larger dispersion and higher deviations than \textsf{Unified\#}, especially in translation.
Overall, \textsf{Unified\#} consistently outperforms all the compared baselines, achieving the lowest deviations with the strongest robustness across all calibration sample sizes.

\vspace{0.5em}
\noindent  \textit{4) Abalation Study on Our Certifiable Initialization: }
\vspace{0.2em}

\begin{figure}[!t]
    \centering
    \includegraphics[width=0.48\textwidth]{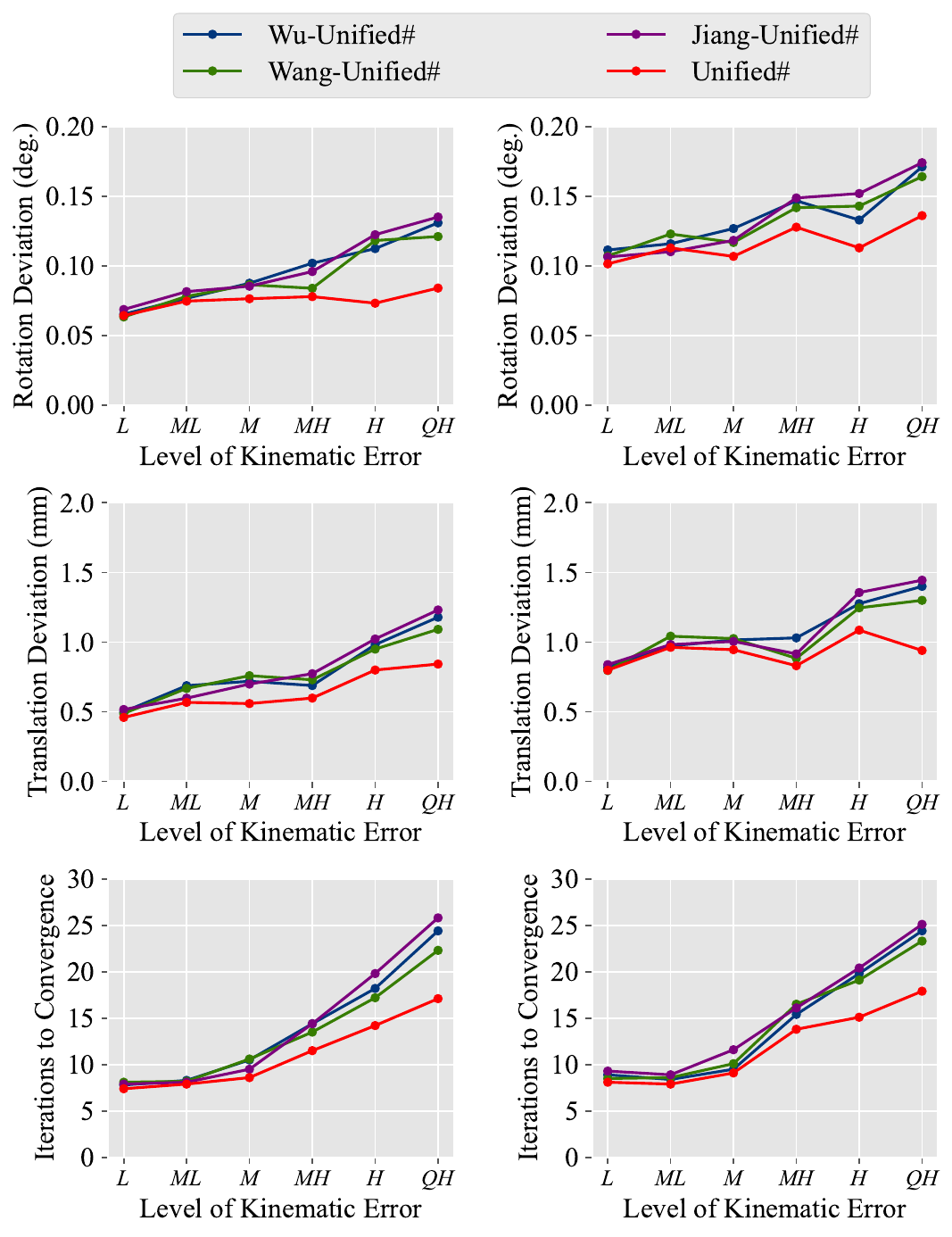}
    \\[-0.4em]
    \makebox[0.24\textwidth]{\footnotesize \quad \quad \quad \ \ (a-1)}
    \makebox[0.24\textwidth]{\footnotesize \quad \quad \ \ (b-1)}
    \\[0.15em]
    \includegraphics[width=0.48\textwidth]{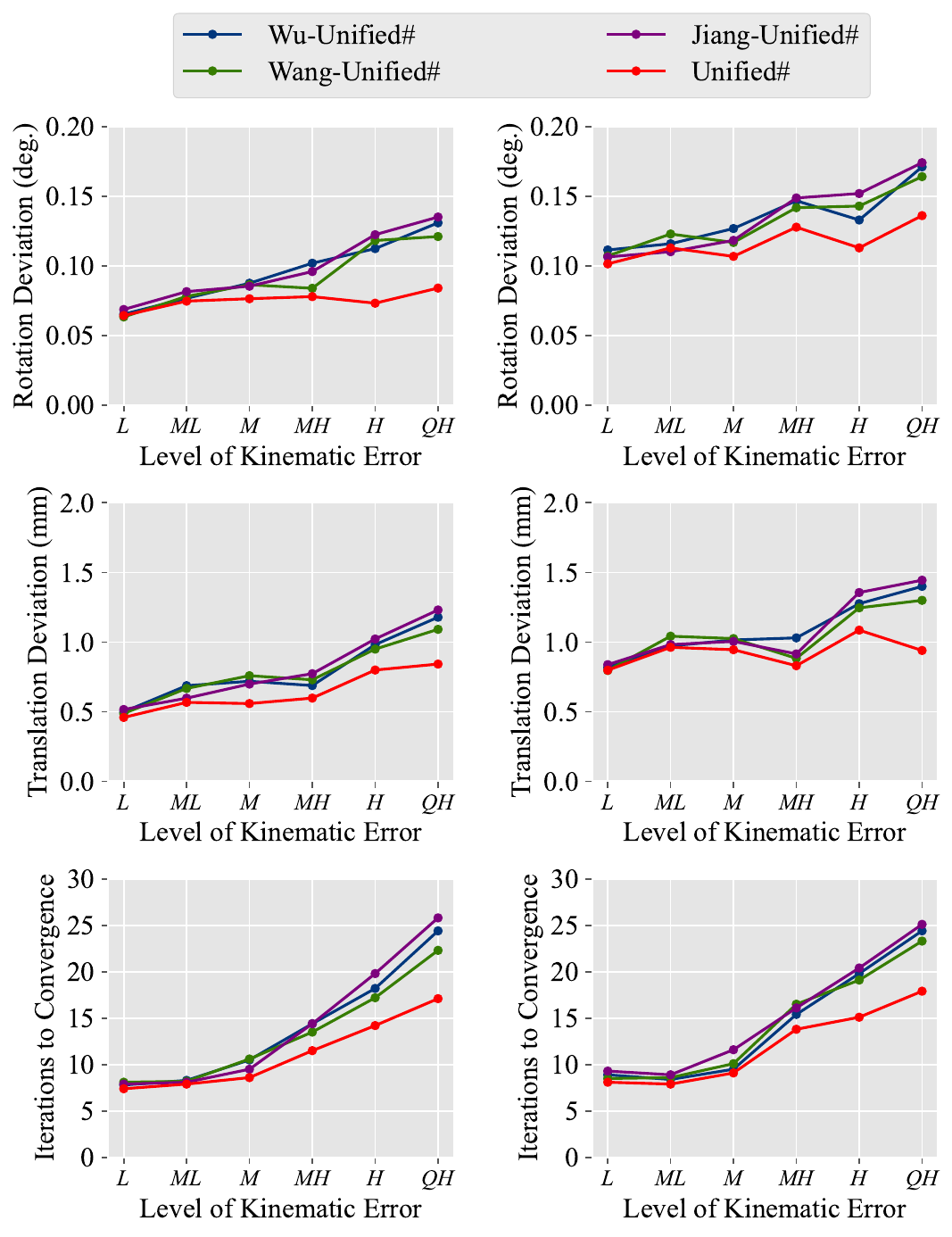}
    \\[-0.4em]
    \makebox[0.24\textwidth]{\footnotesize \quad \quad \quad \ (a-2)}
    \makebox[0.24\textwidth]{\footnotesize \quad \quad \ \ (b-2)}
    \\[0.15em]
    \includegraphics[width=0.48\textwidth]{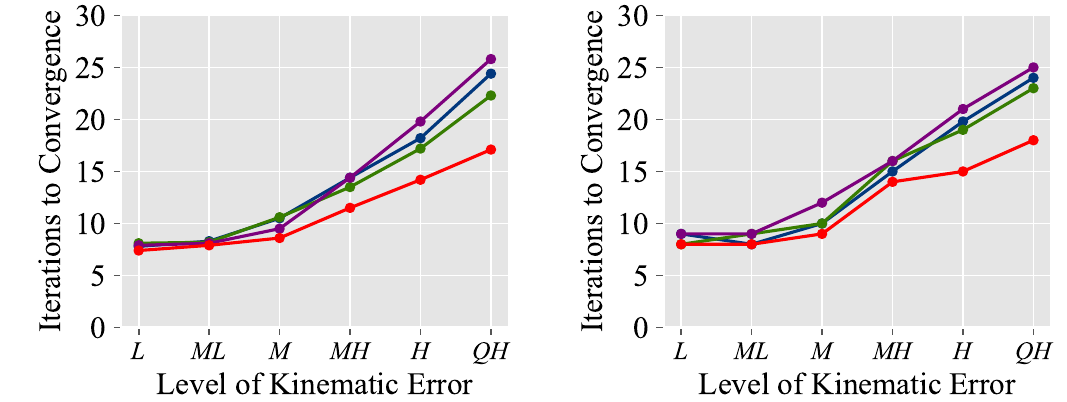}
    \\[-0.3em]
    \makebox[0.24\textwidth]{\footnotesize \quad \quad \quad \ (a-3)}
    \makebox[0.24\textwidth]{\footnotesize \quad \quad \ \ (b-3)}
    \vspace{-1.6em}
    \caption{Simulated ablation study regarding our initialization~(i.e., \textsf{SDP-Ini}) on different levels of kinematic errors under (a) medium and (b) high measurement noises~(see Section~\ref{subsec:sim_exper}4). The curves report the mean deviations over 40 distinct test samples and the iterations to convergence of the unified optimizer with different initializations.}
    \label{fig:ablation_ini}
\end{figure}

To analyze the impact of our certifiable initialization~(i.e., \textsf{SDP-Ini}) in our unified calibration, we construct additional variants by replacing the \textsf{SDP-Ini} module in \textsf{Unified}\# with the three coordinate-only calibration baselines, while keeping our unified joint optimization unchanged.
Specifically, we denote these variants as \textsf{Wu-Unified}\#, \textsf{Wang-Unified}\#, and \textsf{Jiang-Unified}\#. 
We reuse the same simulation data and evaluation protocol as in Section~\ref{subsec:sim_exper}1~(i.e., six kinematic-error levels under medium and high measurement-noise settings).
The results are summarized in Fig.~\ref{fig:ablation_ini}.
Since the performance of these variants is relatively close and the box plots are hard to distinguish, we report the \emph{mean} rotation/translation deviations as line plots for clearer comparison, together with the iterations to convergence of the unified optimizer.

\textbf{Results}.
As shown in Fig.~\ref{fig:ablation_ini}, we can find that all variants achieve remarkably low deviations under low-to-medium kinematic errors, suggesting that our unified optimization can effectively improve a reasonably good coordinate initialization.
Nevertheless, \textsf{Unified}\# with \textsf{SDP-Ini} always achieves the best overall accuracy, and its advantage becomes increasingly evident as the kinematic perturbation grows.
In particular, under high kinematic-error regimes (\textit{H} and \textit{QH}), \textsf{Unified}\# yields much lower mean rotation and translation deviations than \textsf{Wu-Unified}\#, \textsf{Wang-Unified}\#, and \textsf{Jiang-Unified}\# under both medium and high measurement noises.
This highlights that a stronger coordinate initialization remains crucial when the robot-provided poses are heavily biased by kinematic errors: a better initializer places the unified solver closer to a high-quality solution and reduces the risk of drifting toward suboptimal local minima.
The benefit of \textsf{SDP-Ini} is further reflected in the optimization efficiency.
As shown by the iteration curves in Fig.~\ref{fig:ablation_ini}(a-3,b-3), \textsf{Unified}\# typically converges in fewer iterations, with the gap widening at larger kinematic errors; this aligns with the empirical observation that PoE-based kinematic calibration often converges within a small number of iterations given a reasonable initialization~\cite{He-TRO2010}.
Moreover, the faster convergence of \textsf{Unified}\# is consistent with Fig.~\ref{fig:kin_levels}: since \textsf{SDP-Ini} provides a more accurate coordinate estimate than other coordinate-only baselines, such a higher-quality starting point naturally leads to faster convergence.

Overall, this ablation study confirms that while the proposed unified formulation is robust to the choice of coordinate initialization, \textsf{SDP-Ini} is an important component that improves both final accuracy and convergence speed, especially in challenging regimes with substantial kinematic errors.

\subsection{Real-world Experiments}
\label{subsec:real_exper}

\begin{figure}[!t]
    \centering
    \includegraphics[width=0.485\textwidth]{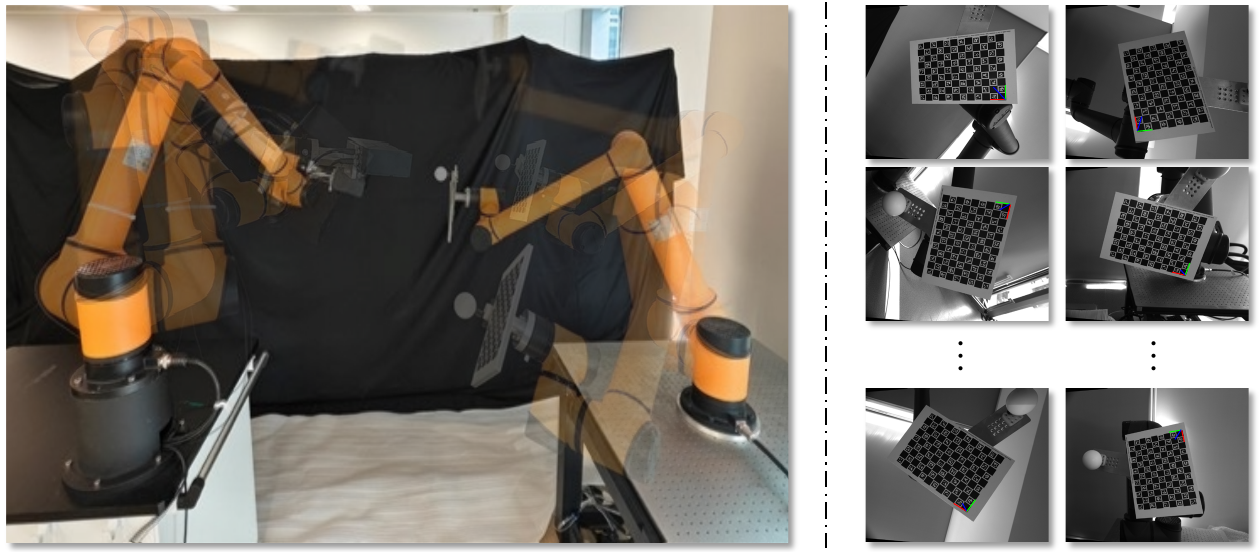}
    \\[0em]
    \makebox[0.24\textwidth]{\footnotesize \quad \quad \quad \ \ (a)}
    \makebox[0.24\textwidth]{\footnotesize \quad \quad  \quad \quad \quad \ (b)}
    \\[-0.4em]
    \caption{Data acquisition in real-world experiments. (a) We synchronically adjust the attitudes of both robot arms, ensuring that distinct measurements are obtained. (b) Example images captured at different arm postures, where the calibration pattern is detected to estimate the tool pose in the sensor frame.} 
    \label{fig:real_data}
\end{figure}

\begin{table*}[!t]
\centering
    \caption{Real-world evaluation results on different amount of calibration samples~(see Section~\ref{subsec:real_exper}1). Mean rotation/translation derivations of each method evaluated on 30 distinct test samples are reported. The best results are \textbf{boldfaced}.}
    \vspace{-0.3em}
    \label{tab:real_num_samples}
    \footnotesize
    \renewcommand{\tabcolsep}{3pt} 
    \renewcommand\arraystretch{1.3}
    \begin{tabular}{c|cc|cc|cc|cc|cc|cc}
        \Xhline{1pt} 
           \multirow{3}{*}{Method} &  \multicolumn{12}{c}{Number of Calibration Samples} \\
           \cline{2-13}
           & \multicolumn{2}{c|}{20} & \multicolumn{2}{c|}{40} & \multicolumn{2}{c|}{60} & \multicolumn{2}{c|}{80} & \multicolumn{2}{c|}{100} & \multicolumn{2}{c}{120} \\
          \cline{2-13}
          & $e_{\boldsymbol{R}}$~(deg.) & $e_{\boldsymbol{t}}$~(mm)  & $e_{\boldsymbol{R}}$~(deg.) & $e_{\boldsymbol{t}}$~(mm)  & $e_{\boldsymbol{R}}$~(deg.) & $e_{\boldsymbol{t}}$~(mm)  & $e_{\boldsymbol{R}}$~(deg.) & $e_{\boldsymbol{t}}$~(mm)  & $e_{\boldsymbol{R}}$~(deg.) & $e_{\boldsymbol{t}}$~(mm)  & $e_{\boldsymbol{R}}$~(deg.) & $e_{\boldsymbol{t}}$~(mm) \\
        \Xhline{0.5pt}
         \textsf{Wu}~\cite{Wu-TRO2016} & 1.3904 & 6.3091 & 1.1626 & 4.2898  & 1.0959 & 4.0361 & 1.0566 & 3.6754 & 0.9863 & 3.9177 & 0.9769 & 3.6729 \\
         \textsf{Wang}~\cite{Wang-TRO2021} & 1.5661 & 6.2315 & 1.2790 & 4.2283 & 1.1347 & 4.0292 & 1.0737 & 3.6786 & 0.9798 & 4.3918 & 0.9568 & 4.1135\\
         \textsf{Jiang}~\cite{Jiang-TRO2023} & 1.4773 & 6.2637 & 1.2465 & 4.3564 & 1.1920 & 4.1233 & 1.1049 & 3.8480 & 1.0134 & 4.3927 & 1.0101 & 4.0489\\
         \textsf{SDP-Ini}~(Ours) & 1.1608 & 4.8550 & 0.8881 & 3.3454 &  0.7879 & 3.1204 & 0.7246 & 2.9885 & 0.7281 & 3.0956 & 0.7191 & 2.8456\\
         \cline{1-13}
         \textsf{Chen}\#~\cite{Chen-TMECH2025} &  0.9852 & 3.8616 & 0.6989 & 2.7360 & 0.5707 & 2.4525 & 0.5638 & 2.7629 & 0.5813 & 2.5704 &  0.5664 & 2.4162 \\
         \textsf{Unified}\#~(Ours) & \textbf{0.5808}  & \textbf{2.1363}  & \textbf{0.5266} & \textbf{1.4364} & \textbf{0.38501} & \textbf{1.2478} & \textbf{0.3734} & \textbf{1.2962}  & \textbf{0.3768} & \textbf{1.3249}  & \textbf{0.3461} & \textbf{1.2436} \\
        \Xhline{1pt}
    \end{tabular}
\end{table*}

In addition to the above simulation analysis, we further conduct real-world experiments to evaluate our dual-arm calibration algorithm in a practical setup.
All experiments are conducted on a dual-arm robot system as shown in Fig.~\ref{fig:real_system}.
The system consists of two AUBO-i10 robot arms. An industrial 3D scanner is mounted to the sensor-side robot arm, while a calibration tool composed of a chessboard and a standard ball is attached to the tool-side robot arm. 
More details of the hardware setup are provided in the implementation details. 
During data collection, we read the joint variables as well as the nominal kinematic parameters of both robot arms from their controllers.
The 3D scanner provides synchronized high-resolution images and high-precision point clouds of the calibration tool, enabling the following evaluation experiments.

\vspace{0.5em}
\noindent  \textit{1) Standard Calibration Evaluation}
\vspace{0.2em}

As shown in Fig.~\ref{fig:real_data}, we synchronously adjust both robot arms within their workspaces to ensure sufficient target visibility, and detect the chessboard to estimate the tool pose in the sensor frame, i.e., the $\boldsymbol{B}_i^\ast$.
We collect a total of $120$ calibration samples under distinct dual-arm posture configurations. These samples are then partitioned into six calibration subsets with sizes $m\in\{20,40,60,80,100,120\}$.
For each subset, we run all compared methods as well as our methods to perform calibration using the corresponding $m$ samples.
For evaluation, we further collect another 30 \emph{unseen} samples under distinct arm postures and use them to compute the closed-loop transformation deviations of each method.

\begin{figure}[!t]
    \centering
    \includegraphics[width=0.475\textwidth]{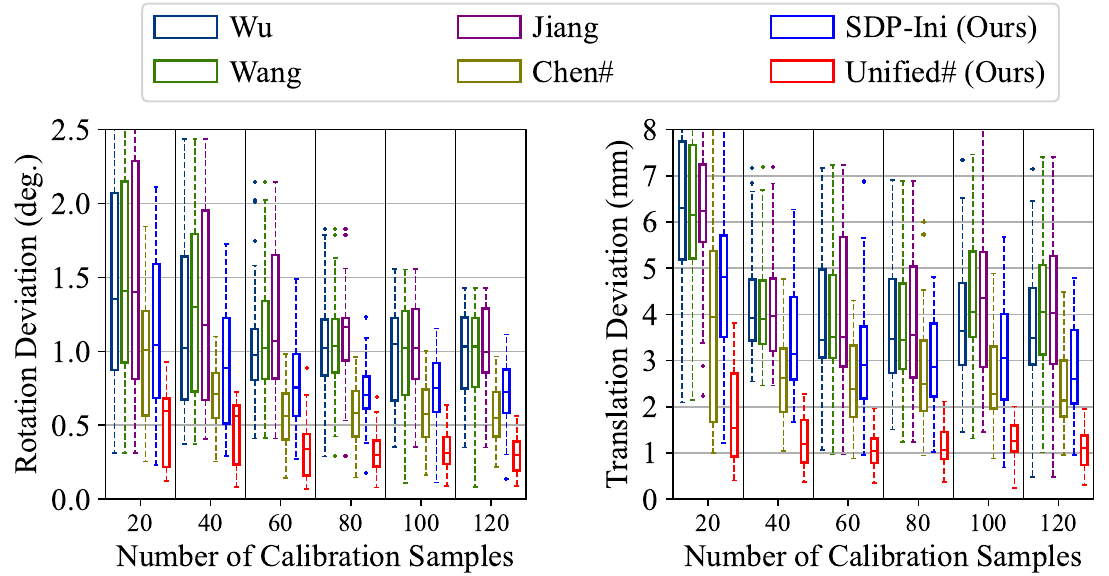}
    \\[-0.3em]
    \makebox[0.24\textwidth]{\footnotesize \quad \quad \quad \ \ (a-1)}
    \makebox[0.24\textwidth]{\footnotesize \quad \quad \ \ (a-2)}
    \\
    \vspace{-0.3em}
    \caption{Real-world evaluation results on different amount of calibration samples~(see Section~\ref{subsec:real_exper}1). Here we use box plots to show the error distributions of each method over 30 distinct test samples.}
    \label{fig:real_num_samples}
\end{figure}

\textbf{Results}.
Fig.~\ref{fig:real_num_samples} reports the real-world closed-loop rotation/translation deviations of different methods under varying numbers of calibration samples. We visualize the error distributions over the 30 test samples using box plots. For a more direct comparison, we further compute the mean rotation/translation deviations and summarize them in Table~\ref{tab:real_num_samples}.

Across all sample numbers, our \textsf{Unified}\# achieves the smallest mean deviations with consistently fewer extreme outliers, demonstrating strong robustness in real-world data.
Notably, even with only $20$ calibration samples, \textsf{Unified}\# already attains low deviations on the majority of the $30$ test samples, with the mean rotation error below 0.6$^\circ$ and mean translation error below 2.2~mm.
This indicates that our unified formulation is highly data-efficient and can reliably calibrate the dual-arm pose chain under limited real-world measurements.
Also, when the calibration set becomes sufficiently large (e.g., $m\ge 60$), \textsf{Unified}\# further reduces the mean rotation deviation to below 0.4$^\circ$, and the median translation deviation to below 1.35~mm, and it is the only method that consistently reaches this accuracy level with a tight error distribution.
In addition, \textsf{SDP-Ini} consistently outperforms the coordinate-only baselines (\textsf{Wu}, \textsf{Wang}, \textsf{Jiang}) under the same sample size, indicating that the proposed SDP relaxation provides a more reliable coordinate estimate. 

As a joint-calibration baseline, \textsf{Chen}\# exhibits a clear improvement over the coordinate-only methods, reflecting the benefit of explicitly compensating kinematic-induced biases. However, its performance gain remains limited, especially compared to our \textsf{SDP-Ini}, which is consistent with its arm-wise error modeling: the residual loop inconsistency cannot be fully and coherently attributed and reduced across the entire pose chain. In contrast, \textsf{Unified}\# further improves upon \textsf{Chen}\# by a remarkable margin (e.g., at $m$ = 100, from 0.5813$^\circ$ / 2.5704~mm to 0.3768$^\circ$ / 1.3249~mm in mean rotation/translation deviations). These results further underscore the importance of our unified optimization that tightly couples coordinate and kinematic terms of both arms, enabling more effective suppression of real-world closed-loop inconsistency.

Fig.~\ref{fig:real_ablation_ini} further studies the role of our certifiable initialization in the unified optimization.
Replacing \textsf{SDP-Ini} with coordinate-only initializers (\textsf{Wu}, \textsf{Wang}, \textsf{Jiang}) still leads to comparable final accuracy when sufficient samples are available, suggesting that the proposed unified refinement is robust once initialized within a reasonable basin.
Nevertheless, \textsf{Unified}\# (with \textsf{SDP-Ini}) consistently yields the lowest mean deviations, especially in the low-data regime ($m=20$ or $40$), where initialization quality plays a more critical role.
Moreover, \textsf{Unified}\# converges in fewer iterations across all sample sizes (Fig.~\ref{fig:real_ablation_ini}(a-3)), confirming that \textsf{SDP-Ini} places the optimizer closer to a high-quality solution and thus reduces the required refinement steps.
Overall, the real-world results corroborate our simulation findings: increasing calibration samples improves accuracy for all methods, while the proposed \textsf{Unified}\# remains the most accurate and data-efficient solution, and \textsf{SDP-Ini} provides both accuracy and convergence benefits for the unified optimization.

\begin{figure}[!t]
    \centering
    \includegraphics[width=0.475\textwidth]{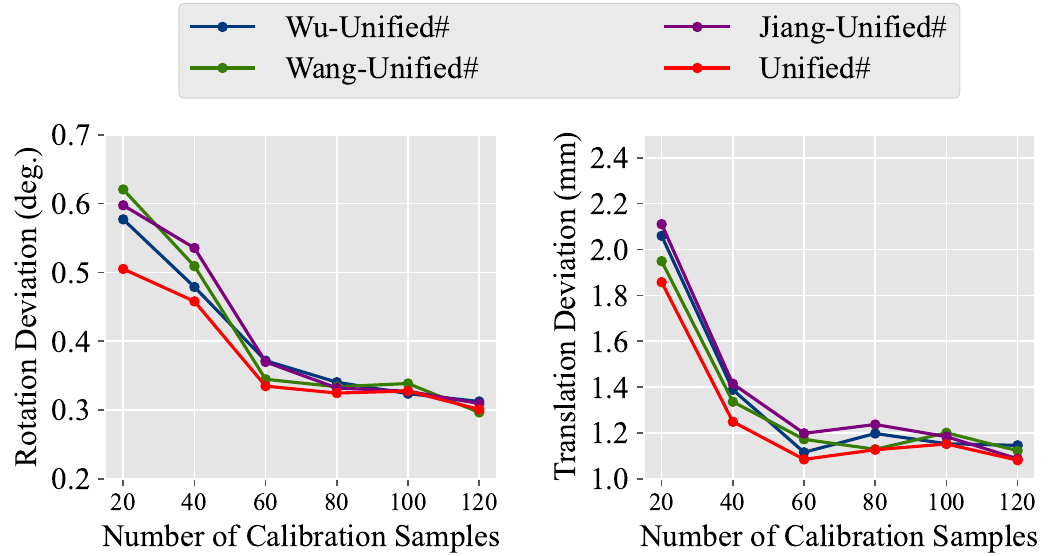}
    \\[-0.4em]
    \makebox[0.24\textwidth]{\footnotesize \quad \quad \quad \ \ (a-1)}
    \makebox[0.24\textwidth]{\footnotesize \quad \quad \ \ (a-2)}
    \\[0.4em]
    \includegraphics[width=0.475\textwidth]{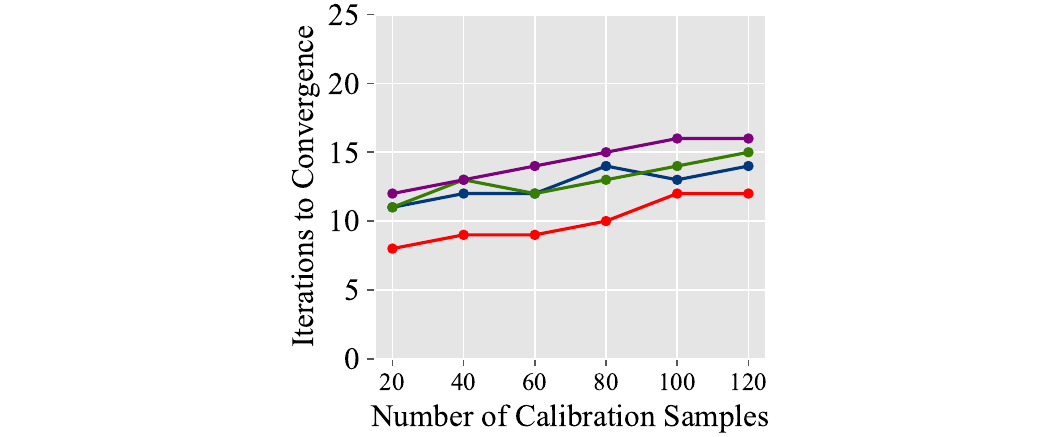}
    \\[-0.4em]
    \makebox[0.24\textwidth]{\footnotesize \quad \quad \quad \ (a-3)}
    \\
    \vspace{-0.5em}
    \caption{Real-world ablation study regarding our initialization~(i.e., \textsf{SDP-Ini}) on different amount of calibration samples~(see Section~\ref{subsec:real_exper}1). The curves report the mean deviations over 30 distinct test samples and the iterations to convergence of the unified optimizer with different initializations.}
    \label{fig:real_ablation_ini}
\end{figure}

\vspace{0.5em}
\noindent \textit{2) Evaluation on Dual-Arm Cooperative Measuring}
\vspace{0.2em}

Beyond the above standard evaluation, we follow~\cite{Wang-TRO2021} and borrow the evaluation idea from VDI/VDE 2634~\cite{Icasio-PE2024} to assess the calibration quality in a representative cooperative-measuring task.
As illustrated in Fig.~\ref{fig:multi_measure}, the dual-arm system can be viewed as a multi-view measuring setup: the sensor-side robot observes the standard ball from multiple viewpoints, while the tool-side robot moves the ball to generate distinct dual-arm postures.
A key property is that the ball is rigidly attached to the tool-side end-effector and thus remains static in the frame $\{E_2\}$.
Therefore, point clouds captured at different postures can be transformed into the same common frame $\{E_2\}$ via the calibrated pose chain for direct alignment.

We collect point clouds of the standard ball under $10$ distinct postures, yielding $\{\mathcal{P}_i\}_{i=1}^{10}$ measured in the camera frame $\{L_1\}$.
Given a calibration solution (either coordinate-only or joint), for each posture $i$ and each point $\boldsymbol{p}\in\mathcal{P}_i$, we can map the point to the common frame $\{E_2\}$ by
\begin{align}
    \boldsymbol{p}' \;=\; \boldsymbol{C}_i^{-1}\,\boldsymbol{Y}^{-1}\,\boldsymbol{A}_i\,\boldsymbol{X}\,\boldsymbol{p},
    \label{eq:ball_transform}
\end{align}
so that all transformed point sets are expressed in $\{E_2\}$ and can be directly aggregated for alignment and subsequent evaluation. In fact, we can access the cooperative measuring accuracy in terms of two aspects: 
\begin{itemize}
    \item \textit{Qualitative registration quality}: we visualize the overlaid point clouds to inspect their multi-view alignment.
    \item \textit{Quantitative consistency metric}: we fit a sphere to each transformed point cloud and obtain the estimated sphere centers $\{\boldsymbol{c}_i\}_{i=1}^{10}$.
    We then compute the minimum enclosing ball (MEB) of these centers and use its radius as the final score (smaller is better), i.e.,
    \begin{align}
        r_{\mathrm{MEB}} \doteq \min_{\boldsymbol{c}}\ \max_{i}\ \|\boldsymbol{c}_i-\boldsymbol{c}\|_2 .
    \end{align}
\end{itemize}

\begin{figure*}[!t]
    \centering
    \makebox[0.16\textwidth]{\footnotesize \quad \quad \textsf{Wu}~\cite{Wu-TRO2016}}
    \makebox[0.16\textwidth]{\footnotesize \quad \ \textsf{Wang}~\cite{Wang-TRO2021}} 
    \makebox[0.16\textwidth]{\footnotesize \quad \textsf{Jiang}~\cite{Jiang-TRO2023}} 
    \makebox[0.16\textwidth]{\footnotesize \textsf{SDP-Ini}~(Ours)}
    \makebox[0.16\textwidth]{\footnotesize \textsf{Chen}\#~\cite{Chen-TMECH2025}}
    \makebox[0.16\textwidth]{\footnotesize \textsf{Unified}\#~(Ours)}     \\
    \vspace{0.3em}
    \includegraphics[width=0.95\textwidth]{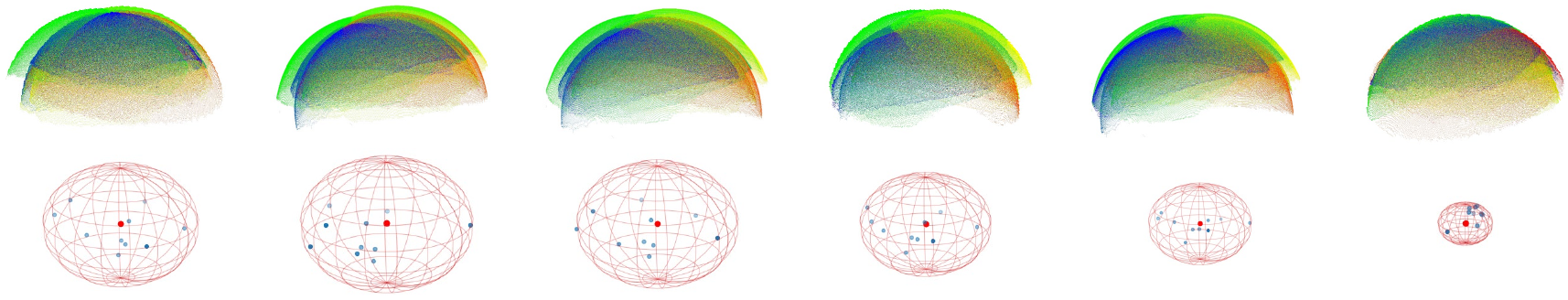}
    \\
    \vspace{-0em}
    \makebox[0.16\textwidth]{\footnotesize \quad \quad $r_{\textnormal{MEB}}$ = 4.5571 mm }
    \makebox[0.16\textwidth]{\footnotesize \quad \ $r_{\textnormal{MEB}}$ = 4.6694 mm} 
    \makebox[0.16\textwidth]{\footnotesize \quad $r_{\textnormal{MEB}}$ = 4.6189 mm} 
    \makebox[0.16\textwidth]{\footnotesize $r_{\textnormal{MEB}}$ = 3.8834 mm}
    \makebox[0.16\textwidth]{\footnotesize $r_{\textnormal{MEB}}$ = 2.9271 mm}
    \makebox[0.16\textwidth]{\footnotesize $r_{\textnormal{MEB}}$ = 1.5105 mm}     \\
    \vspace{-0.2em}
    \caption{The cooperative-measuring evaluation results~(see Section~\ref{subsec:real_exper}2). Below each method are the corresponding aligned point clouds obtained from multiple dual-arm postures, with each transformed point cloud visualized in a different color. The red-grid sphere denotes the minimum enclosing ball (MEB) computed from the centers of the spheres fitted to all transformed point clouds, while the blue points indicate the fitted sphere centers. The number reported below is the MEB radius $r_{\textnormal{MEB}}$, where a smaller value indicates higher cooperative-measuring accuracy.}
    \label{fig:compare_ball}
    \vspace{-0.8em}
\end{figure*}

\begin{figure}[!t]
    \centering
    \includegraphics[width=0.45\textwidth]{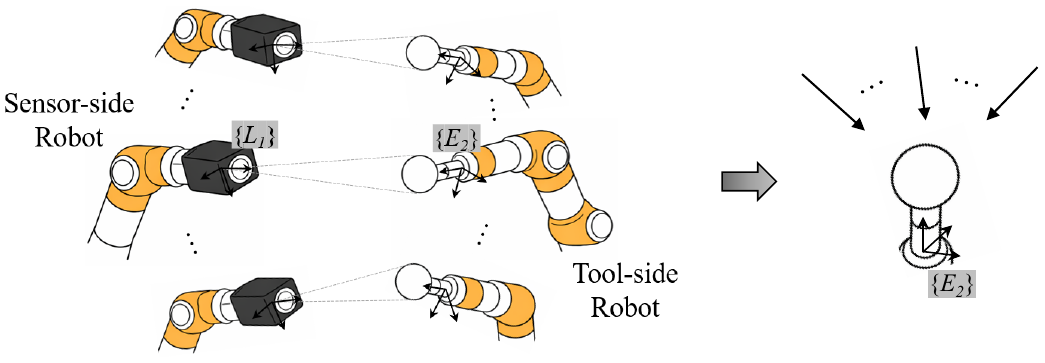}
    \\[-0.35em]
    \caption{The standard ball is rigidly attached to the tool-side end-effector and thus remains static in frame $\{E_2\}$ throughout the experiment.
    For each dual-arm posture, we can transform the ball point cloud measured in $\{L_1\}$ to the common frame $\{E_2\}$ via~\eqref{eq:ball_transform}. As a result, the dual-arm system is now equivalent to a multi-view measuring setup.} 
    \label{fig:multi_measure}
\end{figure}

We use the calibration results obtained in the standard evaluation with $m=100$ calibration samples for each compared method, and apply them to register the above $10$ ball point clouds via~\eqref{eq:ball_transform}. The evaluation results are presented in Fig.~\ref{fig:compare_ball}.

\textbf{Results}.
Fig.~\ref{fig:compare_ball} shows that our \textsf{Unified}\# achieves the best performance in both qualitative and quantitative evaluations. Visually, the overlaid ball point clouds after alignment are the most compact, indicating the smallest cross-view registration drift. Quantitatively, \textsf{Unified}\# yields the \textit{smallest} MEB radius \(r_{\mathrm{MEB}}=1.5105\) mm, providing the strongest evidence that our calibration solution supports highly consistent cooperative measuring across different dual-arm postures.

Among the coordinate-only methods, the three baselines \textsf{Wu}, \textsf{Wang}, and \textsf{Jiang} produce noticeably much less compact overlays and larger MEB radius ($r_{\mathrm{MEB}} > $ 4.5 mm). In contrast, our \textsf{SDP-Ini} significantly improves the coordinate-only regime, achieving a smaller $r_{\mathrm{MEB}}$ = 3.8834 mm and a visibly tighter overlay. This confirms that a stronger coordinate initialization/estimation (with near-global optimality verification) can substantially mitigate loop inconsistency. Nevertheless, \textsf{SDP-Ini} is still fundamentally limited by being coordinate-only: it cannot explicitly correct the kinematic parameters that systematically bias the robot-provided poses, and thus the remaining a few dispersions across multiple views.

The joint calibration baseline \textsf{Chen}\# further decreases $r_{\mathrm{MEB}}$ to 2.9271 mm, demonstrating that incorporating kinematic modeling can indeed benefit cooperative measuring. However, the gain over \textsf{SDP-Ini} is relatively modest, which shows the drawbacks of their separated error modeling.
By contrast, our \textsf{Unified}\# performs a unified optimization where the coordinate and kinematic parameters are coupled in a consolidated error model to minimize the global inconsistency, resulting in a markedly tighter multi-view alignment and a much smaller MEB radius.

In summary, the cooperative-measuring evaluation again highlights that (i) coordinate-only calibration is insufficient when kinematic biases are non-negligible, and (ii) unified error modeling and refinement of the tightly coupled coordinate and kinematic parameters (\textsf{Unified}\#) is crucial to achieve high-precision dual-arm robot cooperation.

\section{Conclusion and Future Work}
\label{sec:conclu}

In this work, we introduce a unified calibration framework that jointly estimates the coordinate transformations and the kinematic parameters of both arms in the vision-based dual-arm robot systems. We construct a consolidated system-level error model grounded in the product-of-exponentials formulation. This modeling avoids arm-wise or residual-level error source separation and therefore mitigates the accumulation of coupled errors along the closed-loop pose chain. With this model, we derive a closed-form analytical Jacobian via Lie derivatives. We analyze the Jacobian rank properties and show that our joint optimization is well-posed under mild excitation conditions.
Furthermore, we develop a certifiably correct initialization for the unknown coordinate transforms via SDP relaxation. The resulting initialization comes with an \textit{a posteriori} near-global optimality certificate, providing a reliable starting point for our unified optimization. 
Extensive simulations and real-world experiments consistently demonstrate the improved calibration accuracy of our approach over related baseline methods under identical visual measurements.

Despite the remarkable accuracy gains achieved by our framework, these results do not imply that the dual-arm robot calibration problem is fully solved. In fact, when conducting the experiments, we observe several remaining open challenges. First, we find that simply increasing the number of samples may not necessarily lead to better performance. For example, in Fig.~\ref{fig:real_ablation_ini}, the accuracy of all methods does not consistently improve when the number of calibration samplers increases from 60 to 100. 
Our inspection suggests that this behavior is largely due to a small fraction of ill-conditioned samples, where the measurements exhibit amplified noise or weak excitation and thus affect the calibration accuracy. A promising research direction is therefore to incorporate an automated data-quality mechanism into the pipeline, such as robust estimation that can detect and remove noisy sample outliers without manual tuning. 
Moreover, another practical challenge is that the calibration data collection for dual-arm robots is quite time-consuming and labor-intensive: collecting 50 dual-arm posture configurations can easily require several hours in real-world settings. This motivates the development of an automated data-acquisition procedure that can actively select informative configurations, potentially via viewpoint/posture planning guided by a coarse initial coordinate calibration. We will explore these directions in future work to further improve both the reliability and the usability of unified dual-arm calibration in real applications.




\bibliographystyle{IEEEtran}
\bibliography{paper}

\clearpage

\appendix
\subsection{Derivation of $\mathcal{J}(\cdot)$ and $\mathfrak{J}(\cdot)$}

Recall that in Section~\ref{subsec:Jaco} of the main manuscript, we leave out the explicit forms of the two basic Jacobian matrices $\mathcal{J}(\cdot)$ and $\mathfrak{J}(\cdot)$ for computing the differential of the exponential map. Here we give a brief derivation for them.

Consider a twist $\hat{\boldsymbol{\xi}} = \begin{bmatrix}
    \boldsymbol{w}_\times & \boldsymbol{\rho} \\
    \boldsymbol{0}_3^{\top} & 0
    \end{bmatrix}$ with $\boldsymbol{w} = [w_1, w_2, w_3]^\top$. 
We begin with the following two definitions:
\begin{align}
    \Omega & \doteq \begin{bmatrix}
        \boldsymbol{w}_\times & \mathbf{0}_{3\times3} \\
        \boldsymbol{v}_\times & \boldsymbol{w}_\times
    \end{bmatrix}, \\
    \theta  & \doteq \sqrt{w_1^2 + w_2^2 + w_3^2}. 
\end{align}
Let $\hat{\boldsymbol{\xi}} = \hat{\boldsymbol{\xi}}(t)$ be a differentiable function with respect to a scalar variable $t$. Then, we can follow \cite{Park-Book1994, Okamura-Robotica1996} to leverage the definite-integral expression and expand it as
\begin{align}\label{eq:J_transtwist}
    [ \delta\exp(\hat{\boldsymbol{\xi}}) \exp(-\hat{\boldsymbol{\xi}}) ]^\vee = &\int_{0}^{1} \exp(\hat{\boldsymbol{\xi}}s) \frac{\mathrm{d} \boldsymbol{\xi} }{\mathrm{d} t} \exp(-\hat{\boldsymbol{\xi}}s) \notag \\
    = & \sum_{k=0}^{\infty} \frac{1}{(k+1)!}\Omega^k \delta \boldsymbol{\xi} \notag \\
    = & \big ( \mathbf{I}_6 + \frac{4 - \theta\sin(\theta) - 4\cos(\theta)}{2\theta^2}\Omega   \notag \\
    & + \frac{4\theta - 5\sin(\theta) + \theta\cos(\theta)}{2\theta^3}\Omega^2 \notag \\
    & + \frac{2 - \theta\sin(\theta) - 2\cos(\theta)}{2\theta^4}\Omega^3 \notag \\
    & + \frac{2\theta - 3\sin(\theta) + \theta\cos(\theta)}{2\theta^5}\Omega^4 \big )\delta \boldsymbol{\xi} \notag \\
    = & \mathcal{J}(\boldsymbol{\xi})\delta\boldsymbol{\xi} 
\end{align}
The last step in \eqref{eq:J_transtwist} gives the explicit form of $\mathcal{J}(\cdot)$ related to a twist with pure transformation. 
Next, we proceed to the Jacobian $\mathfrak{J}(\cdot)$ for a kinematic twist associated with a joint variable $q$. We make the following definition:
\begin{align}
    \Omega' \doteq \begin{bmatrix}
        \boldsymbol{w}_\times q & \mathbf{0}_{3\times3} \\
        \boldsymbol{v}_\times q & \boldsymbol{w}_\times q
    \end{bmatrix}
\end{align}
Similar to the derivation in \eqref{eq:J_transtwist}, we let $\hat{\boldsymbol{\xi}}q = \hat{\boldsymbol{\xi}}(t)q(t)$ be a differentiable function of a scalar variable $t$, then
\begin{align}
    & [ \delta\exp(\hat{\boldsymbol{\xi}} q) \exp(-\hat{\boldsymbol{\xi}} q) ]^\vee \\ 
    = & \sum_{k=0}^{\infty} \frac{1}{(k+1)!}\Omega'^k \delta \boldsymbol{\xi} \notag \\
    = & \big ( \mathbf{I}_6 + \frac{4 - \theta q\sin(\theta q) - 4\cos(\theta q)}{2\theta^2 q}\Omega   \notag \\
    & + \frac{4\theta q - 5\sin(\theta q) + \theta q\cos(\theta q)}{2\theta^3 q}\Omega^2 \notag \\
    & + \frac{2 - \theta q\sin(\theta q) - 2\cos(\theta q)}{2\theta^4 q}\Omega^3 \notag \\
    & + \frac{2\theta q - 3\sin(\theta q) + \theta q\cos(\theta q)}{2\theta^5 q}\Omega^4 \big )\delta \boldsymbol{\xi} \notag \\
    = & \mathfrak{J}(\boldsymbol{\xi}, q)\delta\boldsymbol{\xi}.
\end{align}
With the above expansion, we get the explicit form of $\mathfrak{J}(\cdot)$.

\subsection{Proof of Proposition~\ref{prop:state_linear}: Linear Parameterization of Error Terms in \eqref{eq:coord_ori}}

Here we prove that the error terms $\boldsymbol{f}_i$ and $\boldsymbol{g}_i$ are linear in the aggregated state vector $\boldsymbol{w}$~(ref. Section~\ref{subsec:qcqp}). Recall the definition of $\boldsymbol{w}$:
\begin{align*}
    \boldsymbol{w} \doteq [ &\textnormal{vec}(\boldsymbol{R}_x)^\top, \textnormal{vec}(\boldsymbol{R}_y)^\top, \textnormal{vec}(\boldsymbol{R}_z^\top\otimes\boldsymbol{R}_y)^\top, \notag \\ &  \boldsymbol{t}_x^\top, \boldsymbol{t}_y^\top, \textnormal{vec}(\boldsymbol{t}_z^\top\otimes\boldsymbol{R}_y)^\top, 1]^\top.
\end{align*}
A key identity used throughout our proof is the standard vec–Kronecker relation: Given three matrices $\boldsymbol{M}_1\in \mathbb{R}^{m\times n}$, $\boldsymbol{M}_2 \in \mathbb{R}^{n\times p}$, and $\boldsymbol{M}_3 \in \mathbb{R}^{p\times q}$ with compatible dimensions, there is
\begin{equation}\label{eq:vec_ABC}
\mathrm{vec}(\boldsymbol{M}_1 \boldsymbol{M}_2 \boldsymbol{M}_3)=(\boldsymbol{M}_3^\top\otimes \boldsymbol{M}_1)\mathrm{vec}(\boldsymbol{M}_2).
\end{equation}
Besides, we repeatedly use the identity
\begin{align}\label{eq:vec_Ax}
    \boldsymbol{M}\boldsymbol{t}=(\boldsymbol{t}^\top\otimes \mathbf{I})\mathrm{vec}(\boldsymbol{A})
\end{align}
for any matrix $\boldsymbol{M}$ and vector $\boldsymbol{t}$ with compatible dimensions.

\noindent \textbf{Rotation constraint}.
Recall that the rotation residual vector is defined as $\boldsymbol{f}_i = \mathrm{vec}(\boldsymbol{R}_{a_i}\boldsymbol{R}_{x}\boldsymbol{R}_{b_i}) - \mathrm{vec}(\boldsymbol{R}_{y}\boldsymbol{R}_{c_i}\boldsymbol{R}_{z})$.
Applying \eqref{eq:vec_ABC} to the first term yields
\begin{equation}\label{eq:fi_term1}
\mathrm{vec}(\boldsymbol{R}_{a_i}\boldsymbol{R}_x\boldsymbol{R}_{b_i}) = (\boldsymbol{R}_{b_i}^\top\otimes \boldsymbol{R}_{a_i})\mathrm{vec}(\boldsymbol{R}_x),
\end{equation}
which is linear in $\mathrm{vec}(\boldsymbol{R}_x)$. As to the second term, we can leverage \eqref{eq:vec_ABC}\&\eqref{eq:vec_Ax} and derive as follows:
\begin{align}\label{eq:fi_term2}
\mathrm{vec}(\boldsymbol{R}_y\boldsymbol{R}_{c_i}\boldsymbol{R}_z) &=(\boldsymbol{R}_z^\top\otimes \boldsymbol{R}_y)\mathrm{vec}(\boldsymbol{R}_{c_i}) \notag \\
& = \big(\mathrm{vec}(\boldsymbol{R}_{c_i})^\top\otimes \mathbf{I}_9\big)\mathrm{vec}(\boldsymbol{R}_z^\top\otimes \boldsymbol{R}_y),
\end{align}
which is linear in $\mathrm{vec}(\boldsymbol{R}_z^\top\otimes \boldsymbol{R}_y)$.
Then by combining \eqref{eq:fi_term1} and \eqref{eq:fi_term2}, we can rewrite $\boldsymbol{f}_i$ as a linear function of $\boldsymbol{w}$:
\begin{align}\label{eq:fi_linear}
    \boldsymbol{f}_i = \mathbf{\Omega}_{f_1}\boldsymbol{w},
\end{align}
where one valid choice of $\mathbf{\Omega}_{f_1}$ is
\begin{align}\label{eq:Omega_fi}
    \mathbf{\Omega}_{f_i} = \begin{bmatrix}
        \boldsymbol{R}_{b_i}^\top\otimes \boldsymbol{R}_{a_i} & \mathbf{0}_{9\times9} & - \mathrm{vec}(\boldsymbol{R}_{c_i})^\top\otimes \mathbf{I}_9 & \mathbf{0}_{9\times34}
    \end{bmatrix}
\end{align}

\noindent \textbf{Translation constraint}.
Recall that the translation residual vector is $\boldsymbol{g}_i = \boldsymbol{R}_{a_i}\boldsymbol{R}_{x}\boldsymbol{t}_{b_i} + \boldsymbol{R}_{a_i}\boldsymbol{t}_x + \boldsymbol{t}_{a_i} - \boldsymbol{R}_y\boldsymbol{R}_{c_i}\boldsymbol{t}_z - \boldsymbol{R}_y\boldsymbol{t}_{c_i} - \boldsymbol{t}_y$.  We show that each term in $\boldsymbol{g}_i$ is linear in $\boldsymbol{w}$:

\noindent \emph{(i) Term $\boldsymbol{R}_{a_i}\boldsymbol{R}_{x}\boldsymbol{t}_{b_i}$.}
Relying on \eqref{eq:vec_ABC} and \eqref{eq:vec_Ax}, we have
\begin{align}
\boldsymbol{R}_{a_i}\boldsymbol{R}_{x}\boldsymbol{t}_{b_i}
&=(\boldsymbol{t}_{b_i}^\top\otimes \mathbf{I}_3)\,\mathrm{vec}(\boldsymbol{R}_{a_i}\boldsymbol{R}_x) \notag\\
&=(\boldsymbol{t}_{b_i}^\top\otimes \mathbf{I}_3)\,(\mathbf{I}_3\otimes \boldsymbol{R}_{a_i})\,\mathrm{vec}(\boldsymbol{R}_x),
\end{align}
which is linear in $\mathrm{vec}(\boldsymbol{R}_x)$.

\noindent \emph{(ii) Term $-\boldsymbol{R}_y\boldsymbol{R}_{c_i}\boldsymbol{t}_z$.}
Applying \eqref{eq:vec_Ax} to $\boldsymbol{R}_{c_i}\boldsymbol{t}_z$ gives
\begin{equation}\label{eq:Rci_tz_vec}
\boldsymbol{R}_{c_i}\boldsymbol{t}_z = (\boldsymbol{t}_z^\top\otimes \mathbf{I}_3)\mathrm{vec}(\boldsymbol{R}_{c_i}).
\end{equation}
Left-multiplying \eqref{eq:Rci_tz_vec} by $\boldsymbol{R}_y$ yields
\begin{align}\label{eq:RyRci_tz}
\boldsymbol{R}_y\boldsymbol{R}_{c_i}\boldsymbol{t}_z
&=\boldsymbol{R}_y(\boldsymbol{t}_z^\top\otimes \mathbf{I}_3)\mathrm{vec}(\boldsymbol{R}_{c_i}) \notag \\
&=(\boldsymbol{t}_z^\top\otimes \boldsymbol{R}_y)\mathrm{vec}(\boldsymbol{R}_{c_i}),
\end{align}
where the last equality uses the mixed-product property that can be easily proved. Next, we can again apply \eqref{eq:vec_Ax} and get
\begin{align}
    -\boldsymbol{R}_y\boldsymbol{R}_{c_i}\boldsymbol{t}_z & = - (\boldsymbol{t}_z^\top\otimes \boldsymbol{R}_y)\mathrm{vec}(\boldsymbol{R}_{c_i}) \notag \\
    & = - (\mathrm{vec}(\boldsymbol{R}_{c_i})^\top \otimes \mathbf{I}_3)\mathrm{vec}(\boldsymbol{t}_z^\top\otimes \boldsymbol{R}_y),
\end{align}
which is linear in $\mathrm{vec}(\boldsymbol{t}_z^\top\otimes \boldsymbol{R}_y)$.

\noindent \emph{(iii) Term $-\boldsymbol{R}_y\boldsymbol{t}_{c_i}$.}
Similarly,
\begin{equation}
-\boldsymbol{R}_y\boldsymbol{t}_{c_i}=-(\boldsymbol{t}_{c_i}^\top\otimes \mathbf{I}_3)\mathrm{vec}(\boldsymbol{R}_y),
\end{equation}
which is linear in $\mathrm{vec}(\boldsymbol{R}_y)$.

\noindent \emph{(iv) Remaining terms.}
The terms $\boldsymbol{R}_{a_i}\boldsymbol{t}_x$ and $-\boldsymbol{t}_y$ are trivially linear in $\boldsymbol{t}_x$ and $\boldsymbol{t}_y$, respectively.
The constant offset $\boldsymbol{t}_{a_i}$ is absorbed by the augmented scalar entry $1$ in $\boldsymbol{w}$.

Collecting all terms, $\boldsymbol{g}_i$ admits the  linear form
\begin{equation}\label{eq:gi_linear}
    \boldsymbol{g}_i=\boldsymbol{\Omega}_{g_i}\boldsymbol{w},
\end{equation}
where one valid choice of $\boldsymbol{\Omega}_{g_i}\in\mathbb{R}^{3\times 133}$ is
\begin{align}\label{eq:Omega_gi}
\boldsymbol{\Omega}_{g_i}= [
& (\boldsymbol{t}_{b_i}^\top\otimes \mathbf{I}_3)(\mathbf{I}_3\otimes \boldsymbol{R}_{a_i}) \ \
-(\boldsymbol{t}_{c_i}^\top\otimes \mathbf{I}_3) \ \
\mathbf{0}_{3\times 81} \notag\\
& \boldsymbol{R}_{a_i} \ \
-\mathbf{I}_3 
-\big(\mathrm{vec}(\boldsymbol{R}_{c_i})^\top\otimes \mathbf{I}_3\big) \ \
\boldsymbol{t}_{a_i} ].
\end{align}

\eqref{eq:fi_linear} and \eqref{eq:gi_linear} conclude that both $\boldsymbol{f}_i$ and $\boldsymbol{g}_i$ are linear in $\boldsymbol{w}$, while \eqref{eq:Omega_fi} and \eqref{eq:Omega_gi} provide explicit forms of the corresponding auxiliary matrices $\boldsymbol{\Omega}_{f_i}$ and $\boldsymbol{\Omega}_{g_i}$. The proof ends.

\subsection{Proof of Proposition~\ref{prop:SO3_QC}: Transform SO(3) Constraints to Quadratic Formulas}

Here we prove that the SO(3) constraints in our coordinate-only initialization problem~(see Section~\ref{subsec:qcqp}) can be properly transformed into a finite set of quadratic constraints. 

We begin with some definitions and a basic lemma that can simplify the process of our proof.
We introduce several constant selection matrices to extract SO(3)-related blocks of the state vector $\boldsymbol{w}$:
\begin{align}
    \boldsymbol{S}_x\boldsymbol{w} = \mathrm{vec}&(\boldsymbol{R}_x), \quad \boldsymbol{S}_y\boldsymbol{w} = \mathrm{vec}(\boldsymbol{R}_y),  \\
    \boldsymbol{S}_{zy}\boldsymbol{w} &= \mathrm{vec}(\boldsymbol{R}_z^\top\otimes\boldsymbol{R}_y), \\
    \boldsymbol{S}_{ty}\boldsymbol{w} &= \mathrm{vec}(\boldsymbol{t}_z^\top\otimes\boldsymbol{R}_y),
\end{align}
where the explicit form of each $\boldsymbol{S}$ can be easily derived given the ordering in $\boldsymbol{w}$. Furthermore, there is
\begin{lemma}\label{lemma:qua_bilinear}(Quadratic form for bilinear equalities).
    Given $\boldsymbol{m}_1 = \boldsymbol{M}_1\boldsymbol{w}$ and $\boldsymbol{m}_2 = \boldsymbol{M}_2\boldsymbol{w}$ that are both linear in $\boldsymbol{w}$, any scalar equality of the form $\boldsymbol{m}_1^\top\boldsymbol{Q}\boldsymbol{m}_2 = \rho$ can be rewritten as $\boldsymbol{w}^\top\boldsymbol{H}\boldsymbol{w} = c$ with $\boldsymbol{H}$ being a real symmetric matrix, e.g.,
    \begin{align}
        \boldsymbol{H} = \frac{1}{2}\big( \boldsymbol{M}_1^\top\boldsymbol{Q}\boldsymbol{M}_2 + \boldsymbol{M}_2^\top\boldsymbol{Q}^\top\boldsymbol{M}_1 \big).
    \end{align}
\end{lemma}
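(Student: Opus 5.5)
The plan is a direct substitution followed by a symmetrization, using that a quadratic form only sees the symmetric part of its matrix. I would substitute $\boldsymbol{m}_1 = \boldsymbol{M}_1\boldsymbol{w}$ and $\boldsymbol{m}_2 = \boldsymbol{M}_2\boldsymbol{w}$ into the left-hand side, which gives
\begin{align*}
\boldsymbol{m}_1^\top\boldsymbol{Q}\boldsymbol{m}_2 = \boldsymbol{w}^\top\boldsymbol{M}_1^\top\boldsymbol{Q}\boldsymbol{M}_2\boldsymbol{w} = \boldsymbol{w}^\top\boldsymbol{K}\boldsymbol{w}, \qquad \boldsymbol{K} \doteq \boldsymbol{M}_1^\top\boldsymbol{Q}\boldsymbol{M}_2 .
\end{align*}
The matrix $\boldsymbol{K}$ is real but need not be symmetric, since $\boldsymbol{Q}$ is arbitrary and $\boldsymbol{M}_1\neq\boldsymbol{M}_2$ in general.

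Next I would remove the asymmetry. Because $\boldsymbol{w}^\top\boldsymbol{K}\boldsymbol{w}$ is a scalar, it equals its own transpose, so
\begin{align*}
\boldsymbol{w}^\top\boldsymbol{K}\boldsymbol{w} = \boldsymbol{w}^\top\boldsymbol{K}^\top\boldsymbol{w} .
\end{align*}
Averaging the two expressions then gives
\begin{align*}
\boldsymbol{w}^\top\boldsymbol{K}\boldsymbol{w} = \boldsymbol{w}^\top\tfrac{1}{2}(\boldsymbol{K}+\boldsymbol{K}^\top)\boldsymbol{w} .
\end{align*}
Expanding $\boldsymbol{K}^\top = \boldsymbol{M}_2^\top\boldsymbol{Q}^\top\boldsymbol{M}_1$ shows that $\tfrac{1}{2}(\boldsymbol{K}+\boldsymbol{K}^\top)$ is exactly the matrix $\boldsymbol{H}$ in the statement. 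This $\boldsymbol{H}$ is symmetric by construction, since $\boldsymbol{H}^\top = \tfrac12(\boldsymbol{K}^\top+\boldsymbol{K}) = \boldsymbol{H}$, and it is real because all its factors are real.

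Finally, I would set $c = \rho$ to obtain the equivalence: for every $\boldsymbol{w}$, the equality $\boldsymbol{m}_1^\top\boldsymbol{Q}\boldsymbol{m}_2 = \rho$ holds if and only if $\boldsymbol{w}^\top\boldsymbol{H}\boldsymbol{w} = c$. I also plan to note the extension needed in Proposition~\ref{prop:SO3_QC}: linear or constant terms can be included by using the homogeneous entry $\boldsymbol{w}_{(133)}=1$. A term that is linear in $\boldsymbol{w}$ can be treated as a bilinear term with the selection vector that picks out the last coordinate, and a constant can be moved to $c$. The main point to get right is the symmetrization step, where one must confirm that replacing $\boldsymbol{K}$ by its symmetric part leaves the quadratic form unchanged for every $\boldsymbol{w}$ and not only for feasible ones. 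The scalar-transpose identity gives exactly this, so I expect no real obstacle.
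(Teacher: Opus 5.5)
Your proof is correct and matches the paper's argument step for step: substitute to obtain $\boldsymbol{w}^\top\boldsymbol{K}\boldsymbol{w}$ with $\boldsymbol{K}=\boldsymbol{M}_1^\top\boldsymbol{Q}\boldsymbol{M}_2$, use that a scalar equals its own transpose, then take $\boldsymbol{H}=\tfrac12(\boldsymbol{K}+\boldsymbol{K}^\top)$ and $c=\rho$. Your added remark on absorbing linear and constant terms through the homogeneous entry goes beyond what the lemma needs. It is sound where $\boldsymbol{w}_{(133)}=1$, and holds up to the global sign of $\boldsymbol{w}$ under the constraint $\boldsymbol{w}^\top\boldsymbol{H}_{h+1}\boldsymbol{w}=1$, which is consistent with the paper's own remark on that sign ambiguity.
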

\begin{proof}
Substituting $\boldsymbol{m}_1 = \boldsymbol{M}_1\boldsymbol{w}$ and $\boldsymbol{m}_2 = \boldsymbol{M}_2\boldsymbol{w}$ yields
\begin{equation}
\boldsymbol{m}_1^\top\boldsymbol{Q}\boldsymbol{m}_2
=(\boldsymbol{M}_1\boldsymbol{w})^\top \boldsymbol{Q}(\boldsymbol{M}_2\boldsymbol{w})
=\boldsymbol{w}^\top \boldsymbol{M}_1^\top \boldsymbol{Q}\boldsymbol{M}_2\boldsymbol{w}.
\end{equation}
Let $\boldsymbol{K}\doteq \boldsymbol{M}_1^\top \boldsymbol{Q}\boldsymbol{M}_2$. Since $\boldsymbol{w}^\top \boldsymbol{K}\boldsymbol{w}$ is a scalar,
\begin{equation}
\boldsymbol{w}^\top \boldsymbol{K}\boldsymbol{w}
=\big(\boldsymbol{w}^\top \boldsymbol{K}\boldsymbol{w}\big)^\top
=\boldsymbol{w}^\top \boldsymbol{K}^\top \boldsymbol{w}.
\end{equation}
Therefore,
\begin{equation}
\boldsymbol{w}^\top \boldsymbol{K}\boldsymbol{w}
=\boldsymbol{w}^\top \tfrac{1}{2}\big(\boldsymbol{K}+\boldsymbol{K}^\top\big)\boldsymbol{w}.
\end{equation}
Defining $\boldsymbol{H}\doteq \tfrac{1}{2}\big(\boldsymbol{K}+\boldsymbol{K}^\top\big)$, we obtain
$\boldsymbol{H}=\boldsymbol{H}^\top$ and $\boldsymbol{m}_1^\top\boldsymbol{Q}\boldsymbol{m}_2=\boldsymbol{w}^\top \boldsymbol{H}\boldsymbol{w}$,
which proves the claim.
\end{proof}

Now, we are prepared to prove the proposition:

\noindent \textbf{Constraints for $\mathrm{vec}(\boldsymbol{R}_x)$ and $\mathrm{vec}(\boldsymbol{R}_y)$}.
Let $\boldsymbol{R} \in \mathbb{R}^{3\times3}$ denote either $\boldsymbol{R}_x$ or $\boldsymbol{R}_y$, and write $\boldsymbol{R} = \begin{bmatrix}
    \boldsymbol{r}_1, \boldsymbol{r}_2, \boldsymbol{r}_3
\end{bmatrix}$ with $\boldsymbol{r}_l \in \mathbb{R}^3$ being its $l-$th column. Under the column-stacking $\mathrm{vec}(\cdot)$ convention, we define column-extraction matrices $\boldsymbol{E}_l \in \mathbb{R}^{3\times9}$ such that
\begin{align}
    \boldsymbol{r}_l = \boldsymbol{C}_l \mathrm{vec}(\boldsymbol{R}),\quad l=1,2,3.
\end{align}
Then, for $\boldsymbol{R}_x$ and $\boldsymbol{R}_y$, we have 
\begin{align}
    \boldsymbol{r}_{x,l} = \boldsymbol{E}_l\boldsymbol{S}_x\boldsymbol{w},\quad
\boldsymbol{r}_{y,l} = \boldsymbol{E}_l\boldsymbol{S}_y\boldsymbol{w}.
\end{align}
Next, we consider the orthogonal constraints and right-handedness constraints in SO(3) separately.
The orthonormal constraints $\boldsymbol{R}^\top\boldsymbol{R} = \mathbf{I}_3$ are equivalent to
\begin{align}
\boldsymbol{r}_l^\top \boldsymbol{r}_l &= 1,\quad l=1,2,3, \label{eq:Rx_unit} \\
\boldsymbol{r}_l^\top \boldsymbol{r}_k &= 0,\quad 1\le l < k\le 3. \label{eq:Rx_orth}
\end{align}
Given Lemma~\ref{lemma:qua_bilinear}, since each $\boldsymbol{r}_l$ is linear in $\boldsymbol{w}$, each equality in \eqref{eq:Rx_unit}-\eqref{eq:Rx_orth} is a quadratic constraint on $\boldsymbol{w}$. For example, for $\boldsymbol{R}_x$, we have
\begin{align}
    \boldsymbol{r}_{x,l}^\top \boldsymbol{r}_{x,l}
&= (\boldsymbol{E}_l\boldsymbol{S}_x\boldsymbol{w})^\top (\boldsymbol{E}_l\boldsymbol{S}_x\boldsymbol{w}) \notag \\
&= \boldsymbol{w}^\top\big(\boldsymbol{S}_x^\top \boldsymbol{E}_l^\top\boldsymbol{E}_l \boldsymbol{S}_x\big)\boldsymbol{w} = 1, \ l = 1, 2, 3, \\
\boldsymbol{r}_{x,l}^\top \boldsymbol{r}_{x,k}
&= (\boldsymbol{E}_l\boldsymbol{S}_x\boldsymbol{w})^\top (\boldsymbol{E}_k\boldsymbol{S}_x\boldsymbol{w}) \notag \\
&= \boldsymbol{w}^\top\big(\boldsymbol{S}_x^\top \boldsymbol{E}_l^\top\boldsymbol{E}_k \boldsymbol{S}_x\big)\boldsymbol{w} = 0, \ 1 \leq l< k \leq 3,
\end{align}
where the involved matrices can be symmetrized if desired (cf. Lemma~\ref{lemma:qua_bilinear}) to explicitly satisfy the symmetry. The same construction applies to $\boldsymbol{R}_y$ by replacing $\boldsymbol{S}_x$ with $\boldsymbol{S}_y$. 

We further consider the right-handedness constraint
\begin{align}\label{eq:RH_rule}
\boldsymbol{r}_1 \times \boldsymbol{r}_2 = \boldsymbol{r}_3,
\end{align}
which is equivalent to three scalar bilinear equalities. Using the skew operator $_\times$, we can rewrite \eqref{eq:RH_rule} to
\begin{align}
\boldsymbol{r}_{1\times} \boldsymbol{r}_2 - \boldsymbol{r}_3 = \boldsymbol{0}_3.
\end{align}
Each component of $\boldsymbol{r}_{1\times} \boldsymbol{r}_2$ is bilinear in $\boldsymbol{r}_1$ and $\boldsymbol{r}_2$, and all $\boldsymbol{r}_{\cdot}$s are linear in $\boldsymbol{w}$; accordingly to Lemma~\ref{lemma:qua_bilinear}, every scalar equality of the form $\boldsymbol{m}_1^\top\boldsymbol{Q}\boldsymbol{m}_2$ can be rewritten as $\boldsymbol{w}^\top\boldsymbol{H}\boldsymbol{w} = \rho$.
In addition, the $\mathrm{det}(\boldsymbol{R}) = 1$ constraint is a polynomial of degree 3; we follow~\cite{Wise-ArXiv2025} to enforce this constraint using the simpler set constrained by \eqref{eq:RH_rule}. 
Consequently, the SO(3) constraints on $\mathrm{vec}(\boldsymbol{R}_x)$ and $\mathrm{vec}(\boldsymbol{R}_y)$ can be expressed by a finite set of quadratic equalities in $\boldsymbol{w}$.

\noindent \textbf{Constraints for $\mathrm{vec}(\boldsymbol{R}_z^\top\otimes\boldsymbol{R}_y)$}.
For notation simplicity, we define
\begin{align}
    \boldsymbol{K}_{zy} \doteq \boldsymbol{R}_z^\top\otimes \boldsymbol{R}_y \in \mathbb{R}^{9\times 9}, \
\boldsymbol{k}_{zy} \doteq \mathrm{vec}(\boldsymbol{K}_{zy}) = \boldsymbol{S}_{zy}\boldsymbol{w}.
\end{align}
We enforce two types of constraints: (i) orthogonality of $\boldsymbol{K}_{zy}$ itself, and (ii) the local orthogonality introduced by the Kronecker factorization.

Regarding the orthogonality of $\boldsymbol{K}_{zy}$, $\boldsymbol{R}_z, \boldsymbol{R}_y \in $ SO(3) implies $\boldsymbol{K}_{zy} \in$ SO(9) $\subset$ O(9), that is
\begin{align}\label{eq:K_orth}
    \boldsymbol{K}_{zy}^\top \boldsymbol{K}_{zy} = \boldsymbol{I}_9.
\end{align}
Equivalently, each entry of $\boldsymbol{K}_{zy}^\top \boldsymbol{K}_{zy}$ matches the corresponding entry of $\boldsymbol{I}_9$. 
Using the canonical basis vectors $\{\boldsymbol{e}_p\}_{p=1}^{9}$, the $(p,q)$-th entry constraint can be written as
\begin{equation}\label{eq:K_orth_entry_app}
    \boldsymbol{e}_p^\top \boldsymbol{K}_{zy}^\top \boldsymbol{K}_{zy}\boldsymbol{e}_q
    = \boldsymbol{e}_p^\top \boldsymbol{I}_9 \boldsymbol{e}_q
    = \delta_{pq},\ \forall\, p,q\in\{1,\dots,9\}.
\end{equation}
Note that $\boldsymbol{K}_{zy}\boldsymbol{e}_p$ is exactly the $p$-th column of $\boldsymbol{K}_{zy}$, denoted by $\boldsymbol{k}_{zy}^p\in\mathbb{R}^9$.
Thus \eqref{eq:K_orth_entry_app} is equivalent to the column orthogonality constraints
\begin{equation}
    \boldsymbol{k}_{zy}^{p\top} \boldsymbol{k}_{zy}^{q\top} = \delta_{pq},\ \forall\, p,q\in\{1,\dots,9\}.
\end{equation}
which is a quadratic equality in the entries of $\boldsymbol{K}$.
Since $\boldsymbol{k}_{zy}=\boldsymbol{S}_{zy}\boldsymbol{w}$ is linear in $\boldsymbol{w}$, each equality above can be rewritten as some
quadratic equality in $\boldsymbol{w}$ by Lemma~\ref{lemma:qua_bilinear}.

The orthogonality constraint alone only enforces $\boldsymbol{K}_{zy}\in O(9)$, but does not guarantee that
$\boldsymbol{K}_{zy}$ admits the specific Kronecker structure $\boldsymbol{R}_z^\top\otimes \boldsymbol{R}_y$.
We therefore consider the local orthogonality introduced by the Kronecker factorization.

Partition $\boldsymbol{K}_{zy}\in\mathbb{R}^{9\times 9}$ into $3\times 3$ blocks:
\begin{equation}
\boldsymbol{K}_{zy}=
\begin{bmatrix}
\boldsymbol{K}_{11} & \boldsymbol{K}_{12} & \boldsymbol{K}_{13}\\
\boldsymbol{K}_{21} & \boldsymbol{K}_{22} & \boldsymbol{K}_{23}\\
\boldsymbol{K}_{31} & \boldsymbol{K}_{32} & \boldsymbol{K}_{33}
\end{bmatrix},\ \boldsymbol{K}_{pq}\in\mathbb{R}^{3\times 3},\ p,q\in\{1,2,3\}.
\end{equation}
If $\boldsymbol{K}_{zy}=\boldsymbol{R}_z^\top\!\otimes \boldsymbol{R}_y$, then each block must satisfy
\begin{equation}\label{eq:block_relation_app}
\boldsymbol{K}_{pq} = (\boldsymbol{R}_z^\top)_{pq}\,\boldsymbol{R}_y
\quad\Longleftrightarrow\quad
\boldsymbol{M}_{pq}\doteq \boldsymbol{R}_y^\top \boldsymbol{K}_{pq} = (\boldsymbol{R}_z^\top)_{pq}\,\boldsymbol{I}_3.
\end{equation}
Hence, for every $(p,q)$, the matrix $\boldsymbol{M}_{pq}$ must be a scalar multiple of $\boldsymbol{I}_3$.
This property can be enforced by a finite set of scalar equalities requiring (i) all off-diagonal entries to be zero and
(ii) all diagonal entries to be identical:
\begin{align}
(\boldsymbol{M}_{pq})_{ab} &= 0,\ \forall a\neq b,\ \forall p,q\in\{1,2,3\}, \label{eq:M_offdiag_app}\\
(\boldsymbol{M}_{pq})_{11}-(\boldsymbol{M}_{pq})_{22} &= 0,\
(\boldsymbol{M}_{pq})_{22}-(\boldsymbol{M}_{pq})_{33}=0, \notag \\ & \quad \quad  \quad \quad \quad \quad \forall p,q\in\{1,2,3\}. \label{eq:M_diag_app}
\end{align}
Each scalar term in \eqref{eq:M_offdiag_app}--\eqref{eq:M_diag_app} is bilinear in $(\boldsymbol{R}_y,\boldsymbol{K}_{pq})$.
Since $\mathrm{vec}(\boldsymbol{R}_y)=\boldsymbol{S}_y\boldsymbol{w}$ and $\mathrm{vec}(\boldsymbol{K})=\boldsymbol{S}_{zy}\boldsymbol{w}$ are both linear in $\boldsymbol{w}$,
Lemma~\ref{lemma:qua_bilinear} implies that every equality in \eqref{eq:M_offdiag_app}--\eqref{eq:M_diag_app} admits an equivalent quadratic form
$\boldsymbol{w}^\top \boldsymbol{H}\boldsymbol{w}=\rho$ with $\boldsymbol{H}=\boldsymbol{H}^\top$.

\noindent \textbf{Constraints for $\mathrm{vec}(\boldsymbol{t}_z^\top\otimes\boldsymbol{R}_y)$}.
Define
\begin{align}
\boldsymbol{V}_{ty}\doteq \boldsymbol{t}_z^\top\otimes \boldsymbol{R}_y \in \mathbb{R}^{3\times 9}, \
\boldsymbol{v}_{ty}\doteq \mathrm{vec}(\boldsymbol{V}_{ty})=\boldsymbol{S}_{ty}\boldsymbol{w}.
\end{align}
By construction, $\boldsymbol{V}_{ty}$ consists of three $3\times 3$ block-columns:
\begin{equation}
    \boldsymbol{V}_{ty}=\big[\boldsymbol{V}_1\ \boldsymbol{V}_2\ \boldsymbol{V}_3\big],\ \boldsymbol{V}_j = \boldsymbol{t}_{z(j)}\boldsymbol{R}_y,\ \ j\in\{1,2,3\},
\end{equation}
where $\boldsymbol{t}_{z(j)}$ denotes the $j$-th entry of $\boldsymbol{t}_z$.
Left-multiplying by $\boldsymbol{R}_y^\top$ gives the equivalent characterization
\begin{equation}\label{eq:V_consis_app}
    \boldsymbol{N}_j \doteq \boldsymbol{R}_y^\top \boldsymbol{V}_j = \boldsymbol{t}_{z(j)}\boldsymbol{I}_3,\ j=1,2,3,
\end{equation}
where each $\boldsymbol{N}_j$ must be a scalar multiple of the identity.
This constraint can be enforced without introducing $\boldsymbol{t}_z$ explicitly: it suffices to require that
(i) all off-diagonal entries of $\boldsymbol{N}_j$ vanish and (ii) all diagonal entries of $\boldsymbol{N}_j$ coincide:
\begin{align}
    (\boldsymbol{N}_j)_{ab} &= 0,\  \forall a\neq b,\ j=1,2,3, \label{eq:N_offdiag_app}\\
    (\boldsymbol{N}_j)_{11}-(\boldsymbol{N}_j)_{22} &= 0,\
    (\boldsymbol{N}_j)_{22}-(\boldsymbol{N}_j)_{33}=0,\notag \\
    & \quad \quad \quad \quad \quad \quad \quad \ \ j=1,2,3. \label{eq:N_diag_app}
\end{align}
Each scalar equality in \eqref{eq:N_offdiag_app}--\eqref{eq:N_diag_app} is bilinear in $(\boldsymbol{R}_y,\boldsymbol{V}_j)$.
Since $\mathrm{vec}(\boldsymbol{R}_y)=\boldsymbol{S}_y\boldsymbol{w}$ and $\mathrm{vec}(\boldsymbol{V}_{ty})=\boldsymbol{S}_{ty}\boldsymbol{w}$ are linear in $\boldsymbol{w}$,
Lemma~\ref{lemma:qua_bilinear} implies that every equality in \eqref{eq:N_offdiag_app}--\eqref{eq:N_diag_app} admits an equivalent symmetric quadratic form
$\boldsymbol{w}^\top\boldsymbol{H}\boldsymbol{w}=\rho$.

Above all, we have shown that the SO(3) constraints induced by $\boldsymbol{R}_x,\boldsymbol{R}_y,\boldsymbol{R}_z$ in Problem~\eqref{eq:coord_ori} can be encoded by finitely many quadratic equalities in $\boldsymbol{w}$. The proof ends.

\subsection{Proof of Theorem~\ref{theorem:tightness}: Tightness Guarantee of Relaxation}

Let the optimal values of the original QCQP problem in~\eqref{eq:qcqp} and its SDP relaxation in~\eqref{eq:sdp}
be denoted by $p^\star_{\mathrm{QCQP}}$ and $p^\star_{\mathrm{SDP}}$, respectively. Our proof consists of three steps:

\noindent \textbf{Step 1}: \textit{SDP is a relaxation and yields a lower bound.}
Given any feasible $\boldsymbol{w}$ of problem~\eqref{eq:qcqp}, define
$\boldsymbol{W} \doteq \boldsymbol{w}\boldsymbol{w}^\top$.
Then $\boldsymbol{W}\succeq 0$ and for every constraint index $j=1,\dots,h+1$,
\[
\mathrm{tr}(\boldsymbol{H}_j\boldsymbol{W})
=\mathrm{tr}(\boldsymbol{H}_j\boldsymbol{w}\boldsymbol{w}^\top)
=\boldsymbol{w}^\top\boldsymbol{H}_j\boldsymbol{w}
=\rho_j,
\]
hence $\boldsymbol{W}$ is feasible for the SDP problem in~\eqref{eq:sdp}.
Moreover, its objective satisfies
\[
\mathrm{tr}(\boldsymbol{Q}\boldsymbol{W})
=\mathrm{tr}(\boldsymbol{Q}\boldsymbol{w}\boldsymbol{w}^\top)
=\boldsymbol{w}^\top\boldsymbol{Q}\boldsymbol{w}.
\]
Therefore, every QCQP-feasible $\boldsymbol{w}$ induces an SDP-feasible $\boldsymbol{W}$ with the same cost,
hence the SDP feasible set contains the lifted QCQP feasible set and
\[
p^\star_{\mathrm{SDP}} \le p^\star_{\mathrm{QCQP}}.
\]

\noindent \textbf{Step 2}: \textit{Rank-one optimal SDP solution implies tightness.}
Assume now that the SDP admits an optimal solution $\boldsymbol{W}^\star$
with $\mathrm{rank}(\boldsymbol{W}^\star)=1$.
Since $\boldsymbol{W}^\star \succeq 0$ and is rank-one, there exists a vector
$\boldsymbol{w}^\star$ such that
\[
\boldsymbol{W}^\star = \boldsymbol{w}^\star \boldsymbol{w}^{\star\top}.
\]
By feasibility of $\boldsymbol{W}^\star$ for~\eqref{eq:sdp}, for all $j=1,\dots,h+1$,
\[
\rho_j = \mathrm{tr}(\boldsymbol{H}_j\boldsymbol{W}^\star)
= \mathrm{tr}(\boldsymbol{H}_j\boldsymbol{w}^\star\boldsymbol{w}^{\star\top})
= \boldsymbol{w}^{\star\top}\boldsymbol{H}_j\boldsymbol{w}^\star,
\]
so $\boldsymbol{w}^\star$ satisfies all quadratic equalities in~\eqref{eq:qcqp}
(including the homogenization constraint for $j=h+1$).
Hence $\boldsymbol{w}^\star$ is feasible for the QCQP~\eqref{eq:qcqp}.
Furthermore,
\[
\boldsymbol{w}^{\star\top}\boldsymbol{Q}\boldsymbol{w}^\star
=\mathrm{tr}(\boldsymbol{Q}\boldsymbol{w}^\star\boldsymbol{w}^{\star\top})
=\mathrm{tr}(\boldsymbol{Q}\boldsymbol{W}^\star)
= p^\star_{\mathrm{SDP}}.
\]
Since $\boldsymbol{w}^\star$ is QCQP-feasible, we have
\begin{align}
    p^\star_{\mathrm{QCQP}} \le \boldsymbol{w}^{\star\top}\boldsymbol{Q}\boldsymbol{w}^\star
= p^\star_{\mathrm{SDP}}
\end{align}

\noindent \textbf{Step 3}: \textit{Conclude equality of optimal values and global optimality.}
Combining Step 1 and Step 2 yields
\[
p^\star_{\mathrm{SDP}} \le p^\star_{\mathrm{QCQP}}
\le p^\star_{\mathrm{SDP}},
\]
hence $p^\star_{\mathrm{SDP}} = p^\star_{\mathrm{QCQP}}$.
Accordingly, the SDP relaxation is tight, and the $\boldsymbol{w}^\star$ decomposed from the rank-1 $\boldsymbol{W}^\star$ is a globally
optimal solution to the original QCQP problem in~\eqref{eq:qcqp}. The proof ends.

\end{document}